\newtheorem{remark}{Remark}
\newtheorem{result}{Main Result}
\newtheorem{assumption}{Assumption}
\newtheorem{theorem}{Theorem}
\newtheorem{definition}{Definition}
\newtheorem{lemma}{Lemma}
\newcommand\IND{\mathds{1}}
\newcommand\PR{\mathds{P}}
\newcommand\EXP{\mathds{E}}
\newcommand\reals{\mathbb{R}}
\newcommand{\bs}{\boldsymbol}
\newcommand*\REGRET{\mathcal{R}}
\newcommand*\tildeO{\tilde{\mathcal{O}}}
\newcommand*\OO{\mathcal{O}}
\newcommand*\Sqrt[1]{\sqrt{\smash{#1}\vphantom{\bar S}}}
\DeclareMathOperator\SPAN{span}
\DeclareMathOperator\DIAM{diam}
\DeclareMathOperator*{\argmin}{arg\,min}
\newcommand*\HIDE[1]{}
\newcommand*\PP{p}
\begin{document}
	\title{On learning Whittle index policy for restless bandits with scalable regret}
	
	\author{Nima Akbarzadeh, Aditya Mahajan
		\thanks{This research was funded in part by the Innovation for Defence Excellence and Security (IDEaS) Program of the Canadian Department of National Defence through grant CFPMN2-037, and Fonds de Recherche du Quebec-Nature et technologies (FRQNT).}
		\thanks{Nima Akbarzadeh and Aditya Mahajan are with Department of Electrical and Computer Engineering, McGill University, Montreal. (\href{mailto:nima.akbarzadeh@mail.mcgill.ca}{nima.akbarzadeh@mail.mcgill.ca}, \href{mailto:aditya.mahajan@mcgill.ca}{aditya.mahajan@mcgill.ca})}}
	
	\maketitle
	
	\begin{abstract}
		Reinforcement learning is an attractive approach to learn good resource allocation and scheduling policies based on data when the system model is unknown. However, the cumulative regret of most RL algorithms scales as $\tilde O(\mathsf{S} \sqrt{\mathsf{A} T})$, where $\mathsf{S}$ is the size of the state space, $\mathsf{A}$ is the size of the action space, $T$ is the horizon, and the $\tilde{O}(\cdot)$ notation hides logarithmic terms. Due to the linear dependence on the size of the state space, these regret bounds are prohibitively large for resource allocation and scheduling problems. In this paper, we present a model-based RL algorithm for such problems which has scalable regret. In particular, we consider a restless bandit model, and propose a Thompson-sampling based learning algorithm which is tuned to the underlying structure of the model. We present two characterizations of the regret of the proposed algorithm with respect to the Whittle index policy. First, we show that for a restless bandit with $n$ arms and at most $m$ activations at each time, the regret scales either as $\tilde{O}(mn\sqrt{T})$ or $\tilde{O}(n^2 \sqrt{T})$ depending on the reward model. Second, under an additional technical assumption, we show that the regret scales as $\tilde{O}(n^{1.5} \sqrt{T})$ or $\tilde{O}(\max\{m\sqrt{n}, n\} \sqrt{T})$. We present numerical examples to illustrate the salient features of the algorithm.
	\end{abstract}

	
	\section{Introduction}
	Resource allocation and scheduling problems arise in control of networked
	systems. Examples include 
	opportunistic scheduling in networks~\cite{ouyang2015downlink,borkar2017opportunistic,wang2019opportunistic},
	link scheduling in machine type communication~\cite{ali2018sleeping},
	user allocation in mmWave networks~\cite{singh2022user},
	channel allocation in networks~\cite{lott2000optimality},
	source selection in peer-to-peer networks~\cite{si2009distributed},
	opportunistic spectrum
	access~\cite{liu2010indexability,nino2009restless,tekin2011online},
	demand response in smart grids~\cite{wang2014adaptive,abad2016near},
	dynamic routing in multi-UAVs~\cite{le2008multi},
	operator allocation in multi-robot systems~\cite{dahiya2022scalable}, etc.
	
	Due to the curse of dimensionality, finding an optimal solution in such
	resource allocation and scheduling problems is computationally
	prohibitive~\cite{papadimitriou1999complexity}. Restless bandits
	(RBs)~\cite{whittle1988restless} have emerged as a popular solution heuristic
	for such problems. The RB framework is motivated by the \emph{rested} multi-armed bandit problem considered in the seminar work of Gittins~\cite{gittins1979bandit}, who showed that the optimal strategy for the rested multi-armed bandit problem is of
	the \emph{index type}: one can compute an index for each state of each alternative (also called an arm), and choose the alternative with the highest index. In general, such index-type policies are not optimal for RBs. In fact, computing the optimal policy for RBs is PSPACE hard~\cite{papadimitriou1999complexity}. However, as argued in~\cite{whittle1988restless}, an index-type policy (now known as the Whittle index) can be a useful heuristic if a technical condition known as indexability is satisfied. The Whittle index policy is optimal for some specific models~\cite{gittins1979bandit, weber1990index, lott2000optimality}.
	There is also a strong empirical evidence to suggest that the Whittle index policy performs close to optimal in various settings~\cite{glazebrook2002index,glazebrook2005index,glazebrook2006some,ninomora2007,ayesta2010modeling,akbarzadeh2022conditions}. For these reasons, the RB framework has been applied in a variety of resource allocation and scheduling applications referenced above.
	
	In all the above references, it is assumed that the system model is known
	perfectly. In many real-world applications, there is often uncertainty about
	the system model. In such situations, RL (reinforcement learning) is an attractive
	alternative. In recent years, there are many papers which investigate
	RL for RBs~\cite{meshram2017restless,borkar2018learning,Fu2019Qlearning,avrachenkov2020whittle,robledo2022qwi}. Most of these learn the Q-function associated with the average reward/cost optimality equation parameterized by the activation cost~$\lambda$ and use it to asymptotically learn the Whittle index.
	
	A common measure of performance of an RL algorithm is \emph{regret}, which measures the difference in performance of the learning algorithm that doesn't a priori know the system model with the performance of a baseline policy that knows the model. However, the regret is not characterized in the existing literature on RL for RBs~\cite{meshram2017restless,borkar2018learning,Fu2019Qlearning,avrachenkov2020whittle,robledo2022qwi}.
	
	There are some results on characterizing regret 
	for some specific instances of RBs: a model of multi-class queues
	arising in mobile edge computing~\cite{xiong2021learning} and a model for
	scheduling when to observe uncontrolled Markov chains arising in opportunistic
	spectrum access in cognitive radios~\cite{tekin2012online,ortner2012regret,liu2013learning,jung2019thompson,jung2019episodic,gafni2020learning}. However, the regret analyses in these papers exploit specific features of the model and are not applicable to general models. The main contribution of this paper is to characterize the regret of a general RL algorithm for general RBs.

	It is not possible to directly use existing RL algorithms that achieve near
	optimal regret in RBs. To explain why this is the case, we provide a short
	overview of characterizing regret in RL. Consider a general MDP (Markov
	decision process) with finite state space of size $\mathsf{S}$ and finite
	action space of size $\mathsf{A}$. It is shown in~\cite{jaksch2010near} that
	no learning algorithm can achieve a regret of less than
	$\tilde{\Omega}(\sqrt{\mathsf{S}\mathsf{A}D T})$, where $D$ is the diameter of
	the underlying MDP and $T$ is the time horizon for which the system runs.
	Several classes of algorithms have been proposed in the literature which
	achieve this lower bound up to a factor of $\sqrt{\mathsf{S}}$ and logarithmic
	terms. Broadly speaking, these regret optimal RL algorithms fall into two
	classes: optimism-under-uncertainty (OUU) and Thompson sampling (TS). Two
	types of regret bounds are provided: frequentist regret, which is a bound on
	the worst case regret with high probability and Bayesian regret, which is a
	bound on the average regret (with respect to a pre-specified prior). A summary
	of the regret bounds for various algorithms is shown in
	Table~\ref{tab:comparison}.
	
	\begin{table}[!t]
		\centering
		\begin{threeparttable}
			\caption{A comparison of the regret bounds of various algorithms}
			\label{tab:comparison}
			\begin{tabular}{@{}cccc@{}}
				\toprule
				Algorithm & Algorithm Type & Regret\tnote{a} & Regret Type \\
				\midrule
				UCRL2~\cite{jaksch2010near} & OUU & 
				$\tildeO(D\mathsf{S}\sqrt{\mathsf{A}T})$ & Frequentist \\
				REGAL~\cite{bartlett2009regal} & OUU & 
				$\tildeO(H\mathsf{S}\sqrt{\mathsf{A}T})$ & Frequentist \\
				SCAL~\cite{fruit2018efficient} & OUU &
				$\tildeO(H\sqrt{\Gamma \mathsf{S} \mathsf{A} T})$ & Frequentist \\
				\cite{agrawal2017posterior} & TS &
				$\tildeO(D\sqrt{\mathsf{S}\mathsf{A}T})$\tnote{b} & Frequentist \\
				\cite{zhang2019regret} & TS & 
				$\tildeO(\sqrt{H\mathsf{S}\mathsf{A}T})$ & Frequentist \\
				TSDE~\cite{ouyang2017learning} & TS &
				$\tildeO(H\mathsf{S}\sqrt{\mathsf{A}T})$ & Bayesian  \\
				\bottomrule
			\end{tabular}
			\begin{tablenotes}\footnotesize
				\item[a]
				In the column on regret bounds, $\Gamma$ is the maximum number of states
				that can be reached from any state, $D$ is the diameter of the MDP, $H$ is
				the span of the bias of the MDP. These are related as $\Gamma \le S$ and $H
				\le D$ (established in~\cite{bartlett2009regal}). 
				\item[b]
				It is pointed out in~\cite{zhang2019regret} that there is a mistake in
				the proof in \cite{agrawal2017posterior} and it is suggested that the
				bound of~\cite{agrawal2017posterior} may be loose by a factor of
				$\sqrt{\mathsf{S}}$. 
			\end{tablenotes}
		\end{threeparttable}
	\end{table}
	
	Each of these state-of-the-art algorithms has a regret that scales
	approximately as $\tildeO(\mathsf{S}\sqrt{\mathsf{A}T})$, which is prohibitively
	large when translated to the RB setting for reasons explained below. Consider
	a RB with $n$ arms where at most $m$ arms can be activated at a time. Let
	$\mathsf{S}_i$ denote the size of the state space of arm~$i \in \{1,\dots,
	n\}$. Such an RB can be modeled as an MDP where the size of the state space is
	$\prod_{i=1}^n \mathsf{S}_i$ and the size of the action space is
	$\binom{n}{m}$. Thus, the regret of using any of the algorithms described in
	Table~\ref{tab:comparison} on RBs will be
	\(
	\tildeO\Bigl( \prod_{i=1}^n \mathsf{S}_i \sqrt{ \binom{n}{m} T } \Bigr),
	\)
	which grows exponentially with the number $n$ of arms. In this paper, we provide a more nuanced characterization of the scaling of the regret with the number of alternatives.
	
	In particular, we propose a Thompson-sampling based learning algorithm for RB, which we call as \texttt{RB-TSDE}. This algorithm is inspired from the TSDE (Thompson sampling with dynamic episodes) algorithm~\cite{ouyang2017learning}. We show that for a RB with $n$ arms where $m$ of them can be chosen at a time, \texttt{RB-TSDE} has a Bayesian regret (with respect to the Whittle index policy with known dynamics) of $\tilde{\mathcal{O}}(n^2\sqrt{T})$ or $\tilde{\mathcal{O}}(nm\sqrt{T})$ depending on the assumptions on the per-step reward. Under an additional technical assumption, we obtain an alternative regret bound of $\tilde{\mathcal{O}}(n^{1.5}\sqrt{T})$ or $\tilde{O}(\max\{m\sqrt{n}, n\} \sqrt{T})$.
	
	The rest of the paper is organized as follows. In Sec.~\ref{sec:problem}, we formulate the learning problem for RB when the state transition probabilities of all arms are unknown and present the main results. In Sec.~\ref{sec:unknown}, we present the Thompson sampling with dynamic episodes for RB and provide an upper bound on the regret. In Sec.~\ref{sec:proofsketch}, we provide the proof outline and defer the details to the Appendix. In Sec.~\ref{sec:numerical}, we demonstrate a numerical example of the regret of our algorithm. In Sec.~\ref{sec:discussion}, we discuss relaxation and sufficient conditions of some of the assumptions, in addition to comparison with the optimal policy. Finally, we conclude in Sec.~\ref{sec:conclusion}.
	
	\paragraph*{Notation.}
	We use upper case variables $S$, $A$, etc., to denote random variables, the corresponding lower variables ($s$, $a$, etc.) to denote their realizations, and corresponding calligraphic letters ($\mathcal{S}$, $\mathcal{A}$, etc.) to denote set of realizations. Subscripts denote time and superscript denotes arm. Thus $S^i_t$ is the state of arm~$i$ at time~$t$. Bold letters denote collection of variables across all arms. Thus, $\boldsymbol{S}_t = (S^1_t, \dots, S^n_t)$ is the set of states of all arms at time~$t$. $S_{0:t}$ is a shorthand for $(S_0, \dots, S_t)$. We use $\EXP[\cdot]$ to denote expectation of a random variable, $\PR(\cdot)$ to denote probability of an event and $\IND(\cdot)$ to denote the indicator of an event. Let $[n] := \{1, \ldots, n\}$.
	
	For a given function $f \colon {\cal X} \to \reals$, the span-norm of $f$ is defined as $\SPAN(f) = \max_{x \in {\cal X}}f(x) - \min_{x \in {\cal X}}f(x)$. Given two metric spaces $(\mathcal{X}, d_X)$ and $(\mathcal{Y}, d_Y)$, the Lipschitz constant of function $f: \mathcal{X} \to \mathcal{Y}$ is defined by
	\[ L_f = \sup_{\substack{x_1, x_2 \in \mathcal{X} \\ x_1 \neq x_2}} \dfrac{d_Y(f(x_1), f(x_2))}{d_X(x_1, x_2)}. \]
	Let $\zeta_1$ and $\zeta_2$ denote probability measures on $(\mathcal{X}, d_X)$. Then, the Kantorovich distance between them is defined as
	\[ \mathcal{K}(\zeta_1, \zeta_2) = \sup_{f: L_f \leq 1} \bigg| \sum_{x \in \mathcal{X}} f(x) \zeta_1(x) - \sum_{x \in \mathcal{X}} f(x) \zeta_2(x) \bigg|. \]
	Given a metric space $(\mathcal{X}, d_X)$, $\DIAM(\mathcal{X}) = \sup\{d_X(x_1, x_2): x_1, x_2 \in \mathcal{X}\}$ denotes the diameter of the set~$\mathcal{X}$.
	
	\section{Model, Problem Formulation and Results} \label{sec:problem}
	\subsection{Restless Bandits}
	Restless bandits (RB) are a class of resource allocation problems where, at each time instant, a decision maker has to select $m$ out of $n$ available alternatives. Each alternative, which is also called an arm, is a controlled Markov process~$\langle\mathcal{S}^i, \mathcal{A}^i = \{0, 1\}, P^i, r^i\rangle$ where $\mathcal{S}^i$ is the state space, $\mathcal{A}^i = \{0, 1\}$ is the action space, $P^i: \mathcal{S}^i \times \mathcal{A}^i \to \Delta(\mathcal{S}^i)$ is the controlled transition matrix, and $r^i: \mathcal{S}^i \times \mathcal{A}^i \to [0, R_{\max}]$ is the per-step reward. The action $A^i_t = 1$ means the decision maker selects arm~$i$ at time~$t$. The arms for which $A^i_t = 1$ are called \textit{active} arms and the arms for which $A^i_t = 0$ are called \textit{passive} arms.
	
	Let ${\bs{\mathcal{S}}} = {\bs{\mathcal{S}}^1} \times \dots \times {\bs{\mathcal{S}}}^n$
	denote the joint state space and $ {\bs{\mathcal{A}}}(m) = \bigl\{ {\bs a}
	\in \{0, 1\}^{n} : \lVert \bs{a} \rVert_1 = m \bigr\} $ denote the feasible action space, where $\lVert \bs{a} \rVert_1 \coloneqq \sum_{i \in [n]} \lvert a^i \rvert = \sum_{i \in [n]} a^i$. We assume that the initial state $\bs S_0 = (S^1_0, \ldots, S^n_0)$ is a random variable which is independent across arms and has a known initial distribution. Moreover, the arms evolve independently, i.e., for any ${\bs s}_{0:t} = (s^1_{0:t}, \ldots, s^n_{0:t})$ and ${\bs a}_{0:t} = (a^1_{0:t}, \ldots, a^n_{0:t})$, we have
	\begin{multline*} \label{eqn:dynamics}
		\PR \left( {\bs S}_{t+1} = {\bs s}_{t+1} | {\bs S}_{0:t} = {\bs s}_{0:t}, {\bs A}_{0:t} = {\bs a}_{0:t} \right) \\
		= \prod_{i = 1}^{n} P^i \left( s^{i}_{t+1} | s^i_{t}, a^i_t \right) := \bs{P}({\bs s}_{t+1} | {\bs s}_{t}, {\bs a}_{t}).
	\end{multline*}
	
	We consider two reward models:
	\begin{itemize}
		\item \textbf{Model A:} All arms, whether active or not, yield rewards, i.e., the aggregated per-step reward is given by 
		\(
		\bs{r}({\bs s}_t, {\bs a}_t) = \sum_{i \in [n]} r^i(s^i_t, a^i_t).
		\)
		\goodbreak	
		\item \textbf{Model B:} Only the activated arms yield rewards. The state of the passive arms evolves, but the arms do not yield reward. Thus, the aggregated per-step reward is given by 
		\(
		\bs{r}({\bs s}_t, {\bs a}_t) = \sum_{i \in [n]} r^i(s^i_t, a^i_t) \IND(\{ a^i_t = 1\}).
		\)
	\end{itemize}
	Note that Model B is same as Model A under the assumption that $r^i(\,\cdot\,, 0) = 0$ for all arms~$i \in [n]$. For that reason, for most of the paper, we will take 
	\(
	\bs{r}({\bs s}_t, {\bs a}_t) = \sum_{i \in [n]} r^i(s^i_t, a^i_t)
	\)
	and assume $r^i(\,\cdot\,,0) = 0$ when specializing for results of Model~B.
	
	\begin{remark}
		Both Models~A and~B arise in different applications. Examples of Model~A include queuing networks~\cite{borkar2017opportunistic}, where all queues incur holding cost; and machine maintenance~\cite{glazebrook2006some}, where all machines incur a cost when run in a faulty state. Examples of Model~B include cognitive radios~\cite{liu2010indexability}, where the reward depends only on the state of the selected channels.
	\end{remark}
	
	Let $\bs{\Pi}$ denote the family of all possible (potentially history dependent and randomized) policies for the decision maker (who observes the state of all arms). The performance of any policy $\bs{\pi} \in \bs{\Pi}$ is given by
	\begin{equation}
		J({\bs \pi}) \coloneqq \liminf\limits_{T \to \infty} \dfrac{1}{T} \EXP\biggl[ \sum_{t = 1}^{T} {\bs r}({\bs S}_t, {\bs A}_t) \biggr], \label{eqn:obj_func}
	\end{equation}
	where the expectation is taken with respect to the initial state distribution and the joint distribution induced on all system variables.
	
	The objective of the decision maker is to choose a policy $\bs{\pi} \in \bs{\Pi}$ to maximize the total expected reward $J(\bs{\pi})$. This objective is a MDP but computing an optimal policy using dynamic program suffers from the curse of dimensionality. For example, if $|\mathcal{S}^i| = \mathsf{S}$ for each $i \in [n]$, then $|{\bs{\mathcal{S}}}| = \mathsf{S}^n$ and $|{\bs{\mathcal{A}}}(m)| = {n \choose m}$. Then, the computational complexity of each step of value iteration is $|{\bs{\mathcal{A}}}(m)| |{\bs{\mathcal{S}}}|^2 = {n \choose m} \mathsf{S}^{2n}$, which is prohibitively large for even moderate values of $\mathsf{S}$ and $n$. For this reason, most of the RB literature focuses on a computationally tractable but sub-optimal approach known as the Whittle index policy. 
	
	\subsection{Whittle index policy}
	The Whittle index policy is motivated by the solution of a relaxation of the original optimization problem. Instead of the hard constraint of activating exactly $m$ arms at a time, consider a relaxation where $m$ arms have to be activated on average, i.e.,
	\begin{align} 
		& \max_{{\bs \pi} \in {\bs \Pi}} \liminf\limits_{T \to \infty} \dfrac{1}{T} \EXP\biggl[ \sum_{t = 1}^{T} \bs{r}({\bs S}_t, {\bs A}_t) \biggr], \notag \\
		& \text{s.t. } \limsup\limits_{T \to \infty} \dfrac{1}{T} \EXP\biggl[ \sum_{t = 1}^{T} \lVert {\bs A}_t \rVert_1 \biggr] = m. \label{eqn:opt-prob}
	\end{align}
	Note that this relaxation is simply used to obtain a
	decomposition to define Whittle indices. The Whittle index policy, which is
	stated at the end of this section, picks exactly $m$ arms at each time
	step.
	
	Following~\cite{beutler1985optimal}, the Lagrangian relaxation of~\eqref{eqn:opt-prob} is given by
	\begin{equation}
		\mathcal {L}(\lambda,\bs{\pi}) = \liminf\limits_{T \to \infty} \dfrac{1}{T} \EXP\biggl[ \sum_{t = 1}^{T} \Bigl[
		{\bs r}({\bs S}_t, {\bs A}_t) - \lambda \lVert {\bs A}_t \rVert_{1} \Bigr] \biggr].
		\label{eqn:Lagrangian}
	\end{equation}
	Moreover, a policy $\bs{\pi}^* \in \bs{\Pi}$ is optimal if there exists a Lagrange multiplier $\lambda \ge 0$ such that $\bs{\pi}^* \in \arg\max_{\bs{\pi} \in \bs{\Pi}} \mathcal{L}(\lambda, \bs{\pi})$ and the expected number of activations under $\bs{\pi}^*$ is exactly~$m$. See~\cite[Theorem 4.3]{beutler1985optimal}.
	
	Note that the optimization problem $\max_{\bs{\pi} \in \bs{\Pi}} \mathcal{L}(\lambda, \bs{\pi})$ is decoupled across arms because the per-step reward is decoupled:
	\[
	\bs{r}({\bs S}_t, {\bs A}_t) - \lambda \lVert {\bs A}_t \rVert_1 = \sum_{i \in [n]} \Bigl[ r^i(S^i_t, A^i_t) - \lambda A^i_t \Bigr].
	\]
	Therefore, for a given $\lambda$, $\max_{\bs \pi \in \bs \Pi} \mathcal{L}(\lambda, \bs \pi)$ is equivalent to the following $n$ decoupled optimization problems: for all $i \in [n]$, 
	\begin{equation} \label{eqn:opt-lag-prob}
		\max_{\pi^i: \mathcal{S}^i \to \{0, 1\}} \liminf\limits_{T \to \infty} \dfrac{1}{T} \EXP\biggl[ \sum_{t = 1}^{T} \Bigl[r^i(S^i_t, A^i_t) - \lambda A^i_t\Bigr] \biggr].
	\end{equation}
	
	Let $\pi^i_\lambda$ denote the optimal policy for Problem~\eqref{eqn:opt-lag-prob}.
	Define the \emph{passive set} $\mathcal{W}^i_\lambda$ as the set of states for which the optimal policy~$\pi^i_\lambda$ prescribes passive action, i.e., $\mathcal{W}^i_\lambda := \bigl\{ s \in \mathcal{S}^i: \pi^i_\lambda(s) = 0 \bigr\}$.
	
	\begin{definition}[Indexability and Whittle index] \label{def:idxbl}
		A RB is said to be \textbf{indexable} if $\mathcal{W}^i_\lambda$ is non-decreasing in $\lambda$, i.e., for any $\lambda_1, \lambda_2 \in \mathbb{R}$ such that $\lambda_1 \leq \lambda_2$, we have $\mathcal{W}^i_{\lambda_1} \subseteq \mathcal{W}^i_{\lambda_2}$. For an indexable RB, \textbf{the Whittle index}~$w^i(s)$ of state~$s \in \mathcal{S}^i$ is the smallest value of $\lambda$ for which state~$s$ is part of the passive set $\mathcal{W}^i_\lambda$, i.e.,
		\begin{equation*}
			w^i(s) = \inf \left\{ \lambda \in \mathbb{R}: s \in \mathcal{W}^i_\lambda \right\}.
		\end{equation*}
	\end{definition}
	
	Note that if the penalty $\lambda = w^i(s)$, then the policy $\pi^i_\lambda$ is indifferent between taking passive or active actions at state~$s$.
	
	The Whittle index policy is a feasible policy for the original optimization problem and is given as follows: \emph{At each time, activate the arms with the $m$ largest values of the Whittle index at their current state.} 
	
	As argued in \cite{whittle1988restless}, the Whittle index policy is meaningful only when all arms are indexable. Various sufficient conditions for indexability are available in the literature~\cite{glazebrook2006some,ninomora2007,akbarzadeh2022conditions}. In some settings, the Whittle index policy is optimal~\cite{gittins1979bandit,weber1990index,lott2000optimality}. For general models, there is also strong evidence to suggest that 
	the Whittle index policy performs close to optimal~\cite{glazebrook2006some,ninomora2007,akbarzadeh2022conditions,avrachenkov2013congestion,wang2019whittle}. Algorithms to efficiently compute Whittle indices are presented in \cite{nino2007dynamic,akbarzadeh2022conditions}.
	
	\subsection{The learning problem}
	Let $\bs{\mu}$ denote the Whittle index policy and ${\bs J}(\bs{\mu})$ denote its performance. We are interested in a setting where the transition matrices $\{P^i\}_{i \in [n]}$ of the arms are unknown but the decision maker has a prior on them. In this setting, the performance of a policy ${\bs \pi}$ operating for horizon~$T$ is characterized by the Bayesian regret given by
	\begin{equation}\label{eqn:regret}
		\REGRET(T; {\bs \pi}) 
		= \EXP^{\bs \pi}\biggl[ 
		T {\bs J}(\bs{\mu}) 
		- 
		\sum_{t = 1}^T \bs{r}({\bs S}_t, {\bs A}_t) 
		\biggr], 
	\end{equation}
	where the expectation is taken with respect to the prior distribution on $\{P^i\}_{i \in [n]}$, the initial condition, and the potential randomization done by the policy~${\bs \pi}$. Bayesian regret is a well-known metric used in various setting \cite{rusmevichientong2010linearly,agrawal2013thompson,russo2014learning,ouyang2017learning}. An alternative method to quantify the performance is the frequentist regret but we focus on Bayesian regret for a comparison of the two notions of the regret, we refer the reader to \cite{kaufmann2011efficiency,kaufmann2014analysis}.
	
	\begin{remark}
		We measure the regret with respect to the Whittle index policy. In contrast, in most of the existing research, regret is defined with reference to the optimal policy. In principle, the results presented in this paper are also applicable to regret defined with respect to the optimal policy provided it is possible to compute the optimal policy for a given model. See Sec.~\ref{subsec:optimal} for details.
	\end{remark}
	
	\begin{remark} \label{rem:2}
		The rested multi-armed bandit problem is a special case of Model~B where passive arms are frozen, i.e., $P^i(s_{+}|s,0) = \IND(\{s_+ = s\})$ for all arms~$i \in [n]$. For this model, the Whittle index policy reduces to what is called the \emph{Gittins index policy} and is optimal~\cite{gittins1979bandit}. Thus, the results obtained in this paper are also applicable to the rested multi-armed bandits.
	\end{remark}
	
	\subsection{The main results}
	
	Our main result is to propose a Thompson-sampling based algorithm, which we call \texttt{RB-TSDE}, and characterize its regret. In particular, let $\mathsf{S}^i = |\mathcal{S}^i|$ denote the size of the state space of arm~$i$ and $\bar {\mathsf{S}}_n = \sum_{i \in [n]} \mathsf{S}^i$ denote the sum of the sizes of the state space of all arms. Then, we show the following.
	
	\begin{result}
		The regret of {\tt RB-TSDE} is bounded by
		\[
		\REGRET(T; {\tt RB-TSDE}) \le 
		\OO\bigl(\alpha \bar{\mathsf{S}}_n \Sqrt{T \log T }\bigr),
		\]
		where $\alpha = n$ for Model~A and $\alpha = m$ for Model~B. Under
		additional assumptions, the bound for both models can be tightened to
		\[
		\REGRET(T; {\tt RB-TSDE}) \le 
		\OO\bigl(\max\{\alpha\bar{\mathsf{S}}_n, \bar{\mathsf{S}}_n\} \Sqrt{ T \log T }\bigr).
		\]
	\end{result}
	The detailed characterization of the constants in the $\OO(\cdot)$ terms is given in Theorem~\ref{thm:main} and Theorem~\ref{thm:lipschitz} later.

	
	\section{Learning Algorithm for RB} \label{sec:unknown}
	
	\subsection{Assumptions on the unknown parameters.}
	Let $\theta^i_\star$ denote the unknown parameters of the transition matrices
	$[P^i(\cdot | \cdot, 0) ~ P^i(\cdot | \cdot, 1)]$, $i \in [n]$. We assume that $\theta^i_\star$ belongs to a
	compact set~$\Theta^i$. We impose the following assumptions on the model.
	
	\begin{assumption}\label{assump:indexable}
		For any~$i \in [n]$ and $\theta^i \in \Theta^i$, the RB $\langle \mathcal{S}^i, \mathcal{A}^i = \{0, 1\}, P^i(\theta^i), r^i \rangle$ is indexable.
	\end{assumption}
	
	\begin{assumption} \label{assump:acoe}
		Let $\bs{\mu}_{\theta}$ denote the Whittle index policy corresponding to model~$\theta$. Let $\bs{P}_{\theta}$ denote the controlled transition matrix under policy $\bs{\mu}_{\theta}$ and $J_\theta$ denote the average reward of policy $\bs{\mu}_{\theta}$. We assume that for every $\theta \in \Theta$, $J_\theta$ does not depend on the initial state and also assume that there exists a bounded differential value function $\bs{V}_\theta$ such that $(J_\theta, \bs{V}_{\theta})$ satisfy the average reward Bellman equation:
		\begin{equation}\label{eqn:Bellman}
			J_\theta + \bs{V}_\theta(\bs{s}) = 
			\bs{r}(\bs{s}, \bs{\mu}_{\theta}(\bs{s})) + \bigl[ \bs{P}_\theta \bs{V}_\theta \bigr](\bs{s}),
			\quad \forall \bs{s} \in \bs{\mathcal{S}}.
		\end{equation}
	\end{assumption}
	
	Assumption~\ref{assump:indexable} is necessary for the Whittle index heuristic to be meaningful. Assumption~\ref{assump:acoe} ensures that the average reward of the Whittle index policy is well defined for all models. There are various sufficient conditions for Assumption~\ref{assump:acoe} in the literature. See~\cite{arapostathis1993} for an overview. 
	
	The average reward Bellman equation~\eqref{eqn:Bellman} has an infinite number of solutions. In particular, if $(J_\theta, \bs{V}_\theta)$ satisfies~\eqref{eqn:Bellman}, then so does $(J_\theta, \bs{V}_\theta + \text{constant})$. Assumption~\ref{assump:acoe} implies that $\SPAN(\bs{V}_\theta)$ is bounded. A bound on $\SPAN(\bs{V}_\theta)$ under a different set of assumptions is presented in \cite{bartlett2009regal} but this bound does not suffice for our analysis. See Remark~\ref{rem:6} for details. As we want to capture the scaling of $\SPAN(\bs{V}_\theta)$ with $n$ and $m$, so we impose an additional assumption on the model.
	
	The ergodicity coefficient of $\bs{P}_\theta$ is defined as
	\[
	\lambda_{\bs{P}_\theta} = 1 - \min_{\bs{s},\bs{s'}\in \bs{\mathcal{S}}}
	\sum_{\bs{z} \in \bs{\mathcal{S}}} \min\{ \bs{P}_\theta(\bs{z} | \bs{s}), 
	\bs{P}_\theta(\bs{z} | \bs{s}') \}.
	\]
	We impose
	the following assumption on the model.
	\begin{assumption} \label{assmp:3}
		We assume there exists $\lambda^* < 1$ such that $\sup_{\theta \in \Theta} \lambda_{\bs{P}_\theta} \le \lambda^*$. 
	\end{assumption}
	See~\cite[Sec.~5]{arapostathis1993} for various equivalent characterizations
	of $\lambda_{\bs{P}_\theta} < 1$. Assumption~\ref{assmp:3} is used while
	analyzing the rate of convergence of relative value
	iteration~\cite{morton1977discounting} to bound the span of the value
	function. A relaxation of Assumption~\ref{assmp:3} is presented in Sec.~\ref{sec:relax}.
	
	
	The ergodicity coefficient of $\bs{P}_\theta$ depends on the Whittle index
	policy $\bs{\mu}_\theta$. A policy independent upper bound of the ergodicity
	coefficient is given by the contraction factor, which is defined as:
	\[
	\lambda' = 1 - \min_{\substack{\bs{s},\bs{s'}\in \bs{\mathcal{S}}, \\ \bs{a} \in
			\bs{\mathcal{A}}(m), \bs{a}' \in \bs{\mathcal{A}}(m)}}
	\sum_{\bs{z} \in \bs{\mathcal{S}}} \min\{ \bs{P}(\bs{z} | \bs{s}, \bs{a}), 
	\bs{P}(\bs{z} | \bs{s}', \bs{a}') \}
	\]
	Since the dynamics of the arms are independent, the definition of contraction
	factor implies that a sufficient condition for Assumption~\ref{assmp:3} is that for every arm, and every pair of state-action pairs, there exists a next state that can be reached from both state-action pairs with positive probability in one step. Moreover, if for every arm there is a distinguished state which can be reached from all state-action pairs with probability at least $\varepsilon$, then the ergodicity coefficient is less than $1-\varepsilon$. 
	
	
	\subsection{Priors and posterior updates.}
	We assume that $\{\theta^i_\star\}_{i \in [n]}$ are independent random variables and use $\phi^i_1$ to denote the prior on $\theta^i$ for each arm~$i \in [n]$. At time~$t$, let $h^i_t = (s^i_1, a^i_1, \dots,\allowbreak s^i_{t-1}, a^i_{t-1}, s^i_t)$ denote the history of states and actions at arm~$i$. Let $\phi^i_t$ denote the posterior distribution on $\theta^i_\star$ given $h^i_t$. Then, upon applying action $a^i_t$ and observing the next state~$s^i_{t+1}$, the posterior distribution $\phi^i_{t+1}$ can be computed using Bayes rule as 
	\begin{equation} \label{eqn:phi-update}
		\phi^i_{t+1}(d \theta) = \dfrac{P^i(s^i_{t+1} | s^i_t, a^i_t; \theta) \phi^i_{t}(d \theta)}{\int P^i(s^i_{t+1} | s^i_t, a^i_t; \tilde \theta) \phi^i_{t}(d \tilde \theta)}.
	\end{equation}
	If the prior is a conjugate distribution on $\Theta^i$, then the posterior can be updated in closed form. Note that the exact form of the prior and the posterior update rule are not important for the description of the algorithm or its regret analysis.
	
	\subsection{\texttt{RB-TSDE} Algorithm (Distributed implementation)}
	We propose a Thompson-sampling based algorithm, which we call \texttt{RB-TSDE}. Our algorithm is inspired by the \texttt{TSDE} (Thompson Sampling with Dynamic Episodes) algorithm of~\cite{ouyang2017learning}.
	
	The \texttt{RB-TSDE} algorithm consists of a coordinator and $n$ actors, one
	for each arm. The coordinators and the actors require synchronized
	communication as described below. The
	whole algorithm is described in Alg.~\ref{Alg:RB-TSDE}. 
	
	\begin{algorithm}[t!]
		\caption{\textsc{RB-TSDE}}
		\begin{algorithmic}[1]
			\STATE \textbf{Input: } $\{(\Theta^i, \phi^i_1)\}_{i \in [n]}$.
			\STATE Initialize $t \leftarrow 1$, $t_1 \leftarrow 0$, $T_0 \leftarrow 0$, $k \leftarrow 0$, $N^i(s^i, a^i) = 0, \forall a^i \in \{0, 1\}, s^i \in \mathsf{S}^i, \forall i \in [n]$, $\theta_0$, $\mu_{\theta_0}$.
			\FOR{$t = 1, 2, \ldots$}
			\IF{$t_k + T_{k-1} < t$ or $2 N^i_{t_k}(s, a) < N^i_t(s, a)$ for any $s \in {\mathcal{S}}^i$, $a \in \{0, 1\}$, $i \in [n]$}
			\STATE Set $k \leftarrow k + 1$, $T_{k-1} \leftarrow t - t_{k-1}$, $t_k \leftarrow t$.
			\STATE Actor $i \in [n]$ samples $\theta^i_{k} \sim \phi^i_{t_k}$ and
			compute $w^i_{t_k}$.
			\ENDIF
			\STATE Actor $i \in [n]$ sends the Whittle index $w^i_{t_k}(s^i_t)$
			to the coordinator.
			\STATE The coordinator sends $a^i_t = 1$ to the arms
			with the $m$-highest Whittle index and sends $a^i_t = 0$ to others.
			\STATE Actor $i \in [n]$ updates $\phi^i_{t+1}$ according to
			\eqref{eqn:phi-update}.
			\ENDFOR
		\end{algorithmic} 
		\label{Alg:RB-TSDE}
	\end{algorithm}

	As the name suggests, \texttt{RB-TSDE} operates in episodes of dynamic length.
	The episodes are synchronized for all actors and the coordinator signals the
	start of episodes to all actors. The actor at arm~$i$ maintains a posterior
	$\phi^i_t$ distribution on the dynamics of arm~$i$ according
	to~\eqref{eqn:phi-update} and keep track of \( N^{i}_t(s^i, a^i) = \sum_{\tau
		= 1}^{t-1} \IND(\{(S^i_{\tau}, A^i_{\tau}) = (s^i, a^i)\}). \) 
	
	Let $t_k$ and $T_k$ denote the start time and length of episode~$k$,
	respectively. The end of the episode can either be triggered by the
	coordinator or any of the actors. The coordinator triggers the end of the
	episode if the length of the episode is one more than the length of the
	previous episode. The actor for arm~$i$ triggers the end of the episode if the
	number of state-action visits $N^i_t(s^i_t, a^i_t)$ of the current state-action
	pair are more than double of their value at the beginning of the episode.
	Thus, 
	\begin{align*}
		t_{k+1} = \min \bigl\{ & t > t_k : t - t_k > T_{k-1} \text{ or } \\
		& N^i_t(s^i, a^i) > 2 N_{t_k}(s^i, a^i) \text{ for some } (i, s^i, a^i)\bigr\}.
	\end{align*}

	At the beginning of episode~$k$, the actor for arm $i \in [n]$
	samples a parameter $\theta^i_k$ from the posterior~$\phi^i_{t_k}$ and
	computes the Whittle index $w^i_{t_k}$ for all states. During
	episode~$k$, at each time~$t$, the actor at arm~$i$ sends the value of
	$w^i_{t_k}(s^i_t)$ to the coordinator. The coordinator receives
	$w^i_{t_k}(s^i_t)$ from all arms, sends the active action $a^i_t = 1$ to the
	arms with the $m$-highest values of the Whittle index, and sends the passive
	action $a^i_t = 0$ to the remaining arms. This process continues until a
	condition for ending the episode is triggered by the coordinator or one of the
	actors.
	
	\subsection{Regret bound}
	\begin{theorem}\label{thm:main}
		Under Assumptions~\ref{assump:indexable}--\ref{assmp:3}, the regret of {\tt RB-TSDE} is upper bounded as follows:
		\[
		\REGRET(T; {\tt RB-TSDE}) 
		< 40 \alpha \frac{R_{\max}}{1-\lambda^*} \bar {\mathsf{S}}_n \sqrt{T \log T},
		\]
		where $\alpha = n$ for Model~A and $\alpha = m$ for Model~B. 
	\end{theorem}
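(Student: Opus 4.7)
The plan is to adapt the TSDE regret decomposition of \cite{ouyang2017learning} to the restless-bandit setting, exploiting (i)~the factored transition structure to avoid any dependence on $|\bs{\mathcal S}|=\prod_i \mathsf S^i$, and (ii)~Assumption~\ref{assmp:3} to control $\SPAN(\bs V_\theta)$ in terms of the scaling quantity $\alpha$.

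I would begin by applying the Bellman equation \eqref{eqn:Bellman} for the sampled model $\theta_k$ inside episode~$k$ to obtain
\[
T_k J_{\theta_k}-\!\!\sum_{t=t_k}^{t_{k+1}-1}\!\bs r(\bs S_t,\bs A_t)
= \bigl[\bs V_{\theta_k}(\bs S_{t_{k+1}})-\bs V_{\theta_k}(\bs S_{t_k})\bigr]
+\!\!\sum_{t=t_k}^{t_{k+1}-1}\!\bigl([\bs P_{\theta_k}\bs V_{\theta_k}](\bs S_t)-\bs V_{\theta_k}(\bs S_{t+1})\bigr),
\]
and summing this over $k$ together with a $TJ_{\theta_\star}$ subtraction decomposes the regret as $\REGRET(T)=R_0+R_1+R_2$, where $R_0=\EXP[\sum_k T_k(J_{\theta_\star}-J_{\theta_k})]$, $R_1$ is the telescoping span term, and $R_2=\EXP[\sum_t([\bs P_{\theta_k}\bs V_{\theta_k}](\bs S_t)-\bs V_{\theta_k}(\bs S_{t+1}))]$. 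The Thompson-sampling identity applied at the history-measurable times $t_k$, combined with the first stopping rule $T_k\le T_{k-1}+1$, bounds $R_0$ by $\EXP[K_T+1]\cdot\|\bs r\|_\infty$, and $R_1\le\EXP[K_T]\cdot\SPAN(\bs V_\theta)$. A doubling-trick count then gives $\EXP[K_T]=O(\Sqrt{\bar{\mathsf S}_n T\log T})$, since at most $O(\bar{\mathsf S}_n\log T)$ episodes end through a count-doubling event while the rest contribute the usual $\sqrt T$ from $T_k\le T_{k-1}+1$.

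The main work lies in $R_2$, which I would split into $\sum_t[(\bs P_{\theta_k}-\bs P_{\theta_\star})\bs V_{\theta_k}](\bs S_t)$ plus a martingale-difference term that Azuma-Hoeffding controls by $O(\SPAN(\bs V_\theta)\Sqrt{T\log T})$. For the first piece I would exploit that $\bs P_\theta=\bigotimes_i P^i_{\theta^i}$: the product-telescoping identity
\[
\prod_i a_i-\prod_i b_i=\sum_j\Bigl(\prod_{i<j}b_i\Bigr)(a_j-b_j)\Bigl(\prod_{i>j}a_i\Bigr)
\]
reduces $[(\bs P_{\theta_k}-\bs P_{\theta_\star})\bs V_{\theta_k}](\bs S_t)$ to a sum over arms $j$ of inner products $\langle P^j_{\theta^j_k}-P^j_{\theta^j_\star},\tilde V^j\rangle$, where $\tilde V^j$ is a partial expectation of $\bs V_{\theta_k}$ over the other coordinates and hence satisfies $\SPAN(\tilde V^j)\le\SPAN(\bs V_{\theta_k})$. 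Each inner product is at most $\tfrac12\SPAN(\bs V_{\theta_k})\cdot\|P^j_{\theta^j_k}-P^j_{\theta^j_\star}\|_1$. Combining the Thompson-sampling identity with a Dirichlet-type posterior concentration yields $\EXP\|P^j_{\theta^j_k}-P^j_{\theta^j_\star}\|_1=O(\Sqrt{\mathsf S^j\log T/N^j_{t_k}(s,a)})$, and summing over $t$ with Cauchy-Schwarz over $(s,a)$ gives $\sum_t\|P^j_{\theta^j_k}-P^j_{\theta^j_\star}\|_1=O(\mathsf S^j\Sqrt{T\log T})$. Summing across arms then produces the factor $\bar{\mathsf S}_n\Sqrt{T\log T}$ instead of the $\prod_i\mathsf S^i$ scaling that a naive state-lumped bound would give.

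To finish, I would bound $\SPAN(\bs V_\theta)$ using Assumption~\ref{assmp:3}: a standard Dobrushin-ergodicity argument for average-reward MDPs yields $\SPAN(\bs V_\theta)\le\|\bs r\|_\infty/(1-\lambda^*)$, and since $\|\bs r\|_\infty\le nR_{\max}$ in Model~A and $\|\bs r\|_\infty\le mR_{\max}$ in Model~B, we obtain $\SPAN(\bs V_\theta)\le\alpha R_{\max}/(1-\lambda^*)$; assembling these pieces delivers the claimed $\alpha\bar{\mathsf S}_n\Sqrt{T\log T}$ bound. I expect the hardest step to be the factored $R_2$ analysis: verifying that the auxiliary $\tilde V^j$ keep the correct span, and that $\sum_t\|P^j_{\theta^j_k}-P^j_{\theta^j_\star}\|_1$ can be controlled episode-by-episode in expectation despite the data-adaptive choice of $\theta_k$ and the coupling between the episode boundaries and the counts $N^j_t$.
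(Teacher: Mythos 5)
Your proposal is correct and follows essentially the same route as the paper: the same three-term decomposition, the same episode-count and Thompson-sampling arguments for $R_0$ and $R_1$, the same per-arm factorization of the transition error in $R_2$ (your product-telescoping identity is exactly how the paper's cited bound $\lVert \bs{P}_\star - \bs{P}_k\rVert_1 \le \sum_i \lVert P^i_\star - P^i_k\rVert_1$ is obtained), and the same ergodicity-based span bound. The only superfluous step is the Azuma--Hoeffding control of the martingale part of $R_2$: since the regret is a Bayesian expectation, that term vanishes exactly by the tower property, which is how the paper handles it.
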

	See Sec.~\ref{subsec:final-bound} for proof.
	
	
	We derive a tighter regret bound under a stronger assumption. We first assume that the state space of each arm~$\mathcal{S}^i$, $i \in [n]$, is a finite subset of $\mathbb{R}$ and use $d^i$ to denote the Euclidean metric on $\mathbb{R}$, i.e., $d^i(s, s') = |s - s'|$. Furthermore, let $\bs{d}_\PP(\bs{s}, \bs{s}') = \bigl( \sum_{i \in [n]} d^i(s^i,s^{\prime,i})^\PP \bigr)^{1/\PP}$ for any $\bs{s}, \bs{s}' \in \bs{\mathcal{S}}$. We then impose the following assumption.
	\begin{assumption}\label{assump:lipschitz}
		For each $\theta \in \Theta$, the value function $\bs{V}_\theta$ is Lipschitz with a Lipschitz constant upper bounded by $\bs{L}_v$.
	\end{assumption}
	
	In general, Assumption~\ref{assump:lipschitz} depends on the specific model being considered. We present one instance where Assumption~\ref{assump:lipschitz} is satisfied in Sec.~\ref{sec:sufficient}.
	
	\begin{theorem}\label{thm:lipschitz}
		Under Assumptions~\ref{assump:indexable}--\ref{assump:lipschitz}, the regret of {\tt RB-TSDE} for both Model~A and Model~B is upper bounded as follows:
		\begin{multline*}
			\REGRET(T; {\tt RB-TSDE}) \\
			< 12 \max\{\alpha \sqrt{\bar {\mathsf{S}}_n}, \bar {\mathsf{S}}_n \} \max\Bigl\{ \frac{ R_{\max}}{1-\lambda^*}, \mathsf{D}_{\max} \bs{L}_v \Bigr\} \\
			\sqrt{KT \log (T\max\{1, K'\})}
		\end{multline*}
		where $\alpha = n$ for Model~A and $\alpha = m$ for Model~B, $K$ and $K'$ are positive constants independent of $n$ and $T$, and $\mathsf{D}_{\max} = \max_{i \in [n]} \DIAM(\mathcal{S}^i)$.
	\end{theorem}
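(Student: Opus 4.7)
My plan is to mirror the proof architecture of Theorem~\ref{thm:main} but to replace the transition estimation term—which is the dominant term there and scales as $\bar{\mathsf{S}}_n\sqrt{T}$—by a Kantorovich-based bound that scales as $\sqrt{n\bar{\mathsf{S}}_nT}$, exploiting Assumption~\ref{assump:lipschitz} together with the product form of $\bs{P}_\theta$. First I would reuse the same Bellman-equation decomposition used for Theorem~\ref{thm:main}: substituting~\eqref{eqn:Bellman} for $\bs{r}(\bs{S}_t,\bs{A}_t)$ and using the posterior sampling identity $\EXP[J_{\theta_k}\mid \mathcal{H}_{t_k}]=\EXP[J_{\theta_\star}\mid \mathcal{H}_{t_k}]$, I obtain $\REGRET(T;\texttt{RB-TSDE})\le R_0+R_1+R_2$, where $R_0$ and $R_1$ are each bounded by $\EXP[K_T]\cdot\SPAN(\bs{V}_\theta)$ with $K_T=O(\sqrt{\bar{\mathsf{S}}_nT\log T})$ inherited from Theorem~\ref{thm:main}, and
\[
R_2 = \EXP\!\Bigl[\sum_{k,t}(\bs{P}_{\theta_k}-\bs{P}_{\theta_\star})^{\top}\bs{V}_{\theta_k}(\bs{S}_t)\Bigr].
\]

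For $R_0+R_1$, Assumption~\ref{assump:lipschitz} with the $\ell_1$-type product metric yields $\SPAN(\bs{V}_\theta)\le \bs{L}_v\DIAM_{\bs{d}_1}(\bs{\mathcal{S}})\le n\bs{L}_v\mathsf{D}_{\max}$; combining with the bound $\SPAN(\bs{V}_\theta)\le nR_{\max}/(1-\lambda^*)$ used in Theorem~\ref{thm:main} produces the factor $n\max\{R_{\max}/(1-\lambda^*),\mathsf{D}_{\max}\bs{L}_v\}$ in front of $\sqrt{\bar{\mathsf{S}}_nT\log T}$. For $R_2$, the decisive step is to apply Assumption~\ref{assump:lipschitz} pointwise,
\[
|(\bs{P}_{\theta_k}-\bs{P}_{\theta_\star})^{\top}\bs{V}_{\theta_k}(\bs{s})| \le \bs{L}_v\,\mathcal{K}\bigl(\bs{P}_{\theta_k}(\cdot\mid\bs{s},\bs{a}),\bs{P}_{\theta_\star}(\cdot\mid\bs{s},\bs{a})\bigr),
\]
and then to decompose the joint Kantorovich distance across arms. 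A telescoping argument over $i$ (swapping one $P^i_{\theta^i_\star}$ for $P^i_{\theta^i_k}$ at a time), together with the coordinate-wise Lipschitz property of $\bs{V}_{\theta_k}$, yields the subadditivity
\[
\mathcal{K}(\bs{P}_{\theta_k},\bs{P}_{\theta_\star}) \le \sum_{i=1}^n \mathcal{K}\bigl(P^i_{\theta^i_k}(\cdot\mid s^i,a^i),P^i_{\theta^i_\star}(\cdot\mid s^i,a^i)\bigr).
\]
Per arm, I would bound the Kantorovich distance by applying Hoeffding's inequality to the single Lipschitz test function realizing the Kantorovich supremum (range at most $\mathsf{D}_{\max}$), obtaining $\mathcal{K}(P^i_{\theta^i_k},P^i_{\theta^i_\star})(s,a)\le C\mathsf{D}_{\max}\sqrt{\log(K_2T)/N^i_t(s,a)}$ with high probability, \emph{without} the usual $\sqrt{S^i}$ factor that would appear if one bounded $\mathcal{K}\le\mathsf{D}_{\max}\|\cdot\|_1$ and used Weissman. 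Summing over time and arms using $\sum_{t=1}^T 1/\sqrt{N^i_t(S^i_t,A^i_t)}=O(\sqrt{S^iT})$ per arm (pigeonhole with $|\mathcal{A}^i|=2$) and then Cauchy--Schwarz $\sum_i\sqrt{S^iT}\le\sqrt{n\bar{\mathsf{S}}_nT}$ yields $R_2 = O\bigl(\bs{L}_v\mathsf{D}_{\max}\sqrt{n\bar{\mathsf{S}}_nT\log T}\bigr)$. Combining with Step~2 (in which $R_0+R_1$ absorbs the extra $\sqrt{n}$) produces the stated bound.

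\textbf{Main obstacle.} The principal difficulty is the per-arm Kantorovich concentration in the last paragraph: showing that the arm-wise bound avoids the $\sqrt{S^i}$ factor that would otherwise wipe out the gain over Theorem~\ref{thm:main}. The remedy is to apply Hoeffding directly to the specific Lipschitz test function appearing in each evaluation of $(\bs{P}_{\theta_k}-\bs{P}_{\theta_\star})^{\top}\bs{V}_{\theta_k}(\bs{s})$, rather than trying to bound $\mathcal{K}$ uniformly as a supremum. A secondary delicacy is that $\bs{V}_{\theta_k}$ depends on the history through the posterior sample $\theta_k$; this is handled by conditioning on the start-of-episode $\sigma$-algebra $\mathcal{H}_{t_k}$ (where $\theta_k$ is $\mathcal{H}_{t_k}$-measurable up to the independent posterior draw) and exploiting that $\theta_k$ and $\theta_\star$ are equidistributed given $\mathcal{H}_{t_k}$, exactly as in the proof of Theorem~\ref{thm:main}. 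Together, these ingredients replace the $\bar{\mathsf{S}}_n\sqrt{T}$ scaling of the dominant term by $n\sqrt{\bar{\mathsf{S}}_nT}$ and recover the bound stated in Theorem~\ref{thm:lipschitz}.
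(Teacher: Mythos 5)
Your overall architecture matches the paper's: the same three-term decomposition, $\REGRET_0$ and $\REGRET_1$ handled essentially as in Theorem~\ref{thm:main}, and the entire gain concentrated in $\REGRET_2$, where the Lipschitz property of $\bs{V}_{\theta_k}$ converts the model-mismatch term into a Kantorovich distance that tensorizes over arms (the paper's Lemmas~\ref{lem:LipK} and~\ref{lemma:Pdiff2}) and the per-sample rate $1/\sqrt{N}$ replaces $\sqrt{\mathsf{S}^i/N}$. You have correctly identified the crux as removing the $\sqrt{\mathsf{S}^i}$ factor from the per-arm concentration.

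However, your mechanism for that crux has a genuine gap. You propose to apply Hoeffding's inequality to ``the single Lipschitz test function realizing the Kantorovich supremum,'' i.e.\ to the relevant coordinate section of $\bs{V}_{\theta_k}$. That function is not fixed in advance: $\theta_k$ is drawn from the posterior $\phi_{t_k}$, which is built from exactly the transition samples that form the empirical distribution $\hat P^i_{t_k}$, so the test function is data-dependent. Your proposed fix---conditioning on $\mathcal{H}_{t_k}$---does not repair this: conditional on $\mathcal{H}_{t_k}$ the empirical distribution is deterministic, so there is no residual randomness against which Hoeffding can be applied, while unconditionally Hoeffding requires a fixed function. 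What is actually needed is concentration \emph{uniform over the whole $1$-Lipschitz ball}, i.e.\ concentration of $\mathcal{K}\bigl(P^i_{\theta_\star}(\cdot\,|s,a),\hat P^i_{t_k}(\cdot\,|s,a)\bigr)$ itself. This is exactly what the paper imports in Lemma~\ref{lem:kant-bound} (Fournier--Guillin), and it is where the requirement that each $\mathcal{S}^i$ sit in $\reals$ does real work: in one dimension the uniform rate is $\sqrt{\log(1/\delta)/N}$ with no cardinality factor, whereas a generic covering/union-bound argument over the Lipschitz ball would reintroduce a dependence on $\mathsf{S}^i$ and erase the improvement over Theorem~\ref{thm:main}. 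Once the uniform bound is in hand, the Thompson-sampling identity (Lemma~\ref{lem:TS}) transfers it from $\theta_\star$ to $\theta_k$---a transfer that itself requires the controlled quantity to be a deterministic function of $(\theta,\mathcal{H}_{t_k})$, which the supremum $\mathcal{K}$ is but your realized test function is not---and a triangle inequality through $\hat P^i_{t_k}$ completes the step. With that substitution, your remaining steps (pigeonhole over state-action pairs and Cauchy--Schwarz over arms) go through and recover the stated bound.
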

	See Sec.~\ref{subsec:final-bound} for proof.
	
	\begin{remark}
		If we directly use an existing RL algorithm for RBs, the regret will scale as $\tilde{\OO}(2^n \sqrt{T})$ or larger. The results of Theorems~\ref{thm:main} and~\ref{thm:lipschitz} show that the regret scales as either $\tilde{\OO}(n^2 \sqrt{T})$, $\tilde\OO(n^{1.5} \sqrt T)$, or $\tilde\OO(n \sqrt T)$ depending on the modeling assumptions. Thus, using a learning algorithm which is adapted to the structure of the models gives a significantly better scaling with the number of arms. 
	\end{remark}
	
	\begin{remark}\label{rem:scaling}
		The exact scaling with the number of arms depends on how $m$ scales with $n$. For example, if $m$ remains constant, then under Assumptions~\ref{assump:indexable}--\ref{assmp:3}, the regret for Model~A scales as $\tilde{\OO}(n^2 \sqrt T)$, while the regret for Model~B scales as $\tilde{\OO}(n \sqrt T)$. Under Assumption~\ref{assump:lipschitz}, the regret for Model~A scales as $\tilde{\OO}(n^{1.5} \sqrt T)$, while the regret for Model~B scales as $\tilde{\OO}(n\sqrt{T})$. On the other hand, if $m$ scales as $\beta n$, where $\beta < 1$ is a constant\footnote{For this setting, it was shown in~\cite{weber1990index} that the Whittle index policy is optimal as $n \to \infty$.}, then under Assumptions~\ref{assump:indexable}--\ref{assmp:3}, the regret for both models scales as $\tilde{\OO}(n^2 \sqrt{T})$. Under Assumption~\ref{assump:lipschitz}, the regret for both models scales as $\tilde{\OO}(n^{1.5} \sqrt T)$. Thus, for Model~A, the regret bound of Theorem~\ref{thm:lipschitz} is tighter than that of Theorem~\ref{thm:main}, but for Model~B, it depends on the scaling assumptions on~$m$.
	\end{remark}
	
	\section{Regret Analysis} \label{sec:proofsketch}
	
	The high level idea of the proof is similar to the analysis in \cite{ouyang2017learning} but we exploit the properties of the RB model while simplifying individual terms. We first start with bounds on the average reward and the differential value function.
	
	\subsection{Bounds on average reward and differential value function.}
	As mentioned earlier, $\bs{V}_\theta$ is unique only up to an additive constant.
	We assume that $\bs{V}_\theta$ is chosen such that $\bs{\xi}_\theta \bs{V}_\theta = 0$, where $\bs{\xi}_\theta$ is the stationary distribution of $\bs{P}_\theta$. This ensures that $\bs{V}_\theta$ is equal to the \emph{asymptotic bias} of policy $\bs{\mu}_{\theta}$ and is given by
	\begin{equation}\label{eq:bias}
		\bs{V}_\theta = \sum_{t=0}^\infty \bs{P}_\theta^t \bigl[
		\bs{r} - J_\theta \bigr].
	\end{equation}
	See for example~\cite{puterman2014markov}.
	
	Then we have the following bounds.
	\begin{lemma}\label{lem:bound}
		Under Assumption~\ref{assmp:3}, for any $\theta \in \Theta$,
		\[
		0 \le J_\theta \le \alpha R_{\max}
		\text{ and }
		\SPAN(\bs{V}_\theta) \le 2 \alpha R_{\max}/(1 - \lambda^*),
		\]
		where $\alpha = n$ for Model~A and $\alpha = m$ for Model~B. 
	\end{lemma}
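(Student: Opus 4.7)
The plan is to handle the two inequalities separately, each via a direct structural argument. For the bound on $J_\theta$, I would start by noting the per-step reward structure: in Model~A, $\bs{r}(\bs{s},\bs{a}) = \sum_i r^i(s^i,a^i)$ lies in $[0, n R_{\max}]$ because each $r^i \in [0, R_{\max}]$; in Model~B, $\bs{r}(\bs{s},\bs{a}) = \sum_i r^i(s^i,a^i)\IND(a^i=1)$ lies in $[0, m R_{\max}]$ because only the $m$ active arms (with $\|\bs{a}\|_1 = m$) contribute. Plugging $0 \le \bs{r} \le \alpha R_{\max}$ into the $\liminf$ defining $J_\theta$ in \eqref{eqn:obj_func} then gives $0 \le J_\theta \le \alpha R_{\max}$.

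For the span of $\bs{V}_\theta$, I would work from the bias representation \eqref{eq:bias}, writing $\bs{V}_\theta = \sum_{t=0}^\infty \bs{P}_\theta^t f$ with $f(\bs{s}) := \bs{r}(\bs{s}, \bs{\mu}_\theta(\bs{s})) - J_\theta$. The central tool is the classical contraction property of the Dobrushin (ergodicity) coefficient: for every function $g$ on $\bs{\mathcal{S}}$,
\[ \SPAN(\bs{P}_\theta g) \le \lambda_{\bs{P}_\theta} \SPAN(g). \]
Iterating gives $\SPAN(\bs{P}_\theta^t f) \le (\lambda^*)^t \SPAN(f)$ by Assumption~\ref{assmp:3}. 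Constants are invisible to the span, so $\SPAN(f) = \SPAN(\bs{r}(\cdot, \bs{\mu}_\theta(\cdot))) \le \alpha R_{\max}$ by the reasoning of the first paragraph.

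To obtain the sup-norm control that yields the stated factor of $2$, I would use the stationarity of $\bs{\xi}_\theta$: by definition of $J_\theta$, $\bs{\xi}_\theta f = 0$, and invariance of $\bs{\xi}_\theta$ under $\bs{P}_\theta$ then gives $\bs{\xi}_\theta(\bs{P}_\theta^t f) = 0$ for every $t \ge 0$. A zero-mean function $h$ satisfies $\|h\|_\infty \le \SPAN(h)$, so
\[ \|\bs{V}_\theta\|_\infty \le \sum_{t=0}^\infty \|\bs{P}_\theta^t f\|_\infty \le \sum_{t=0}^\infty (\lambda^*)^t \alpha R_{\max} = \frac{\alpha R_{\max}}{1-\lambda^*}, \]
and $\SPAN(\bs{V}_\theta) \le 2\|\bs{V}_\theta\|_\infty$ delivers the claim. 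The main obstacle in this outline is the Dobrushin contraction $\SPAN(\bs{P}_\theta g) \le \lambda_{\bs{P}_\theta} \SPAN(g)$; this is standard (see e.g.~\cite{arapostathis1993}) and is proved by writing $\bs{P}_\theta g(\bs{s}) - \bs{P}_\theta g(\bs{s}')$ as the integral of $g$ against the signed measure $\bs{P}_\theta(\cdot|\bs{s}) - \bs{P}_\theta(\cdot|\bs{s}')$ and invoking the minimum-measure decomposition implicit in the definition of $\lambda_{\bs{P}_\theta}$. Everything else is a straightforward chaining of geometric sums.
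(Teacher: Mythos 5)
Your proposal is correct and follows essentially the same route as the paper: the reward range gives the bound on $J_\theta$, and the bias series \eqref{eq:bias} combined with the Dobrushin contraction $\SPAN(\bs{P}_\theta \bs{v}) \le \lambda_{\bs{P}_\theta}\SPAN(\bs{v})$ gives the span bound. The only cosmetic difference is the final summation step: the paper uses subadditivity of the span directly on the series (which in fact yields the tighter constant $\alpha R_{\max}/(1-\lambda^*)$), whereas your detour through the zero-mean normalization $\bs{\xi}_\theta \bs{V}_\theta = 0$ and $\SPAN(\bs{V}_\theta) \le 2\lVert \bs{V}_\theta \rVert_\infty$ reproduces exactly the stated factor of $2$.
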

	\begin{proof}
		Note that for Model~A, $\bs{r}(\bs{s},\bs{a}) \in [0, n R_{\max}]$ while for Model~B, $\bs{r}(\bs{s},\bs{a}) \in [0, m R_{\max}]$.
		Then, the bounds for $J_\theta$ follow immediately from definition. The bounds on $\SPAN(\bs{V}_\theta)$ follow from~\eqref{eq:bias}, $\SPAN(\bs{s} + \bs{y}) \le \SPAN(\bs{s}) + \SPAN(\bs{y})$, Assumption~\ref{assmp:3}, and the fact that for any vector~$\bs{v}$, $\SPAN(\bs{P}_\theta \bs{v}) \leq \lambda_{\bs{P}_\theta} \SPAN(\bs{v})$.
	\end{proof}
	
	\begin{remark}
		Lemma~\ref{lem:bound} shows the key difference between Models~A and~B. When all arms yield rewards, the maximum value of $\bs{r}(\bs{s},\bs{a})$ is $n R_{\max}$ while when only active arms yield rewards, the maximum value of $\bs{r}(\bs{s},\bs{a})$ is $m R_{\max}$. This leads to different bounds on $J_\theta$ and $\bs{V}_\theta$.
	\end{remark}
	
	\begin{remark} \label{rem:6}
		An alternative bound on $\SPAN(\bs{V}_\theta)$ is presented in \cite{bartlett2009regal}. Let ${\bs T}^{\bs{s_1} \to \bs{s_2}}_\theta$ denote the expected number of steps to go from state~$\bs{s_1}$ to state~$\bs{s_2}$ under policy $\bs{\mu}_\theta$ for model~$\theta$. Define \( \bs{D}_\theta = \max_{\bs{s_1}, \bs{s_2}} {\bs T}^{\bs{s_1} \to \bs{s_2}}_\theta \) to be the \textit{one-way diameter}. Then, it is shown in \cite{bartlett2009regal} that $\SPAN(\bs{V}_\theta) \leq J_\theta \bs{D}_\theta$. We do not know of an easy way to characterize the dependence of $\bs{D}_\theta$ on the number $n$ of arms. That is why we consider an alternative bound on $\SPAN(\bs{V}_\theta)$.
	\end{remark}
	
	\subsection{Regret decomposition.}
	For the ease of notation, we simply use $\REGRET(T)$ instead of
	$\REGRET(T;{\tt RB-TSDE})$. We also use $(J_\star, \mu_\star, \bs{P}_\star, \bs{V}_\star)$ instead of $(J_{\theta_\star}, \mu_{\theta_\star}, \bs{P}_{\theta_\star}, \bs{V}_{\theta_\star})$
	and use $(J_k, \mu_k, \bs{P}_k, \bs{V}_k)$ instead of $(J_{\theta_k}, \mu_{\theta_k}, \bs{P}_{\theta_k}, \bs{V}_{\theta_k})$.
	Rearranging terms in Bellman equation~\eqref{eqn:Bellman} and adding and subtracting $\bs{V}_k(\bs{s}_{t+1})$, we get:
	\begin{align}
		\bs{r}(\bs{s}_t, \bs{a}_t) & = J_k + \bs{V}_k(\bs{s}_t) - \bs{V}_k(\bs{s}_{t+1}) + \bs{V}_k(\bs{s}_{t+1}) \notag
		\\
		& \quad - \bigl[ \bs{P}_k \bs{V}_k \bigr](\bs{s_t}).
		\label{eq:modified-bellman}
	\end{align}
	
	Let $K_T$ denote the number of episodes until horizon~$T$.
	Substituting~\eqref{eq:modified-bellman} in~\eqref{eqn:regret}, we get:
	\begin{align}
		\hskip 1em & \hskip -1em
		\displaybreak[2]
		\REGRET(T) =
		\underbrace{ \EXP\biggl[ T J_\star -  \sum_{k=1}^{K_T} T_k J_k \biggr]}
		_{\text{regret due to sampling error $\eqqcolon \REGRET_0(T)$}}
		\notag \\
		\displaybreak[2]
		&\hskip 4em + 
		\underbrace{ \EXP\biggl[\sum_{k=1}^{K_T} \sum_{t=t_k}^{t_{k+1} - 1} 
			\bs{V}_k(\bs{S}_{t+1}) - \bs{V}_k(\bs{S}_{t}) \biggr]}
		_{\text{regret due to time-varying policy $\eqqcolon \REGRET_1(T)$}}
		\notag \\
		&\hskip 4em + 
		\underbrace{ \EXP\biggl[\sum_{k=1}^{K_T} \sum_{t=t_k}^{t_{k+1} - 1} 
			\bigl[ \bs{P}_k \bs{V}_k \bigr](\bs{S}_{t}) - \bs{V}_k(\bs{S}_{t+1}) \biggr]}
		_{\text{regret due to model mismatch $\eqqcolon \REGRET_2(T)$}}.
		\label{eq:decompose}
	\end{align}
	
	\subsection{Bounding individual terms.} 
	Each term of~\eqref{eq:decompose} is bounded as follows. 
	\begin{lemma} \label{lemma:R0_bound}
		Under Assumptions~\ref{assump:indexable}--\ref{assmp:3}, we have
		\begin{enumerate}
			\item $\displaystyle \REGRET_0(T) \leq 2 \alpha R_{\max} \Sqrt{\bar {\mathsf{S}}_n T \log T}$.
			\item $\displaystyle \REGRET_1(T) \leq  4 \frac{\alpha R_{\max}}{1-\lambda^*} \Sqrt{ \bar {\mathsf{S}}_n T \log T}$.
			\item $\displaystyle \REGRET_2(T) \leq  12\sqrt{2} \frac{\alpha R_{\max}}{1 - \lambda^*} \bigl( n + \bar {\mathsf{S}}_n \sqrt{T \log T} \bigr)$.
		\end{enumerate}
	\end{lemma}
	See the appendix for the proof steps.
	
	We can obtain an alternative bound on $\REGRET_2(T)$ under Assumption~\ref{assump:lipschitz}.
	
	\begin{lemma} \label{lemma:R2_bound_tight}
		Under Assumptions~\ref{assump:indexable}, \ref{assump:acoe}, and \ref{assump:lipschitz}, we have
		\[ \displaystyle \REGRET_2(T) \leq 12 \bar{\mathsf{S}}_n \mathsf{D}_{\max} \bs{L}_v \sqrt{K T \log (K' T)} \]
		where $K$ and $K'$ are positive constants that do not depend on $n$ and $T$.
	\end{lemma}
	See the appendix for the proof steps.
	
	\begin{remark} \label{rem:7}
		Under Assumption~\ref{assump:arm}, which will be described later, we can establish a tighter bound on $\SPAN(\bs{V}_\theta)$. In particular, Lipschitz continuity of $\bs{V}_\theta$ implies that $\SPAN(\bs{V}_\theta) \le \bs{L}_v \DIAM(\bs{\mathcal{S}})$. This can give us a tighter bound on the term $\REGRET_1(T)$, but this tighter bound does not help us in reducing the overall regret.
	\end{remark}
	
	\subsection{Obtaining the final bound.} \label{subsec:final-bound}
	\begin{proof}[Proof of Theorem~\ref{thm:main}]
		From Eq.~\eqref{eq:decompose} and Lemma~\ref{lemma:R0_bound}, we get
		\begin{align*}
			\REGRET(T) & \leq 2 \alpha R_{\max} \Sqrt{\bar {\mathsf{S}}_n T \log T} 
			+  4 \frac{\alpha R_{\max}}{1-\lambda^*} \Sqrt{ \bar {\mathsf{S}}_n T \log T} \notag \\
			& + 12\sqrt{2} \frac{\alpha R_{\max}}{1 - \lambda^*} \bigl( n + \bar {\mathsf{S}}_n \sqrt{T \log T} \bigr).
		\end{align*}
		By definition, $\lambda^* < 1$. Then,
		\begin{align*}
			\REGRET(T) & < 6 \frac{\alpha R_{\max}}{1-\lambda^*} \sqrt{\bar {\mathsf{S}}_n T \log T} + 24 \sqrt{2} \frac{\alpha R_{\max}}{1-\lambda^*} \bar {\mathsf{S}}_n \sqrt{T \log T} \\
			& < (6 + 24\sqrt{2}) \frac{\alpha R_{\max} }{1-\lambda^*} \bar {\mathsf{S}}_n \sqrt{T \log T} \\
			&= 40 \frac{\alpha R_{\max}}{1-\lambda^*} \bar {\mathsf{S}}_n \sqrt{T \log T}.
		\end{align*}
		This completes the proof of Theorem~\ref{thm:main}.
	\end{proof}
	
	\begin{proof}[Proof of Theorem~\ref{thm:lipschitz}]
		From Eq.~\eqref{eq:decompose}, Lemma~\ref{lemma:R0_bound}, and \ref{lemma:R2_bound_tight}, we get
		\begin{align*}
			\REGRET(T) & \leq 2 \alpha R_{\max} \Sqrt{\bar {\mathsf{S}}_n T \log T} 
			+  4 \frac{\alpha R_{\max}}{1-\lambda^*} \Sqrt{ \bar {\mathsf{S}}_n T \log T} \notag \\
			& \qquad + 12 \bar{\mathsf{S}}_n \mathsf{D}_{\max} \bs{L}_v \sqrt{K T \log (K' T)} \notag \\
			& < 12 \frac{\alpha R_{\max}}{1-\lambda^*} \sqrt{\bar {\mathsf{S}}_n T \log T} \notag \\
			& \qquad + 12 \bar{\mathsf{S}}_n \mathsf{D}_{\max} \bs{L}_v \sqrt{K T \log (K' T)}.
		\end{align*}
		Let $\bar{R} = \max\Bigl\{ R_{\max}/(1-\lambda^*), \mathsf{D}_{\max} \bs{L}_v \Bigr\}$. Then, we have
		\begin{align*}
			\REGRET(T) & < 12 \max\{ \alpha \sqrt{\bar {\mathsf{S}}_n}, \bar {\mathsf{S}}_n \} \bar{R} \sqrt{K T \log (K' T)}.
		\end{align*}
		This completes the proof of Theorem~\ref{thm:lipschitz}.
	\end{proof}
	
	\section{Numerical examples} \label{sec:numerical}
	In this section, we demonstrate the empirical performance of \texttt{RB-TSDE}. In particular, 
	we consider two environments, one for Model~A (a model for
	machine maintenance) and one for Model~B (a model for
	link scheduling). For both environments, we consider multiple experiments and plot the regret as a function of time and as a function of number of arms. Our results illustrate that the regret does indeed scale according to our theoretical results. We also compare the results with the empirical performance of \texttt{QWI}, which is a Q-learning algorithm for RBs proposed in~\cite{borkar2018learning,avrachenkov2020whittle,robledo2022qwi}. Note that only the algorithm proposed in~\cite{robledo2022qwi} is called \texttt{QWI}, but the algorithms of~\cite{borkar2018learning,avrachenkov2020whittle} are conceptually similar, so we collectively call them \texttt{QWI}.
	
	\subsection{Environments}
	
	We start with a description of the two environments.
	
	\subsubsection{Environment~A} 
	
	We consider a machine maintenance model where a single repairman is
	responsible for the maintenance of a set of machines, which deteriorate over
	time. Each machine has multiple deterioration states sorted from
	\emph{pristine} to \emph{ruined}. There is a cost associated with running the
	machine and the cost is non-decreasing function of the state. If the machine
	is left un-monitored, then the state of the machine stochastically
	deteriorates over time. The repairman may visit one of the machines and
	replace it with a new machine at a fixed cost. The objective is to determine a
	scheduling policy to minimize the expected discounted cost over time.
	
	We model the above environment as an instance of Model~A. In
	particular, we consider $n$ arms, where $n \in \{10, 20, \dots, 80\}$
	where $m=1$ arm can be activated at each time. The state space of each arm
	is of size $\mathsf{S} = 10$. Under $a = 1$, the state of the arm is reset to $1$ (and
	this fact is known to the learner). Under $a = 0$, the transition matrix is
	stochastic monotone and chosen as described in \cite[Appendix 1.2]{akbarzadeh2022conditions}. 
	The transitions under the passive action are unknown to the learner. 
	The per-step reward function are given by $r^i(s,0) = (\mathsf{S} -1)^2 - (s-1)^2$, 
	$r^i(s,1) = 0.5(\mathsf{S} - 1)^2$ for all $i \in [n]$ and $s \in [\mathsf{S}]$. 
	
	\subsubsection{Environment~B}

	We consider a link scheduling problem where there are $n$ users who can
	communicate over a shared communication link. Each user has a queue, where
	packets arrive according to an unknown i.i.d.\ process. At each time, a
	controller may schedule one of the users and transmit all its packets over the
	channel. The users which are not scheduled incur a holding cost which is equal
	to the square of the number of packets in the queue.

	We model the above environment as an instance of Model~B. In particular, we
	consider $n$ arms, where $n \in \{10, 20, \dots, 80\}$ where $m = n-1$ arm can
	be activated at each time. The state of each arm is of size $\mathsf{S} = 10$.
	Under $a = 1$, the transition matrix is upper-triangular and chosen as described in \cite[${\cal P}_1(p)$ of
	Appendix A]{akbarzadeh2022two} where $p$ is set to be different for each arm,
	linearly ranged from $0.05$ to $0.95$. The transitions under the active action
	are unknown to the learner. Under $a = 0$, the state of the arm is reset to
	$1$ (and this fact is known to the learner). The per-step reward function are
	given by $r^i(s,0) = 0$, $r^i(s,1) = (\mathsf{S} -1)^2 - (s-1)^2$ for all $i \in [n]$ and $s \in [\mathsf{S}]$.

	\subsection{Algorithms}
	
	We compare the performance of two algorithms.
	
	\subsubsection{\texttt{RB-TSDE}} We consider the \texttt{RB-TSDE} algorithm described in Algorithm~\ref{Alg:RB-TSDE}. We initialize the algorithm with uninformed Dirichlet prior on the unknown parameters and update the posterior according to the conjugate posterior for Dirichlet priors. 
	
	\subsubsection{\texttt{QWI} } We also consider \texttt{QWI}, which is a Q-learning algorithm for RBs proposed in~\cite{borkar2018learning,avrachenkov2020whittle,robledo2022qwi} as a baseline. The algorithm has two learning rates. As recommended in \cite[Eq. (17)]{avrachenkov2020whittle} we pick the step-size sequence which has a good performance by setting parameters $C$ and $C'$ of \texttt{QWI} as $C = 0.03$ and $C' = 0.01$, where the numerical values were obtained by running a hyper-parameter search. 
	
	\begin{figure*}[t!]
		\centering
		\begin{subfigure}[h]{0.225\textwidth}
			\centering
			\includegraphics[width=0.99\linewidth]{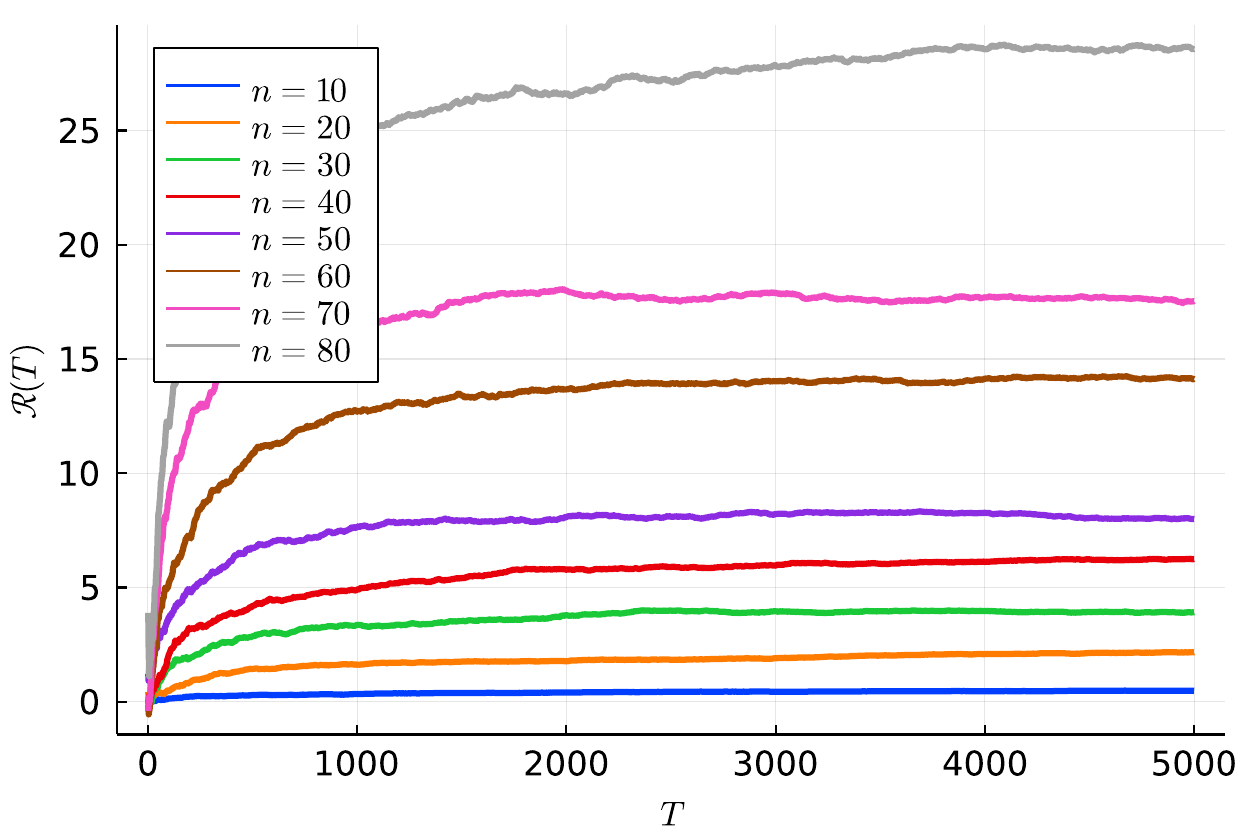}
			\caption[]%
			{RB-TSDE: {$\REGRET(T) / \sqrt{T}$ vs. $T$ }}    
			\label{fig:reg-a1}
		\end{subfigure}
		\quad
		\begin{subfigure}[h]{0.225\textwidth}  
			\centering 
			\includegraphics[width=0.99\linewidth]{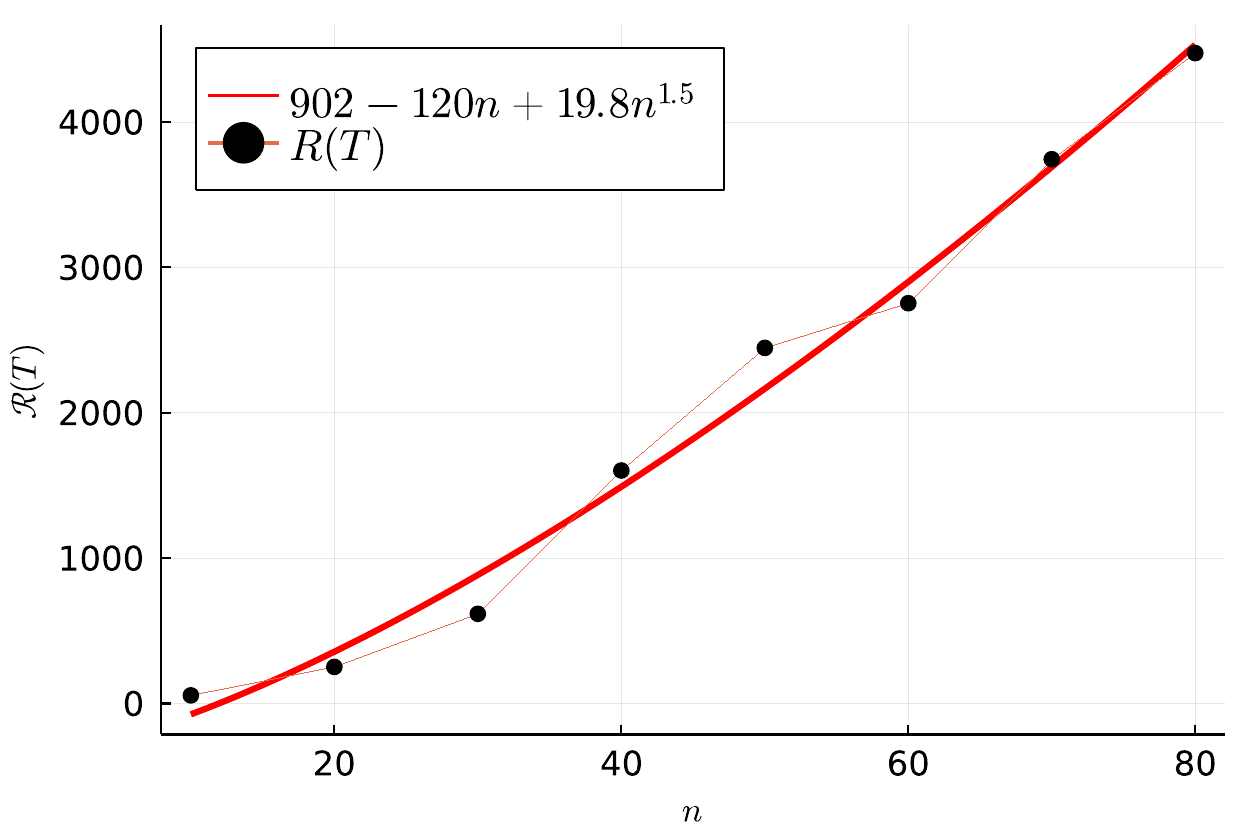}
			\caption[]%
			{{$\REGRET(T)$ vs. $n$}}    
			\label{fig:reg-a2}
		\end{subfigure}
		\quad
		\begin{subfigure}[h]{0.225\textwidth}   
			\centering 
			\includegraphics[width=0.99\linewidth]{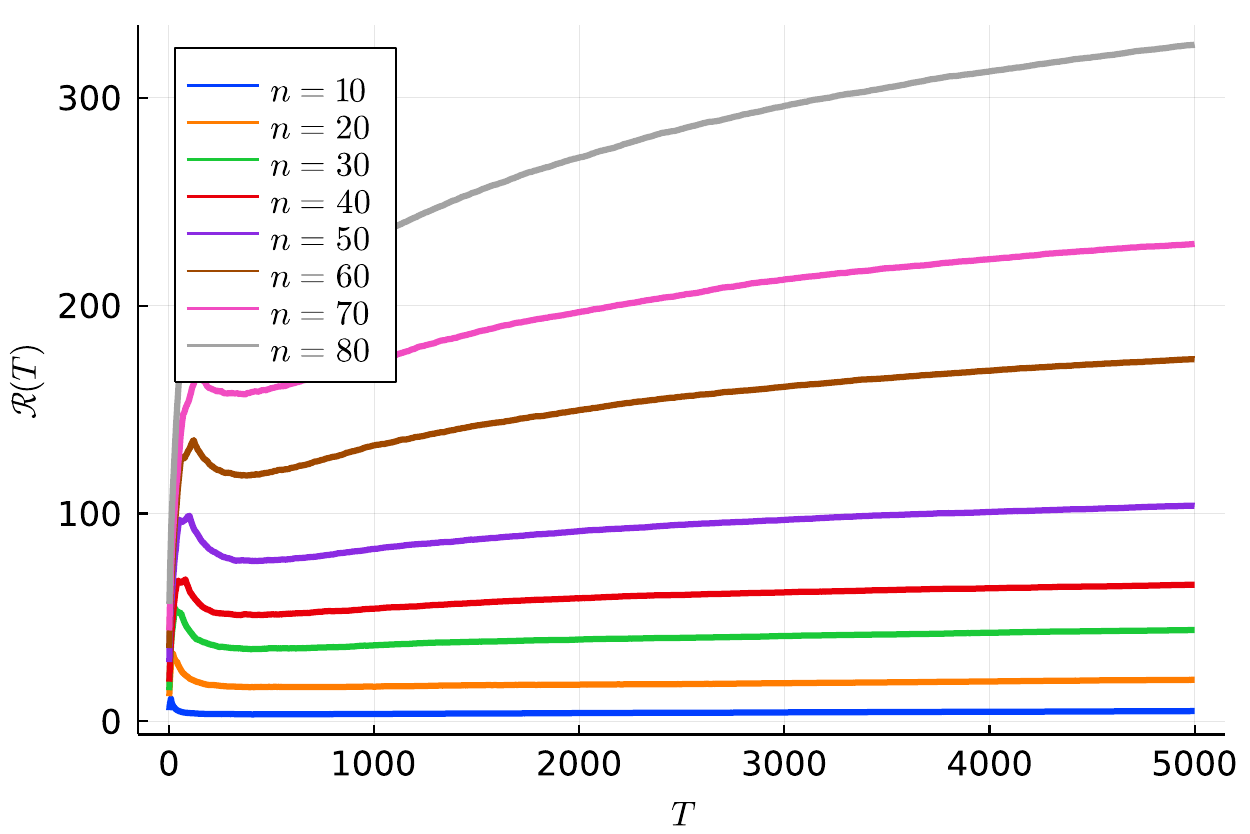}
			\caption[]%
			{{$\REGRET(T) / \sqrt{T}$ vs. $T$}}    
			\label{fig:reg-a3}
		\end{subfigure}
		\begin{subfigure}[h]{0.225\textwidth}
			\centering
			\includegraphics[width=0.99\linewidth]{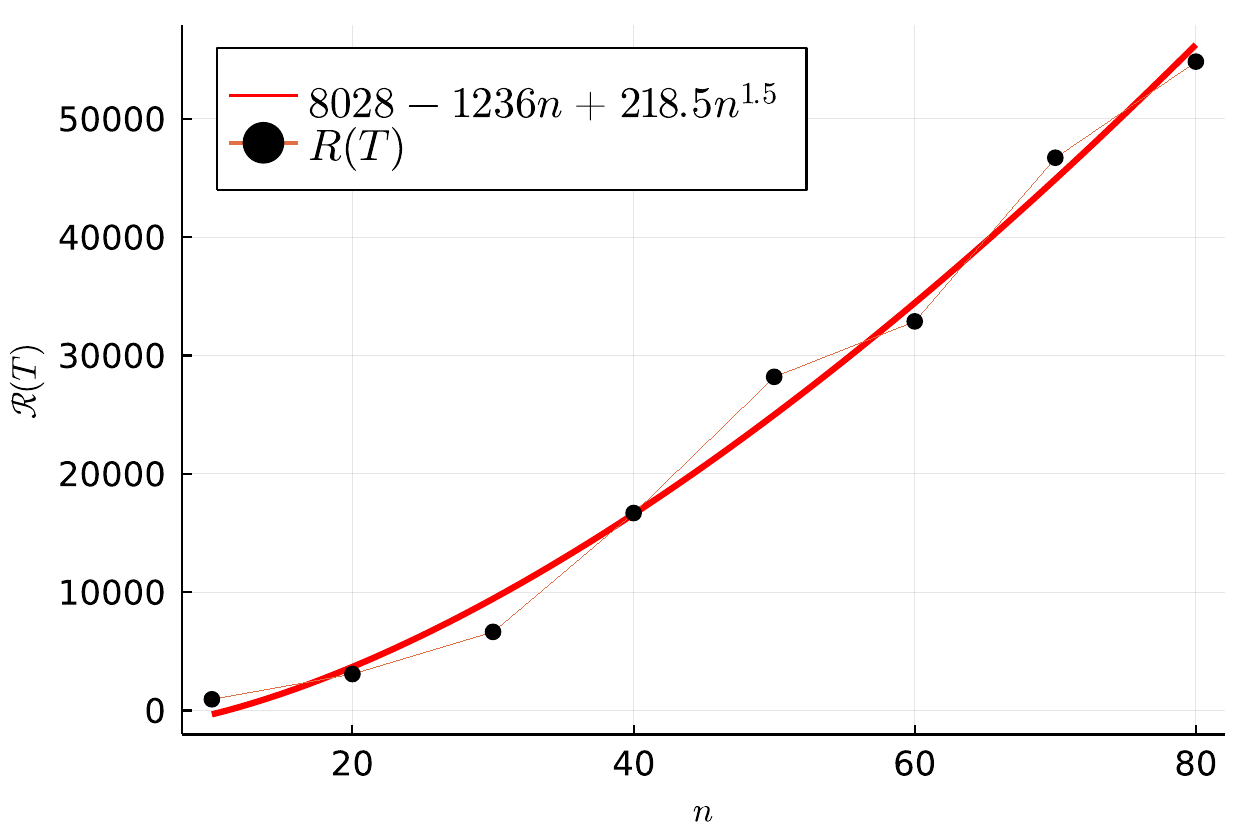}
			\caption[]%
			{{$\REGRET(T)$ vs. $n$}}    
			\label{fig:reg-aq1}
		\end{subfigure}
		\caption[]
		{Regret analysis of \texttt{RB-TSDE} and \texttt{QWI} for Environment A. Note that \texttt{RB-TSDE} has an order of magnitude better regret than \texttt{QWI}.} 
		\label{fig:reg-AT}
		\vskip \baselineskip
		\begin{subfigure}[h]{0.225\textwidth}
			\centering
			\includegraphics[width=0.99\linewidth]{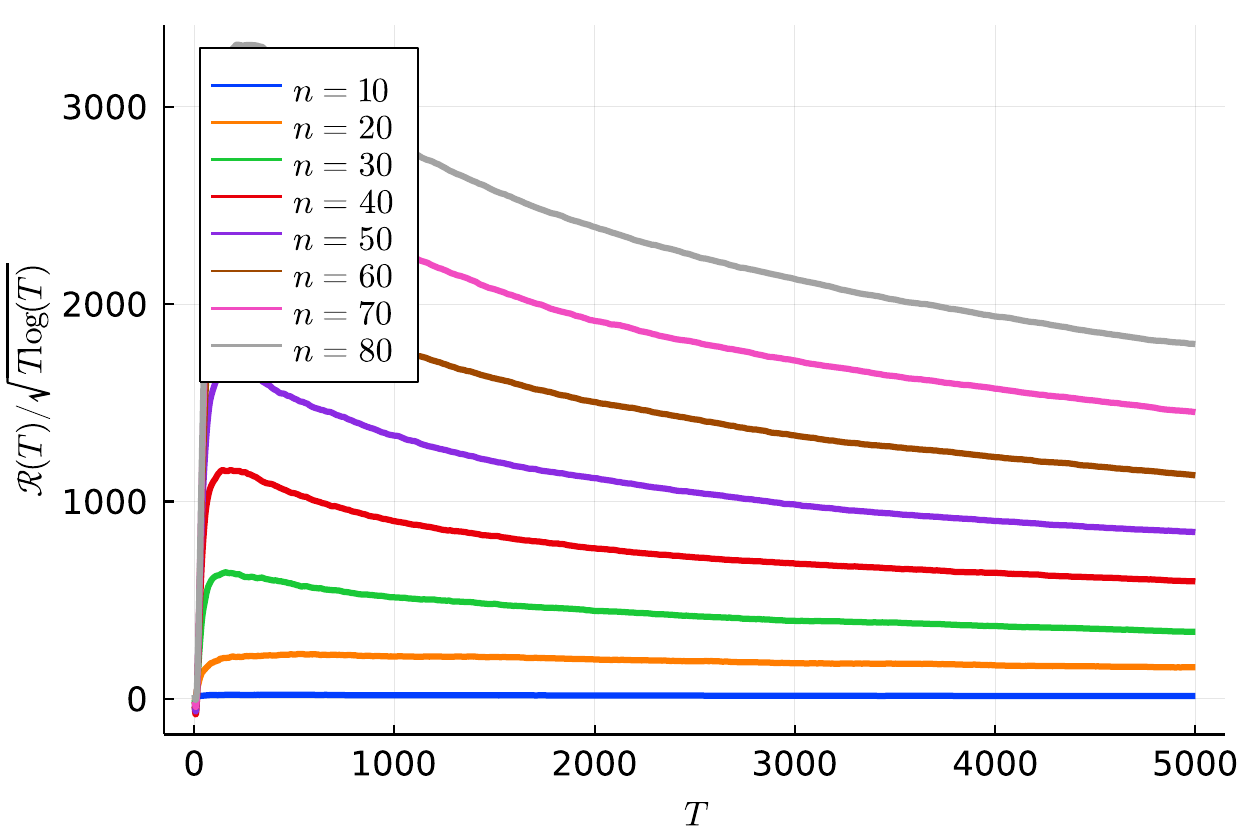}
			\caption[]%
			{{$\REGRET(T) / \sqrt{T}$ vs. $T$}}    
			\label{fig:reg-b1}
		\end{subfigure}
		\quad
		\begin{subfigure}[h]{0.225\textwidth}  
			\centering 
			\includegraphics[width=0.99\linewidth]{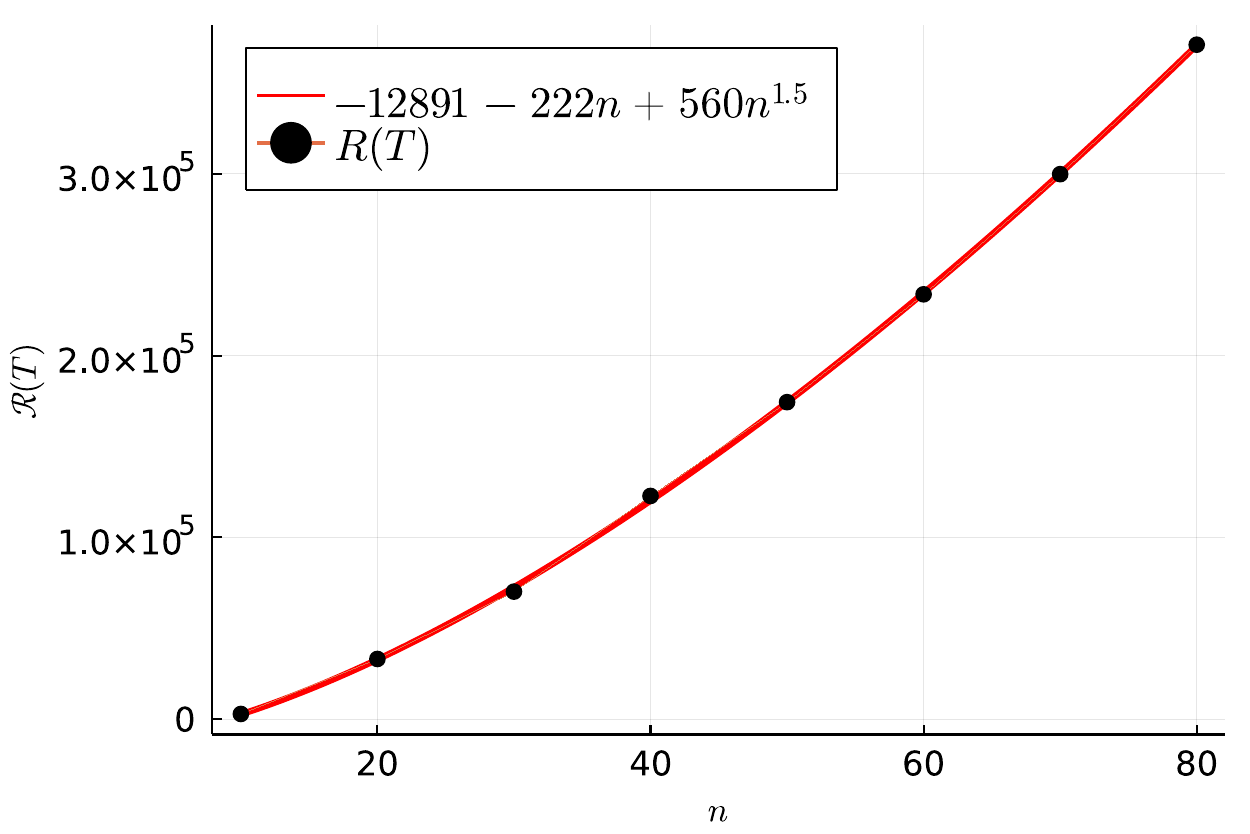}
			\caption[]%
			{{$\REGRET(T)$ vs. $n$}}    
			\label{fig:reg-b2}
		\end{subfigure}
		\quad
		\begin{subfigure}[h]{0.225\textwidth}   
			\centering 
			\includegraphics[width=0.99\linewidth]{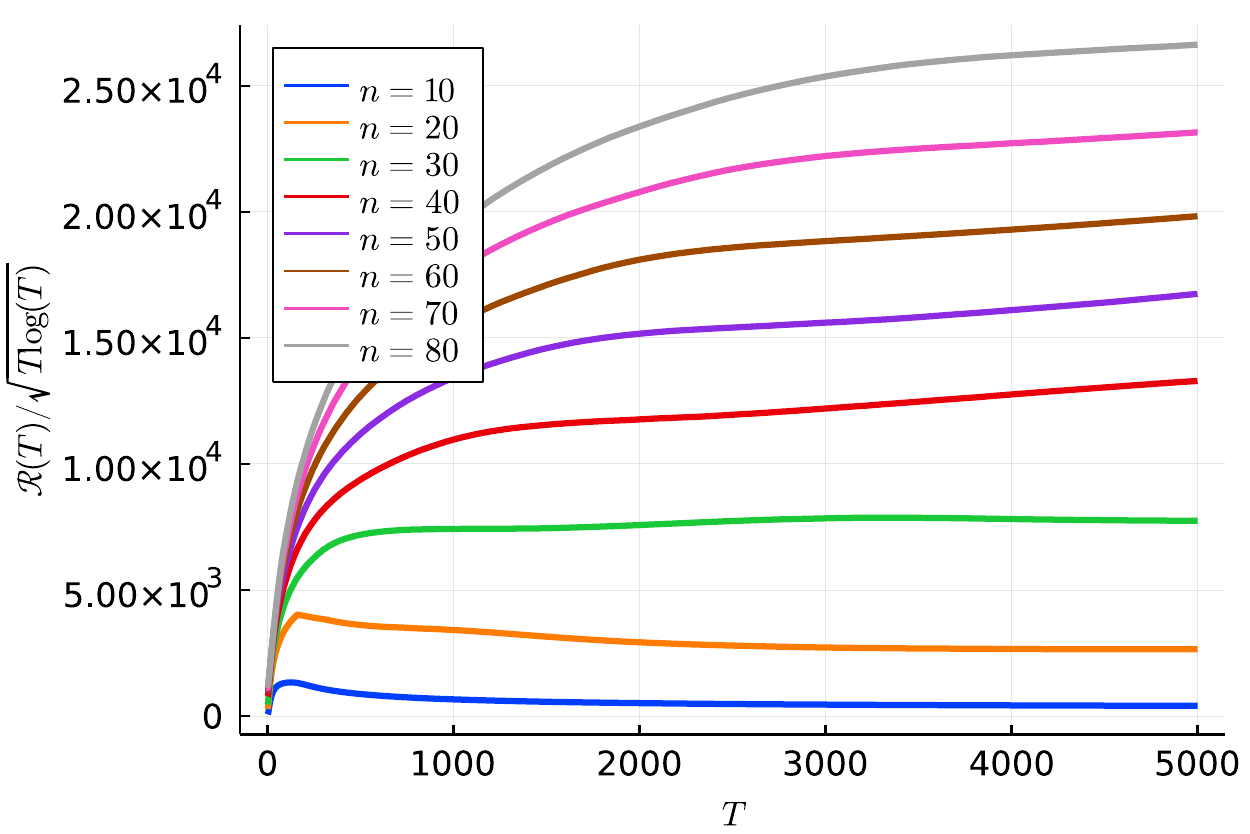}
			\caption[]%
			{{$\REGRET(T) / \sqrt{T}$ vs. $T$}}    
			\label{fig:reg-b3}
		\end{subfigure}
		\begin{subfigure}[h]{0.225\textwidth}
			\centering
			\includegraphics[width=0.99\linewidth]{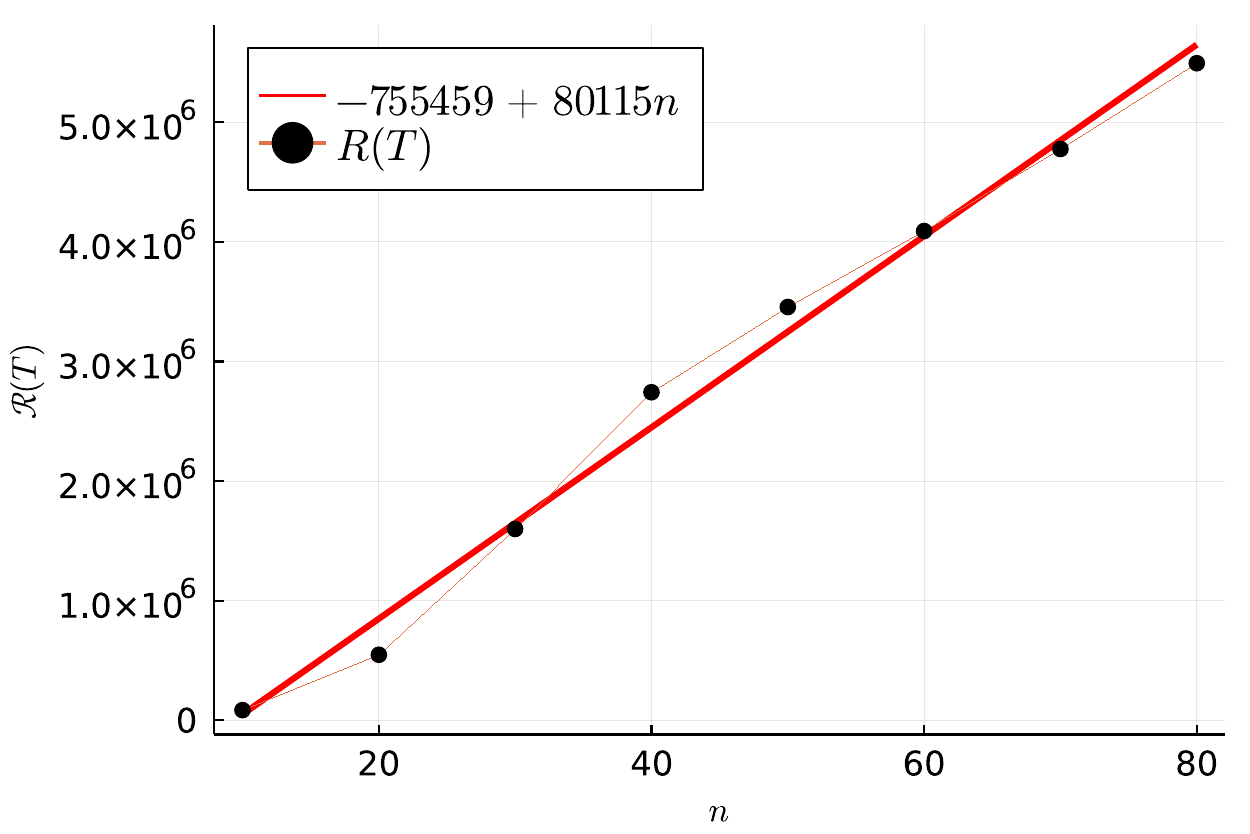}
			\caption[]%
			{{$\REGRET(T)$ vs. $n$}}    
			\label{fig:reg-bq1}
		\end{subfigure}
		\caption[]
		{Regret analysis of \texttt{RB-TSDE} and \texttt{QWI} method for Environment B. Note that \texttt{RB-TSDE} has an order of magnitude lower regret than \texttt{QWI}.}
		\label{fig:reg-BQ}
	\end{figure*}
	
	\subsection{Experimental Results} \label{subsec:exp}
	In our experiments, we pick a horizon of $T = 5,000$ and compute the Bayesian regret averaged over $250$ sample paths. We repeat the experiment for $n \in \{10, 20, \dots, 80 \}$. For each environment, we plot four curves: (a)~plot of $\mathcal{R}(T)/\sqrt{T}$ vs $T$ for \texttt{RB-TSDE}; (b)~plot of $\mathcal{R}(T)$ at $T=5,000$ vs $n$ for \texttt{RB-TSDE}; (c)~plot of $\mathcal{R}(T)/\sqrt{T}$ vs $T$ for \texttt{QWI}; and (d)~plot of $\mathcal{R}(T)$ at $T=5,000$ vs $n$ for \texttt{QWI}. For plots (b) and (d), we also fit the points with a parametric curve of the form $p_0 + p_1 n + p_2 n^{1.5}$ for Environment A and $p_0 + p_1 n$ for Environment B to obtain the scaling with number of arms. 
	
	The plots for Environment~A is shown in Fig.~\ref{fig:reg-AT}. The sub-plot~(a) shows that the regret essentially scales as $\sqrt{T}$ with time. The sub-plots~(b) show that the regret scales as $n^{1.5}$ with the number of arms. Thus, the results are consistent with regret bounds of Theorem~\ref{thm:lipschitz}.
	
	The plots for Environment~B is shown in Fig.~\ref{fig:reg-BQ}. The behavior of sub-plots~(a) and~(c) is the same as for Environment~A. Note that for larger values of~$T$, the plot of $\mathcal{R}(T)/\sqrt{T}$ has not yet converged to a straight line, but it is clear that these curves are upper bounded by a constant. The sub-plot~(b) shows that the regret scales linearly with the number of arms. 
	Note that in this case since $m = n-1$, Theorem~\ref{thm:main} suggests that the regret should scale as $n^{1.5}$ (see Remark~\ref{rem:scaling}) and the result is consistent with the theorem.
	
	Note that even though no regret analysis of the \texttt{QWI} algorithm for RBs is presented in~\cite{borkar2018learning,avrachenkov2020whittle,robledo2022qwi}, the above experiment suggests that empirically the performance of \texttt{QWI} has similar features as \texttt{RB-TSDE}. However, unlike the \texttt{QWI}, \texttt{RB-TSDE} does not require any hyper-parameter tuning. Moreover, \texttt{RB-TSDE} has an order of magnitude lower regret than \texttt{QWI}.
	
	
	\section{Discussion} \label{sec:discussion}
	
	\subsection{A relaxation of Assumption~\ref{assmp:3}}\label{sec:relax}
	
	Assumption~\ref{assmp:3} can be relaxed as follows. 
	
	\begin{assumption} \label{assmp:5}
		For every~$\theta$, there exists a positive integer $\tau^*$ and a real $\lambda^* \in (0, 1)$ such that $\lambda_{P^{\tau^*}_\theta} \leq \lambda^*$.
	\end{assumption}
	
	Based on this assumption, the result of Lemmas~\ref{lem:bound} changes as follows.
	
	\begin{lemma}\label{lem:bound5}
		Under Assumption~\ref{assmp:5}, for any $\theta \in \Theta$, $\SPAN(\bs{V}_\theta) \le 2 \tau^* \alpha R_{\max}/(1 - \lambda^*)$.
	\end{lemma}
	Consequently, we have the following changes.
	\begin{lemma} \label{lemma:R0_bound5}
		Under Assumptions~\ref{assump:indexable}, \ref{assump:acoe}, and \ref{assmp:5}, we have
		\begin{enumerate}
			\item $\displaystyle \REGRET_1(T) \leq  4 \frac{\tau^* \alpha R_{\max}}{1-\lambda^*} \Sqrt{ \bar {\mathsf{S}}_n T \log T}$.
			\item $\displaystyle \REGRET_2(T) \leq  12\sqrt{2} \frac{\tau^* \alpha R_{\max}}{1 - \lambda^*} \bigl( n + \bar {\mathsf{S}}_n \sqrt{T \log T} \bigr)$.
		\end{enumerate}
	\end{lemma}
	
	\begin{theorem}\label{thm:main5}
		Under Assumptions~\ref{assump:indexable}, \ref{assump:acoe}, and \ref{assmp:5}, the regret of {\tt RB-TSDE} is upper bounded as follows:
		\[
		\REGRET(T; {\tt RB-TSDE}) 
		< 40 \alpha \frac{R_{\max}}{1-\lambda^*} \bar {\mathsf{S}}_n \sqrt{T \log T}.
		\]
	\end{theorem}
	
	\begin{theorem}\label{thm:lipschitz5}
		Under Assumptions~\ref{assump:indexable}, \ref{assump:acoe}, \ref{assump:lipschitz}, and \ref{assmp:5} the regret of {\tt RB-TSDE} for both models is upper bounded as follows:
		\begin{multline*}
			\REGRET(T; {\tt RB-TSDE})
			< 12 \max\{ \alpha \sqrt{\bar {\mathsf{S}}_n}, \bar {\mathsf{S}}_n \} \\
			\max\Bigl\{ \frac{ \tau^* R_{\max}}{1-\lambda^*}, \mathsf{D}_{\max} \bs{L}_v \Bigr\} \sqrt{K T \log (T\max\{1, K'\})}.
		\end{multline*}
	\end{theorem}
	The proof steps of Lemmas~\ref{lem:bound5}, \ref{lemma:R0_bound5} are similar to the proof steps of Lemmas~\ref{lem:bound}, \ref{lemma:R0_bound} and the proof steps of Theorems~\ref{thm:main5} and \ref{thm:lipschitz5} are similar to the proof steps of Theorems~\ref{thm:main} and \ref{thm:lipschitz}. See \cite{akbarzadeh2022learning} for the details.
	
	\subsection{A set of sufficient conditions for Assumption~\ref{assump:lipschitz}}\label{sec:sufficient}
	
	\begin{assumption}\label{assump:arm}
		Suppose each arm~$i \in [n]$ is $(L^i_r, L^i_p)$ Lipschitz, i.e.,
		\begin{align*}
			L^i_r & = \sup_{s^i_{(1)}, s^i_{(2)}, a} \dfrac{|r^i(s^i_{(1)}, a) - r^i(s^i_{(2)}, a)|}{d^i(s^i_{(1)}, s^i_{(2)})}, \notag \\
			L^i_p & = \sup_{s^i_{(1)}, s^i_{(2)}, a} \dfrac{\mathcal{K}(P^i(\cdot|s^i_{(1)}, a), P^i(\cdot|s^i_{(2)}, a))}{d^i(s^i_{(1)}, s^i_{(2)})}
		\end{align*}
		where $L^i_r < \infty$ and $L^i_p < 1$.
	\end{assumption}
	
	\begin{assumption}\label{assump:optimal}
		For all $\theta \in \Theta$, the Whittle index policy is optimal.
	\end{assumption}
	Assumption~\ref{assump:optimal} is satisfied in some instances such as: (i)~the rested multi-armed bandit setup described in Remark~\ref{rem:2} where only one arm can be activated at a time (i.e., $m=1$) and yield reward, and arms that are not activated remain frozen (i.e., $P(s^i_{+} | s^i, 0) = \delta_{s^i}(s^i_{+})$, where $\delta_{s^i}$ is the Dirac delta measure centered at $s^i$)~\cite{gittins1979bandit}; (ii)~the number of arms are asymptotically large~\cite{weber1990index}; (iii)~certain queuing models~\cite{lott2000optimality}.
	
	Moreover, we assume that the product measure on $\bs{S}$ is $\bs{d}(\bs{s},\bs{s'}) = \sum_{i \in [n]} d^i(s^i, s^{\prime,i})$. 
	
	\begin{lemma}[{\cite[Lemma 2]{robustness2022sinha}}]\label{lemma:lipschitz}
		Under Assumption~\ref{assump:arm}, the MDP~$\langle\bs{\mathcal{S}}, \bs{\mathcal{A}(m)}, \bs{P}, \bs{R}\rangle$ is $(\max_{i \in [n]} L^i_r, \max_{i \in [n]} L^i_p)$-Lipschitz i.e., 
		\begin{align*}
			\max_{\substack{\bs{s},\bs{s'} \in \bs{S} \\ \bs{a} \in \bs{A}(m)}}
			\frac{ | \bs{r}(\bs{s}, \bs{a}) - \bs{r}(\bs{s}',\bs{a}) |}
			{\bs{d}(\bs{s},\bs{s'})}
			&\le \max_{i \in [n]} L^i_r,
			\\
			\max_{\substack{\bs{s},\bs{s'} \in \bs{S} \\ \bs{a} \in \bs{A}(m)}}
			\frac{ \mathcal{K}(\bs{P}(\cdot | \bs{s}, \bs{a}) - \bs{P}(\cdot | \bs{s}',\bs{a}))}
			{\bs{d}(\bs{s},\bs{s'})}
			&\le \max_{i \in [n]} L^i_p.
		\end{align*}
	\end{lemma}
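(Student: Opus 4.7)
The lemma asserts two Lipschitz estimates for the aggregated MDP, one on the per-step reward and one on the transition kernel in Kantorovich distance. Because $\bs{r}$ and $\bs{P}$ both factor across arms and $\bs{d}$ is the sum metric, the plan is to handle the two estimates in parallel, reducing each of them to its arm-wise analogue.

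For the reward bound I would just apply the triangle inequality coordinate-wise. Since $\bs{r}(\bs{s},\bs{a}) = \sum_{i \in [n]} r^i(s^i,a^i)$, bounding $|r^i(s^i,a^i)-r^i(s^{\prime,i},a^i)| \le L^i_r\, d^i(s^i,s^{\prime,i})$ arm by arm using Assumption~\ref{assump:arm} and summing yields
\[
|\bs{r}(\bs{s},\bs{a}) - \bs{r}(\bs{s}',\bs{a})| \le \sum_{i \in [n]} L^i_r\, d^i(s^i,s^{\prime,i}) \le \Bigl(\max_{i \in [n]} L^i_r\Bigr)\bs{d}(\bs{s},\bs{s}'),
\]
which is precisely the claimed reward Lipschitz bound; no obstacle here.

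The transition bound is the less routine of the two and is where I would focus the work. I would use the coupling characterization $\mathcal{K}(\mu,\nu) = \inf_\pi \EXP_\pi[d(X,Y)]$. For each arm $i$, let $\pi^i$ be an optimal coupling of $P^i(\cdot|s^i,a^i)$ and $P^i(\cdot|s^{\prime,i},a^i)$, and take $\bs{\pi} = \prod_{i \in [n]} \pi^i$. Independence across arms guarantees that $\bs{\pi}$ is a valid coupling of the product kernels $\bs{P}(\cdot|\bs{s},\bs{a})$ and $\bs{P}(\cdot|\bs{s}',\bs{a})$. Because $\bs{d}$ is the sum metric, the expected transportation cost under $\bs{\pi}$ decomposes coordinate-wise, so
\[
\mathcal{K}\bigl(\bs{P}(\cdot|\bs{s},\bs{a}), \bs{P}(\cdot|\bs{s}',\bs{a})\bigr) \le \sum_{i \in [n]} \mathcal{K}\bigl(P^i(\cdot|s^i,a^i), P^i(\cdot|s^{\prime,i},a^i)\bigr) \le \Bigl(\max_{i \in [n]} L^i_p\Bigr)\bs{d}(\bs{s},\bs{s}'),
\]
where the second inequality is Assumption~\ref{assump:arm} followed by pulling the maximum out of the sum.

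The only mildly technical ingredient is the sub-additivity of Kantorovich distance under product couplings; once this is in hand the lemma is immediate, and I do not expect any essential obstacle. Note that the action constraint $\bs{a} \in \bs{\mathcal{A}}(m)$ plays no role in either estimate, since both sides of the inequalities fix the same action profile, so the bound holds uniformly over the feasible action set; similarly, nothing about indexability or the Whittle index is used, the lemma being a purely structural consequence of arm-wise Lipschitzness combined with the product/sum structure of the aggregated model.
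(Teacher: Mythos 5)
Your proposal is correct, but note that the paper itself gives no proof of this lemma: it is imported verbatim as Lemma~2 of the cited reference, so there is nothing internal to compare against. Your reward bound is exactly the obvious coordinate-wise triangle inequality and is fine. For the transition bound, the one point worth flagging is that the paper defines $\mathcal{K}$ by the dual (Kantorovich--Rubinstein) formula, as a supremum over $1$-Lipschitz test functions, whereas you invoke the primal coupling characterization $\mathcal{K}(\mu,\nu)=\inf_\pi \EXP_\pi[d(X,Y)]$ and the tensorization of couplings. On the finite state spaces of this paper the two formulations coincide, so your argument is valid, but it silently uses duality; if you want a proof that works directly from the paper's definition, replace the product coupling by a telescoping ``hybrid'' argument: for a $1$-Lipschitz $f$ write the difference of expectations as a sum over $i$ of terms in which only the $i$-th marginal is switched from $P^i(\cdot|s^i,a^i)$ to $P^i(\cdot|s^{\prime,i},a^i)$, observe that $f$ with the other coordinates integrated out is $1$-Lipschitz in $d^i$, and bound each term by $\mathcal{K}\bigl(P^i(\cdot|s^i,a^i),P^i(\cdot|s^{\prime,i},a^i)\bigr)\le L^i_p\,d^i(s^i,s^{\prime,i})$. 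Either route yields the same subadditivity $\mathcal{K}(\bs{P},\bs{P}')\le\sum_i\mathcal{K}(P^i,P^{\prime,i})$ and hence the claimed bound; the coupling route is shorter, the dual route is self-contained relative to the paper's definitions. Your closing remarks that the fixed action profile and indexability play no role are accurate.
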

	
	An immediate consequence of Lemma~\ref{lemma:lipschitz} is the following.
	\begin{lemma} \label{lemma:LV}
		Under Assumptions~\ref{assump:arm} and~\ref{assump:optimal}, $\bs{V}_\theta$ is Lipschitz with the Lipschitz constant bounded by
		\[
		\bs{L}_v \le (\max_{i \in [n]} L^i_r)/(1 - \max_{i \in [n]} L^i_p).
		\]
		Thus, Assumptions~\ref{assump:arm} and~\ref{assump:optimal} imply Assumption~\ref{assump:lipschitz}.
	\end{lemma}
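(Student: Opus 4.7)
The plan is to combine Lemma~\ref{lemma:lipschitz} with the Bellman optimality equation. By Lemma~\ref{lemma:lipschitz}, the joint MDP is $(L_r,L_p)$-Lipschitz with $L_r=\max_{i\in[n]}L^i_r$ and $L_p=\max_{i\in[n]}L^i_p<1$. Under Assumption~\ref{assump:optimal}, the Whittle index policy $\bs{\mu}_\theta$ is optimal, so $\bs{V}_\theta$ is the optimal bias function and satisfies the average-reward Bellman \emph{optimality} equation
\[
J_\theta+\bs{V}_\theta(\bs{s})=\max_{\bs{a}\in\bs{\mathcal{A}}(m)}\bigl\{\bs{r}(\bs{s},\bs{a})+[\bs{P}_{\bs{a}}\bs{V}_\theta](\bs{s})\bigr\},
\qquad \forall \bs{s}\in\bs{\mathcal{S}}.
\]

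For any two states $\bs{s},\bs{s}'\in\bs{\mathcal{S}}$, the key step is to let $\bs{a}^\star$ attain the maximum at $\bs{s}$ and note that the RHS at $\bs{s}'$ is at least its value at the (possibly suboptimal) action $\bs{a}^\star$. Subtracting gives
\[
\bs{V}_\theta(\bs{s})-\bs{V}_\theta(\bs{s}')\le \bigl[\bs{r}(\bs{s},\bs{a}^\star)-\bs{r}(\bs{s}',\bs{a}^\star)\bigr]+\bigl[[\bs{P}_{\bs{a}^\star}\bs{V}_\theta](\bs{s})-[\bs{P}_{\bs{a}^\star}\bs{V}_\theta](\bs{s}')\bigr].
\]
The reward difference is bounded by $L_r\bs{d}(\bs{s},\bs{s}')$ via Lemma~\ref{lemma:lipschitz}. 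For the transition term I would invoke the Kantorovich dual characterization: if $\bs{V}_\theta$ is Lipschitz with constant $\bs{L}_v$, then $\bigl|[\bs{P}_{\bs{a}^\star}\bs{V}_\theta](\bs{s})-[\bs{P}_{\bs{a}^\star}\bs{V}_\theta](\bs{s}')\bigr|\le \bs{L}_v\,\mathcal{K}(\bs{P}(\cdot|\bs{s},\bs{a}^\star),\bs{P}(\cdot|\bs{s}',\bs{a}^\star))\le \bs{L}_v L_p\bs{d}(\bs{s},\bs{s}')$. A symmetric argument (swapping the roles of $\bs{s}$ and $\bs{s}'$) and taking the supremum over $(\bs{s},\bs{s}')$ with $\bs{s}\neq\bs{s}'$ yields the fixed-point inequality $\bs{L}_v\le L_r+L_p\bs{L}_v$, and since $L_p<1$ we conclude $\bs{L}_v\le L_r/(1-L_p)$.

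The main obstacle is that the argument above is circular unless one first establishes that $\bs{L}_v<\infty$. To close this gap, I would carry out the argument first on the sequence of relative value iterates: set $V_0\equiv 0$ and define $V_{k+1}(\bs{s})=\max_{\bs{a}}\{\bs{r}(\bs{s},\bs{a})+[\bs{P}_{\bs{a}}V_k](\bs{s})\}-J_\theta$. Under Assumption~\ref{assmp:3}, relative value iteration converges uniformly to $\bs{V}_\theta$ (up to an additive constant), so pointwise limits preserve Lipschitz constants. By the same one-step argument applied to the max of Lipschitz quantities (using that the maximum of $L$-Lipschitz functions is $L$-Lipschitz), induction gives that $V_k$ is Lipschitz with constant $L_r(1+L_p+\cdots+L_p^{k-1})\le L_r/(1-L_p)$. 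Passing to the limit yields the claimed bound, and hence Assumption~\ref{assump:lipschitz} holds with $\bs{L}_v\le L_r/(1-L_p)$.
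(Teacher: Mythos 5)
Your proof is correct, but it takes a more self-contained route than the paper: the paper's entire proof is a one-line citation of Lemma~\ref{lemma:lipschitz} together with \cite[Theorem 4.2]{hinderer2005lipschitz}, whereas you essentially reconstruct the argument behind that cited theorem. Your key steps are all sound: you correctly identify that Assumption~\ref{assump:optimal} is what lets you pass from the policy-evaluation equation~\eqref{eqn:Bellman} to the Bellman \emph{optimality} equation (without which the one-step comparison at a common action $\bs{a}^\star$ would not be available), the Kantorovich-dual bound on the transition term is exactly Lemma~\ref{lem:LipK}, and the induction $L_{V_{k+1}} \le L_r + L_p L_{V_k}$ with $V_0 \equiv 0$ correctly yields the geometric-series bound $L_r/(1-L_p)$, which survives pointwise limits. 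Two minor remarks. First, since the paper works with finite state spaces, every function on $\bs{\mathcal{S}}$ has a finite Lipschitz constant, so the circularity you worry about is not actually present: the direct fixed-point inequality $\bs{L}_v \le L_r + L_p \bs{L}_v$ already closes because $\bs{L}_v < \infty$ a priori. Your value-iteration detour is therefore valid but not strictly necessary here (it does make the argument robust to non-compact state spaces). Second, your appeal to convergence of relative value iteration under Assumption~\ref{assmp:3} is slightly loose: that assumption controls only the ergodicity coefficient of the closed-loop kernel $\bs{P}_\theta$, whereas span-contraction of the Bellman \emph{optimality} operator requires the policy-independent contraction factor $\lambda' < 1$ discussed after Assumption~\ref{assmp:3}; this is standard but worth flagging, especially since the lemma as stated invokes only Assumptions~\ref{assump:arm} and~\ref{assump:optimal}. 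What your approach buys is transparency about exactly where each hypothesis enters; what the paper's citation buys is brevity and delegation of these regularity issues to a reference that handles them carefully.
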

	\begin{proof}
		The result follows from Lemma~\ref{lemma:lipschitz} and~\cite[Theorem 4.2]{hinderer2005lipschitz}.
	\end{proof}
	

	\subsection{Regret with respect to the optimal policy} \label{subsec:optimal}
	
	We measure regret with respect to the Whittle index policy. For models where Assumption~\ref{assump:optimal} is satisfied, which will be described later, the Whittle index policy is an optimal policy. Even when the assumption is not satisfied, it is possible to generalize the results of this paper to identify the regret with respect to the optimal policy. 
	In particular, let  $\bs{\psi}^*_\theta$ denote the optimal policy for model $\theta \in \Theta$. Then, the Bayesian regret of a learning algorithm $\pi$ with respect to the optimal policy is 
	\begin{equation}\label{eqn:regret-opt} 
		\REGRET^*(T; {\bs \pi}) = \EXP^{\bs \pi}\biggl[ T {\bs J}(\bs{\psi}^*_\theta) - \sum_{t = 1}^T \bs{r}({\bs S}_t, {\bs A}_t) \biggr].
	\end{equation}
	Then, in principle, we can replace the distributed implementation presented in Algorithm~\ref{Alg:RB-TSDE} with a modified centralized implementation where the learner observes the state of all arms and maintains the posterior $\phi^i_t$ for all $i \in [n]$. At the beginning of each episode, the learner samples $\theta^i_{t_k}$ from $\phi^i_{t_k}$, computes the policy $\mu_{t_k}$, which is optimal for the sampled model $(\theta^1_{t_k}, \dots, \theta^n_{t_k})$, and plays $\mu_{t_k}$ for the rest of the episode. The regret of this variant will be the same as the bounds in Theorems~\ref{thm:main} and~\ref{thm:lipschitz}. However, we do not present such an analysis here because it makes the resulting algorithm impractical as computing the optimal policy is intractable when there are more than a few arms.

	\section{Conclusion} \label{sec:conclusion}
	
	In this paper, we present a Thompson-sampling based reinforcement learning algorithm for restless bandits. We show that the Bayesian regret of our algorithm with respect to an oracle that applies the Whittle index policy of the true model is either $\tildeO(nm \sqrt{T})$, $\tildeO(n^2 \sqrt{T})$, $\tildeO(n^{1.5} \sqrt{T})$ or $\tildeO(n \sqrt{T})$ depending on assumptions on the model. These are in contrast to naively using any standard RL algorithm, which will have a regret that scales exponentially in $n$. Our results are also applicable to the rested multi-armed bandit setting, where the Whittle index policy is the same as the Gittins index and is optimal. All in all, our results illustrate that a learning algorithm which leverages the structure of the model can significantly improve regret compared to model-agnostic algorithms. 
	

	\bibliographystyle{IEEEtran}
	\bibliography{IEEEabrv,mybibfile}
	
	\onecolumn
	\appendix
	
	\section{Proof of Regret bounds (Theorems~\ref{thm:main} and \ref{thm:lipschitz})} \label{app:proof}
	
	\subsection{Bound on the number of episodes} \label{app:KT-proof}
	We first start by establishing a bound on the number of episodes. The proof has the same structure as the standard method to establish a bound on the number of macro episodes~\cite[Lemma~2]{ouyang2017learning} but certain steps are different due to the difference in the algorithm.
	
	\begin{lemma}\label{lemma:KT_bound_general}
		The number of episodes $K_T$ is bounded as follows:
		\[
		K_T \leq 2 \Sqrt{\bar {\mathsf{S}}_n T \log (T)}.
		\]
	\end{lemma}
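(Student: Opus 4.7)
The plan is to adapt the classical macro-episode argument of Ouyang et al.~\cite{ouyang2017learning} to the per-arm stopping rule of \texttt{RB-TSDE}. I will group consecutive episodes that terminate because of the length criterion ($t - t_k > T_{k-1}$) into a macro-episode, whose final episode terminates because of the doubling criterion ($N^i_t(s^i, a^i) > 2N^i_{t_k}(s^i, a^i)$ for some $(i,s^i,a^i)$). Let $M$ denote the total number of macro-episodes and $K_j$ the number of episodes inside macro-episode~$j$, so that $K_T = \sum_{j=1}^{M} K_j$.

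First, I would verify the standard observation that within a macro-episode, the length criterion forces episode lengths to grow by exactly one per episode: if episodes $k$ and $k+1$ lie in the same macro-episode and episode $k$ ended by the length criterion, then $T_k = T_{k-1} + 1$. Hence the total time $T^{(j)}$ spent inside macro-episode~$j$ satisfies
\[
T^{(j)} \;\ge\; \sum_{l=0}^{K_j - 1}(l+1) \;=\; \frac{K_j(K_j+1)}{2} \;\ge\; \frac{K_j^2}{2}.
\]
Summing over~$j$ gives $\sum_j K_j^2 \le 2T$, and Cauchy--Schwarz then yields
\[
K_T \;=\; \sum_{j=1}^{M} K_j \;\le\; \sqrt{M \sum_{j=1}^{M} K_j^2} \;\le\; \sqrt{2MT}.
\]

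Next I would bound $M$. Except possibly for the very first macro-episode, every macro-episode is closed out by some triple $(i, s^i, a^i)$ whose visit count more than doubles between $t_{k}$ and $t_{k+1}$. For a fixed triple $(i,s,a)$, the count $N^i_t(s,a)$ is non-decreasing, starts at $0$, and is bounded by $T$, so it can more than double at most $\lceil \log_2 T\rceil + 1$ times over the horizon. Since there are $2\bar{\mathsf{S}}_n$ such triples in total (two actions per state across all arms), we get
\[
M \;\le\; 1 + 2\bar{\mathsf{S}}_n \bigl(\lceil \log_2 T\rceil + 1\bigr) \;\le\; C\, \bar{\mathsf{S}}_n \log T
\]
for an absolute constant~$C$ once $T$ is not too small. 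Substituting back into the Cauchy--Schwarz bound gives $K_T \le \sqrt{2C\, \bar{\mathsf{S}}_n T \log T}$, and tracking constants as in Ouyang et al.~\cite{ouyang2017learning} recovers the stated $K_T \le 2\sqrt{\bar{\mathsf{S}}_n T \log T}$.

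The main subtlety, and the only real deviation from the TSDE proof, is in the macro-episode count: because \texttt{RB-TSDE} maintains a \emph{per-arm} visit counter rather than a joint one, the number of distinct triples that can trigger a macro-episode is $2\bar{\mathsf{S}}_n = 2\sum_i \mathsf{S}^i$ instead of $2\prod_i \mathsf{S}^i$. This is exactly where the exponential saving over the naive application of TSDE emerges, and it is the point at which I would be most careful to justify that doubling can be charged to a single arm's counter. The rest of the argument (the per-macro-episode length recursion and the Cauchy--Schwarz step) is mechanical.
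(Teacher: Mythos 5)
Your proposal is correct and follows essentially the same route as the paper: group episodes into macro-episodes whose last episode is triggered by the per-arm doubling criterion, obtain $K_T \le \sqrt{2MT}$ from the length recursion plus Cauchy--Schwarz, and bound the number of macro-episodes by summing doubling events over the $2\bar{\mathsf{S}}_n$ arm--state--action triples, which is exactly where the paper also harvests the $\sum_i \mathsf{S}^i$ versus $\prod_i \mathsf{S}^i$ saving. The only (cosmetic) difference is that the paper bounds the doubling count of each triple by $\log N^i_{T+1}(s^i,a^i)$ and then applies Jensen's inequality to get $M \le 2\bar{\mathsf{S}}_n\log T$, whereas your uniform $\lceil\log_2 T\rceil+1$ per triple is slightly looser and would yield a marginally larger leading constant.
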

	\begin{proof}
		Define macro episodes with start times $t_{n_l}$, $l = 1, 2, \ldots$ with $t_{n_1} = t_1$ and \[ t_{n_{l+1}} = \min \{ t_k > t_{n_l}: N^i_{t_k}(s^i, a^i) > 2 N^i_{t_{k-1}}(s^i, a^i) \text{ for some } (i, s^i, a^i) \}. \]
		Let $\gamma$ be the number of macro episodes until time $T$ and define $n_{(\gamma+1)} = K_T + 1$. The rest of the proof is the same as \cite[Eq. (8) in proof of Lemma 1]{ouyang2017learning} by which we get $K_T \leq \sqrt{2 \gamma T}$.
		
		For each arm-state-action tuple, define
		\[ \gamma^i(s^i, a^i) = |\{ k \leq K_T | N^i_{t_k}(s^i, a^i) > 2 N^i_{t_{k-1}}(s^i, a^i) \}|. \]
		As a result $\gamma^i(s^i, a^i) \leq \log N^i_{T+1}(s^i, a^i)$. Note that for any $i \in [n]$, $N^i_{T+1}(s^i, a^i) \leq T$ and we have $2\mathsf{S}^i$ state-action pairs.
		Then, we have
		\begin{align*}
			\gamma & \leq 1 + \sum_{i \in [n]} \sum_{(s^i, a^i)} \gamma^i(s^i, a^i) \leq 1 + \sum_{i \in [n]} \sum_{(s^i, a^i)} \log N^i_{T+1}(s^i, a^i) \\
			& = 1 + \sum_{i \in [n]} 2 \mathsf{S}^i \log T \leq 2 \bar {\mathsf{S}}_n \log T.
		\end{align*}
	\end{proof}
	
	\subsection{Bound on $\REGRET_0(T)$ (Lemma~\ref{lemma:R0_bound}.1)}
	
	We first state a basic property of Thompson sampling algorithms.
	\begin{lemma}[Thompson sampling Lemma~\cite{russo2014learning}]\label{lem:TS}
		Suppose the true parameters $\theta$ and the estimated ones $\theta_k$ have the same distribution given the same history~$\mathcal{H}$. For any $\mathcal{H}$-measurable function~$f$, we have 
		\[ \EXP[f(\theta)|\mathcal{H}] = \EXP[f(\theta_k)|\mathcal{H}]. \]
	\end{lemma}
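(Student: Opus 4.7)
The plan is to derive the identity directly from the defining property of conditional expectation, using only the hypothesis that the regular conditional distributions of $\theta$ and $\theta_k$ given $\mathcal{H}$ coincide almost surely. The argument is purely measure-theoretic; it uses no structure of the restless bandit model, and in particular does not depend on which policy is in force or on the sampling mechanism beyond the stated distributional equality. This is precisely the reason the lemma is so broadly usable inside the TSDE regret decomposition.

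First, I would make the dependence of $f$ on $\mathcal{H}$ explicit. Since $f$ is $\mathcal{H}$-measurable (as a random function of $\theta$), a standard measurable-selection argument lets us write $f(\theta) = g(\theta, Y)$, where $Y$ is an $\mathcal{H}$-measurable random variable carrying the $\mathcal{H}$-randomness of $f$ and $g$ is a jointly measurable deterministic function. This step isolates the $\mathcal{H}$-dependence so that, once we condition on $\mathcal{H}$, the second argument of $g$ is effectively a constant.

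Second, I would invoke a standard disintegration of the joint law to obtain
\[
\EXP[f(\theta) \mid \mathcal{H}](\omega) = \int g(\theta, Y(\omega)) \, \PR(d\theta \mid \mathcal{H})(\omega),
\]
and similarly
\[
\EXP[f(\theta_k) \mid \mathcal{H}](\omega) = \int g(\theta, Y(\omega)) \, \PR_k(d\theta \mid \mathcal{H})(\omega),
\]
where $\PR(\cdot \mid \mathcal{H})$ and $\PR_k(\cdot \mid \mathcal{H})$ denote the regular conditional distributions of $\theta$ and of $\theta_k$ given $\mathcal{H}$. By the hypothesis of the lemma these two conditional distributions agree almost surely, so the two integrals above are equal $\omega$-wise almost surely, which gives the stated identity.

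I do not expect any serious obstacle. The only subtlety is keeping the meaning of "$\mathcal{H}$-measurable function $f$" precise, which the decomposition $f(\theta) = g(\theta, Y)$ handles cleanly; after that, the identity is just the statement that conditional expectation depends on the random variable only through its conditional distribution. Since $\theta_k$ is sampled from the posterior $\phi^i_{t_k}$ used inside \texttt{RB-TSDE}, this posterior-sampling property is exactly what will supply the hypothesis whenever the lemma is applied in the subsequent regret bounds.
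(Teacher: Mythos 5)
Your proof is correct, but note that the paper does not actually prove this lemma at all: it is stated as a known result and attributed to \citet{russo2014learning}, so there is no in-paper argument to compare against. Your self-contained derivation is the standard one and is sound. The two ingredients you use --- (i) a Doob--Dynkin-type factorization $f(\theta)=g(\theta,Y)$ with $Y$ an $\mathcal{H}$-measurable random element isolating the randomness of $f$, and (ii) the disintegration (``freezing'') identity $\EXP[g(\theta,Y)\mid\mathcal{H}](\omega)=\int g(u,Y(\omega))\,\PR(\theta\in du\mid\mathcal{H})(\omega)$ --- are both valid here because $\Theta=\prod_i\Theta^i$ is a compact subset of a Euclidean space (hence a Borel space, so regular conditional distributions exist) and the functions to which the lemma is applied in the regret analysis ($J_\theta$, transition kernels, indicator-type quantities) are bounded, so integrability is not an issue. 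Once the two regular conditional laws agree almost surely, the two integrals agree $\omega$-wise almost surely, which is exactly the claimed identity. What your approach buys over the paper's bare citation is a precise statement of what ``$\mathcal{H}$-measurable function $f$'' must mean for the lemma to be applied legitimately (a random function whose $\mathcal{H}$-dependence factors through an $\mathcal{H}$-measurable variable), which is the point most often glossed over when this lemma is invoked in Thompson-sampling regret proofs such as the bounds on $\REGRET_0(T)$ and in Lemma~\ref{lem:ineq}.
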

	
	Now we consider $\REGRET_0(T)$. Let $\tilde{J}_\star = \alpha R_{\max} - J_\star$ and $\tilde{J}_{k} = \alpha R_{\max} - J_k$. By Lemma~\ref{lem:bound}, we have that $\tilde J_\star, \tilde J_k \in [0, R_{\max}]$. Therefore, 
	\begin{align*}
		\REGRET_{0}(T) & = \EXP\biggl[ T J_\star - \sum_{k=1}^{K_T} T_k J_k \biggr]
		= \EXP\biggl[ \sum_{k=1}^{K_T} T_k \tilde J_k - T \tilde J_\star \biggr]
		\notag = \sum_{k=1}^\infty \EXP\biggl[ \IND(\{t_k \le T \}) T_k \tilde J_k \biggr] - T \EXP\bigl[\tilde J_\star \bigr]
		\notag \\
		& \stackrel{(a)}\le \sum_{k=1}^\infty \EXP\biggl[ \IND(\{t_k \le T \}) (T_{k-1} + 1) \tilde J_k \biggr] - T \EXP\bigl[\tilde J_\star \bigr] \mathsf{D}_{\max}\stackrel{(b)}\le \sum_{k=1}^\infty \EXP\biggl[ \IND(\{t_k \le T \}) (T_{k-1} + 1) \tilde J_\star \biggr] - T \EXP\bigl[\tilde J_\star \bigr]
		\notag \\
		& = \EXP\biggl[ \sum_{k = 1}^{K_T} (T_{k-1} + 1) \tilde{J}_\star \biggr] - T \EXP\bigl[ \tilde{J}_\star \bigr] \stackrel{(c)}\le \EXP[ K_T \tilde J_\star ] \stackrel{(d)}\le \alpha R_{\max} \EXP[K_T],
	\end{align*}
	where $(a)$ uses the fact that due to the first stopping criterion, $T_k \le T_{k-1} + 1$; $(b)$ uses the Thompson sampling Lemma (Lemma~\ref{lem:TS}); $(c)$ uses the fact that $\sum_{k = 1}^{K_T} T_{k-1} \leq T$; and $(d)$ uses Lemma~\ref{lem:bound}. The result then follows by substituting the value of $K_T$ from Lemma~\ref{lemma:KT_bound_general}.
	
	\subsection{Bound on $\REGRET_1(T)$ (Lemma~\ref{lemma:R0_bound}.2)}
	$\REGRET_1(T)$ is a telescoping sum, which can be simplified as follows:
	\[
	\REGRET_1(T) = \EXP\bigg[ \sum_{k = 1}^{K_T} \sum_{t = t_k}^{t_{k+1}-1} \big[ {\bs V}_k(\bs S_{t+1}) - {\bs V}_k(\bs S_{t}) \big] \bigg] 
	= \EXP\bigg[ \sum_{k = 1}^{K_T} \big[ {\bs V}_k(\bs S_{t_{k+1}}) - {\bs V}_k(\bs S_{t_{k}}) \big] \bigg] \leq 2 \frac{\alpha R_{\max}}{1 - \lambda^*} \EXP[ K_T ]
	\]
	where the last inequality uses Lemma~\ref{lem:bound}. The result then follows by substituting the value of $K_T$ from Lemma~\ref{lemma:KT_bound_general}.
	
	\subsection{Bound on $\REGRET_2(T)$ (Lemma~\ref{lemma:R0_bound}.3)} \label{app:r2-proof}
	
	\subsubsection{Notation.} \label{subsubsec:note}
	For any arm~$i \in [n]$, let $N^i_t(s^i, a^i, s^i_{+})$ denote the number of times $(S_\tau, A_\tau, S_{\tau + 1})$ is equal to $(s^i, a^i, s^i_{+})$ until time~$t$. Let $\hat P^i_t(s^i_{+} | s^i, a^i) = N^i_t(s^i, a^i, s^i_{+})/\bigl( 1 \vee N^i_t(s^i, a^i) \bigr)$ denote the empirical distribution based on observations up to time~$t$. For the ease of notation, for a given $\delta \in (0,1)$, we define
	\begin{equation}\label{eq:eps}
		\epsilon^i_\delta(\ell) = 
        \sqrt{\frac{2 \mathsf{S}^i \log(1/\delta)}{1 \vee \ell}}.
	\end{equation}
	
	\subsubsection{Some preliminary results.}
	We start by basic property of difference of two expected values.
	
	\begin{lemma}\label{lem:TV}
		Let $p, q \in \Delta(\bs{\mathcal{S}})$. Then, for any function $f \colon \bs{\mathcal{S}} \to \reals$, we have
		\[
		\bigl| \bigl\langle f, p \bigr\rangle
		-
		\bigl\langle f, q \bigr\rangle \bigr|
		\le \frac12 \SPAN(f) 
		\bigl\lVert p - q \bigr\rVert_1.
		\]
	\end{lemma}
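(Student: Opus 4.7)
The plan is to exploit the basic fact that $p$ and $q$ are probability measures, so $\langle c, p\rangle = \langle c, q\rangle = c$ for any constant $c$. Consequently, the difference $\langle f, p\rangle - \langle f, q\rangle$ is invariant under shifting $f$ by an arbitrary additive constant. This is exactly the setting where the $\SPAN$ semi-norm, rather than the $\ell^\infty$ norm, is the natural object.

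Concretely, I would first rewrite
\[
\bigl\langle f, p\bigr\rangle - \bigl\langle f, q\bigr\rangle
= \bigl\langle f - c, p - q\bigr\rangle
\]
for any constant $c \in \reals$, and then choose $c$ to be the midpoint of the range of $f$, namely $c = \tfrac12\bigl(\max_{\bs{s}} f(\bs{s}) + \min_{\bs{s}} f(\bs{s})\bigr)$. With this choice,
\[
\lVert f - c\rVert_\infty = \tfrac12 \bigl(\max_{\bs{s}} f(\bs{s}) - \min_{\bs{s}} f(\bs{s})\bigr) = \tfrac12 \SPAN(f).
\]

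The final step is the standard $\ell^1$–$\ell^\infty$ duality (Hölder's inequality in the discrete setting):
\[
\bigl|\bigl\langle f - c, p - q\bigr\rangle\bigr|
\le \lVert f - c\rVert_\infty \, \lVert p - q\rVert_1
\le \tfrac12 \SPAN(f) \, \lVert p - q\rVert_1,
\]
which gives the claim. I do not expect any real obstacle here; this is a one-line calculation once the shift-by-midpoint trick is identified. The only thing worth being careful about is the sign convention on $\SPAN$ (defined as $\max - \min$, so that $\SPAN(f)/2$ is the radius of the range of $f$), and making sure the argument is stated for real-valued $f$ so that $\max$ and $\min$ are attained on the finite set $\bs{\mathcal{S}}$.
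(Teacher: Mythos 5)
Your proof is correct and follows essentially the same route as the paper: shift $f$ by the midpoint $\bar f = (\max f + \min f)/2$ of its range, exploit that $p$ and $q$ are probability measures so the constant shift cancels, and finish with the $\ell^1$--$\ell^\infty$ H\"older bound. No further comment is needed.
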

	\begin{proof}
		Let $\bar f = ( \max f + \min f)/2$. Then
		\begin{align*}
			\hskip 2em \hskip -2em
			\bigl| \bigl\langle f, p \bigr\rangle
			-
			\bigl\langle f, q \bigr\rangle \bigr|
			& =
			\bigl| \bigl\langle f - \bar f, p \bigr\rangle
			-
			\bigl\langle f - \bar f, q \bigr\rangle \bigr|
			= 
			\bigl| \bigl\langle f - \bar f, p - q \bigr\rangle \bigr|
			\le 
			\| f - \bar f \|_{\infty} \bigl| \bigl\langle \bs{1}, p - q \bigr\rangle \bigr|
			\\
			&\le
			\tfrac12 \SPAN(f) 
			\bigl| \bigl\langle \bs{1}, p - q \bigr\rangle \bigr|
			=
			\tfrac12 \SPAN(f) 
			\bigl\lVert p - q \bigr\rVert_1.
		\end{align*}
	\end{proof}
	
	\begin{lemma}\label{lem:ineq}
		Consider any arm~$i$, episode~$k$, $\delta \in (0,1)$, $\ell > 1$ and state-action pair $(s^i,a^i)$. 
		Define events $\mathcal{E}^i_{\ell} = \{ N^i_{t_k}(s^i, a^i) = \ell \}$ and $\mathcal{F}^i = \{ \lVert P^i(\cdot \, | s^i, a^i) - \hat P^i_{t_k}(\cdot \, | s^i, a^i) \rVert_1 \leq \epsilon_\delta(N^i_{t_k}(s^i, a^i)) \}$, and $\mathcal{F}^i_k = \{ \lVert P^i_k(\cdot \, | s^i, a^i) - \hat P^i_{t_k}(\cdot \, | s^i, a^i) \rVert_1 \leq \epsilon_\delta(N^i_{t_k}(s^i, a^i)) \}$. Then, we have
		\begin{align}
			\PR\Bigl( \bigl\lVert P^i(\cdot \, | s^i, a^i) - \hat P^i_{t_k}(\cdot \, | s^i, a^i) \bigr\rVert_1
			&>
            \epsilon^i_\delta(\ell) \Bigm| \mathcal{E}^i_{\ell}
			\Bigr) \le \delta,
			\label{eq:ineq-1} \\
			\PR\Bigl( \bigl\lVert P^i_k(\cdot \, | s^i, a^i) - \hat P^i_{t_k}(\cdot \, | s^i, a^i) \bigr\rVert_1
			&>
            \epsilon^i_\delta(\ell) \Bigm| \mathcal{E}^i_{\ell}
			\Bigr) \le \delta, \label{eq:ineq-2}
		\end{align}
        where $\epsilon^i_{\delta}(\ell)$ is given by~\eqref{eq:eps}. The above inequalities imply that
		\begin{align}
			\EXP\Bigl[ \bigl\lVert P^i(\cdot \, | s^i, a^i) - \hat P^i_{t_k}(\cdot \, | s^i, a^i) \bigr\rVert_1 \Bigm| \mathcal{F}^i \Bigr]  
			&\le
            \EXP[\epsilon^i_\delta(N^i_{t_k}(s^i, a^i)) | \mathcal{F}^i] + 2\delta,
			\label{eq:ineq-3} \\
			\EXP\Bigl[ \bigl\lVert P^i_k(\cdot \, | s^i, a^i) - \hat P^i_{t_k}(\cdot \, | s^i, a^i) \bigr\rVert_1 \Bigm| \mathcal{F}^i_k \Bigr]
			&\le
            \EXP[\epsilon^i_\delta(N^i_{t_k}(s^i, a^i)) | \mathcal{F}^i_k] + 2\delta.
			\label{eq:ineq-4}
		\end{align}
	\end{lemma}
	\begin{proof}
      Given arm~$i$, state~$s^i$ of the arm and action~$a^i$ chosen for the arm, we know from \cite{weissman2003inequalities} that for any $\varepsilon > 0$, the $L1$-deviation of the true distribution and the empirical distribution over $\mathcal{S}^i$ states from $N^{i}_{t_k}(s^i, a^i) = \ell$ samples is bounded by 
        \[ 
        \mathbb{P}\Big( \lVert P^i(\cdot \, | s^i, a^i) - \hat P^i_{t_k}(\cdot \, | s^i, a^i) \rVert_1 \geq \varepsilon \Bigm| \mathcal{E}^i_{\ell} \Big) 
		\leq 2^{\mathsf{S}^i} \exp\biggl( - \dfrac{\ell \varepsilon^2}{2} \biggr) 
		< \exp\biggl( \mathsf{S}^i - \dfrac{\ell \varepsilon^2}{2} \biggr).
		\]
        Therefore, setting $\delta = \exp(S^i - \ell \varepsilon^2/2)$, we get
		\[
		\PR\Biggl( \bigl\lVert P^i(\cdot \, | s^i, a^i) - \hat P^i_{t_k}(\cdot \, | s^i, a^i) \bigr\rVert_1
		>
		\sqrt{\frac{2 (\mathsf{S}^i + \log(1/\delta))}{1 \vee \ell}} 
		\Bigm| \mathcal{E}^i_{\ell} \Biggr) \le \delta.
		\]
		Note that, $\mathsf{S}^i \ge 2$, therefore $\mathsf{S^i} + \log(1/\delta) \le \mathsf{S^i} \log(1/\delta)$. Hence,
		\begin{multline*}
			\PR\Biggl( \bigl\lVert P^i(\cdot \, | s^i, a^i) - \hat P^i_{t_k}(\cdot \, | s^i, a^i) \bigr\rVert_1
			>
			\sqrt{\frac{2 \mathsf{S}^i \log(1/\delta)}{1 \vee \ell}} 
			\Biggm| \mathcal{E}^i_{\ell} \Biggr) 
			\\ 
			<
			\PR\Biggl( \bigl\lVert P^i(\cdot \, | s^i, a^i) - \hat P^i_{t_k}(\cdot \, | s^i, a^i) \bigr\rVert_1 > 
			\sqrt{\frac{2 (\mathsf{S}^i + \log(1/\delta))}{1 \vee \ell}} 
			\Biggm| \mathcal{E}^i_{\ell} \Biggr) 
			\le \delta.
		\end{multline*}
		This proves~\eqref{eq:ineq-1}. Eq.~\eqref{eq:ineq-2} follows from the Thompson sampling Lemma (Lemma~\ref{lem:TS}). 
		
		To prove \eqref{eq:ineq-3} and \eqref{eq:ineq-4}, we first show
		\begin{align} \label{eqn:PFc_bound}
			P((\mathcal{F}^i)^c) & = \PR\Bigl( \bigl\lVert P^i(\cdot \, | s^i, a^i) - \hat P^i_{t_k}(\cdot \, | s^i, a^i) \bigr\rVert_1 > \epsilon^i_\delta(N^i_{t_k}(s^i, a^i)) \Bigr) \notag \\
			& = \sum_{\ell = 1}^{\infty} \PR\Bigl( \bigl\lVert P^i(\cdot \, | s^i, a^i) - \hat P^i_{t_k}(\cdot \, | s^i, a^i) \bigr\rVert_1 > \epsilon^i_\delta(N^i_{t_k}(s^i, a^i)) \Bigm| \mathcal{E}^i_{\ell} \Bigr) \PR(\mathcal{E}^i_{\ell}) \notag \notag \\
			& = \sum_{\ell = 1}^{\infty} \PR\Bigl( \bigl\lVert P^i(\cdot \, | s^i, a^i) - \hat P^i_{t_k}(\cdot \, | s^i, a^i) \bigr\rVert_1 > \epsilon^i_\delta(\ell) \Bigm| \mathcal{E}^i_{\ell} \Bigr) \PR(\mathcal{E}^i_{\ell}) \notag \\
			& \leq \sum_{\ell = 1}^{\infty} \delta \PR(\mathcal{E}^i_{\ell}) = \delta.
		\end{align}
		Now consider
		\begin{align*}
            \hskip 2em & \hskip -2em
			\EXP\Bigl[ \bigl\lVert P^i(\cdot \, | s^i, a^i) - \hat P^i_{t_k}(\cdot \, | s^i, a^i) \bigr\rVert_1 \Bigm| \mathcal{F}^i \Bigr]
			=
			\EXP\Bigl[ \bigl\lVert P^i(\cdot \, | s^i, a^i) - \hat P^i_{t_k}(\cdot \, | s^i, a^i) \bigr\rVert_1 \Bigm| \mathcal{F}^i \Bigr] \PR(\mathcal{F}^i) \notag \\
			& + \EXP\Bigl[ \bigl\lVert P^i(\cdot \, | s^i, a^i) - \hat P^i_{t_k}(\cdot \, | s^i, a^i) \bigr\rVert_1 \Bigm| (\mathcal{F}^i)^c \Bigr] \PR((\mathcal{F}^i)^c) \notag \\
			& \stackrel{(a)}\le 2 P((\mathcal{F})^c) + \EXP[\epsilon^i_\delta(N^i_{t_k}(s^i, a^i)) | \mathcal{F}^i] \notag \\
            & \stackrel{(b)}\le 2 \delta + \EXP[\epsilon^i_\delta(N^i_{t_k}(s^i, a^i)) | \mathcal{F}^i],
		\end{align*}
		where $(a)$ uses $\lVert \cdot \rVert_1 \le 2$ and $P(\mathcal{F}) \le 1$ and $(b)$ uses \eqref{eqn:PFc_bound}. This proves \eqref{eq:ineq-3}. Eq.~\eqref{eq:ineq-4} follows from a similar argument.
		%
	\end{proof}
	
	\begin{lemma} \label{lemma:Pdiff}
		Consider episode~$k$, $\delta \in (0,1)$, and joint state-action pair~$(\bs{s}, \bs{a})$. Define events $\mathcal{F}^i$ and $\mathcal{F}^i_k$ as in Lemma~\ref{lem:ineq}. Then we have 
		\begin{align*}
			\EXP\Bigl[ \bigl\lVert
			\bs{P}_\star( \cdot | \bs{s}, \bs{a})
			-
			\bs{P}_k(\cdot | \bs{s}, \bs{a}) 
			\bigr\rVert_1 \Bigr] 
			& = \sum_{i \in [n]} \EXP\Bigl[ \bigl\lVert {P}^{i}(\cdot \, | s^i, a^i) - P^i_{k}(\cdot \, | s^i, a^i) \bigr\rVert_1 \Bigr] \\ 
			& \le 4 n \delta + \sum_{i \in [n]} \left( \EXP\bigl[ \epsilon^i_\delta(N^i_{t_k}(s^i, a^i)) | \mathcal{F}^i \bigr] + \EXP\bigl[ \epsilon^i_\delta(N^i_{t_k}(s^i, a^i)) | \mathcal{F}^i_k \bigr] \right). 
		\end{align*}
	\end{lemma}
	\begin{proof}
		The first equality follows from \cite[Lemma 13]{jung2019thompson}. Then, the rest of the proof is as follows:
		\begin{align*}
			\EXP\Bigl[ \bigl\lVert
			\bs{P}_\star( \cdot | \bs{s}, \bs{a})
			-
			\bs{P}_k(\cdot | \bs{s}, \bs{a}) 
			\bigr\rVert_1 \Bigr] 
			& = \sum_{i \in [n]} \EXP\Bigl[ \bigl\lVert {P}^{i}(\cdot \, | s^i, a^i) - P^i_{k}(\cdot \, | s^i, a^i) \bigr\rVert_1 \Bigr] \notag \\
			& \leq \sum_{i \in [n]} \EXP\Bigl[ \bigl\lVert {P}^{i}(\cdot \, | s^i, a^i) - \hat P^i_{t_k}(\cdot \, | s^i, a^i) \bigr\rVert_1 + \bigl\lVert P^{i}_k(\cdot \, | s^i, a^i) - \hat P^i_{t_k}(\cdot \, | s^i, a^i) \bigr\rVert_1 \Bigr] \\
			&\le 4 n \delta + \sum_{i \in [n]} \left( \EXP\bigl[ \epsilon^i_\delta(N^i_{t_k}(s^i, a^i)) | \mathcal{F}^i \bigr] + \EXP\bigl[ \epsilon^i_\delta(N^i_{t_k}(s^i, a^i)) | \mathcal{F}^i_k \bigr] \right),
		\end{align*}
		where the first inequality follows from triangle inequality, and the second follows from Lemma~\ref{lem:ineq}.
	\end{proof}
	
	\subsubsection{Bounding $\REGRET_2(T)$.} \label{subsec:R2}
	Now, consider the inner summation in the expression for $\REGRET_2(T)$:
	\begin{align}
		\hskip 2em & \hskip -2em
		\EXP\Bigl[ 
		\bigl\langle 
		\bs{P}_k(\cdot \, | \bs{s}, \bs{a}), \bs{V}_k \bigr\rangle
		-
		\bs{V}_k(\bs{S}_{t+1})
		\Bigr]
		= 
		\EXP\Bigl[ 
		\bigl\langle 
		\bs{P}_k(\cdot \, | \bs{s}, \bs{a}), \bs{V}_k \bigr\rangle
		-
		\bigl\langle 
		\bs{P}_\star(\cdot \, | \bs{s}, \bs{a}), \bs{V}_k \bigr\rangle
		\Bigr]
		\notag \\
		&\stackrel{(a)}\le \EXP\Bigl[ \tfrac12 \SPAN(\bs{V}_k) 
		\bigl\lVert
		\bs{P}_k(\cdot | \bs{s}, \bs{a}) 
		-
		\bs{P}_\star( \cdot | \bs{s}, \bs{a})
		\bigr\rVert_1 \Bigr]
		\notag \\
		&\stackrel{(b)} \le \frac{\alpha R_{\max}}{1 - \lambda^*}
		\EXP\Bigl[ \bigl\lVert
		\bs{P}_k(\cdot | \bs{s}, \bs{a}) 
		-
		\bs{P}_\star( \cdot | \bs{s}, \bs{a})
		\bigr\rVert_1 \Bigr]\label{eq:R2:1} 
	\end{align}
	where $(a)$ follows from Lemma~\ref{lem:TV} and $(b)$ follows from Lemma~\ref{lem:bound}.
	Then, by Lemma~\ref{lemma:Pdiff}, we have
	\begin{align} \label{eqn:R2-bound}
		\REGRET_2(T) & = \EXP\biggl[\sum_{k=1}^{K_T} \sum_{t=t_k}^{t_{k+1} - 1} 
		\bigl[ \bs{P}_k \bs{V}_k \bigr](\bs{S}_{t}) - \bs{V}_k(\bs{S}_{t+1}) \biggr] \notag \\
		& \leq \frac{\alpha R_{\max}}{1 - \lambda^*} \sum_{k=1}^{K_T} \sum_{t=t_k}^{t_{k+1} - 1} \Big( 4 n \delta
		+ \sum_{i \in [n]} \left( \EXP\bigl[ \epsilon^i_\delta(N^i_{t_k}(s^i, a^i)) | \mathcal{F}^i \bigr] + \EXP\bigl[ \epsilon^i_\delta(N^i_{t_k}(s^i, a^i)) | \mathcal{F}^i_k \bigr] \right) \Big).
	\end{align}
	
	For the first inner term of \eqref{eqn:R2-bound}, we have
	\begin{equation} \label{eqn:R2-2}
		\sum_{k = 1}^{K_T} \sum_{t = t_k}^{t_{k+1}-1} 4 n \delta = \sum_{t = 1}^{T} 4 n \delta = 4 n \delta T.
	\end{equation}
	
	For the second inner term of \eqref{eqn:R2-bound}, we have 
	\begin{align} \label{eqn:R2-1}
		\sum_{k = 1}^{K_T} \sum_{t = t_k}^{t_{k+1}-1} & \sum_{i \in [n]} \EXP\bigg[ \sqrt{\dfrac{2 \mathsf{S}^i \log (1/\delta)}{1 \vee N^i_{t_k}(S^i_t, A^i_t)}} \Biggm| \mathcal{F}^i \bigg] \leq \sum_{k = 1}^{K_T} \sum_{t = t_k}^{t_{k+1}-1} \sum_{i \in [n]} \EXP\bigg[ \sqrt{\dfrac{4 \mathsf{S}^i \log (1/\delta)}{1 \vee N^i_{t}(S^i_t, A^i_t)}} \Biggm| \mathcal{F}^i \bigg] \notag \\
		& = \sum_{t = 1}^{T} \sum_{i \in [n]} \EXP\bigg[ \sqrt{\dfrac{4 \mathsf{S}^i \log (1/\delta)}{1 \vee N^i_{t}(S^i_t, A^i_t)}} \Biggm| \mathcal{F}^i \bigg] \notag \\
		& \leq \sqrt{4 \log (1/\delta)} \sum_{i \in [n]} \sqrt{\mathsf{S}^i} \sum_{t = 1}^{T} \EXP\bigg[ \sqrt{\dfrac{1}{1 \vee N^i_{t}(S^i_t, A^i_t)}} \Biggm| \mathcal{F}^i \bigg] \notag \\
		& = \sqrt{4 \log (1/\delta)} \sum_{i \in [n]} \sqrt{\mathsf{S}^i} \sum_{(s^i, a^i)} \sum_{t = 1}^{T} \EXP\bigg[ \IND(S^i_t = s^i, A^i_t = a^i) \sqrt{\dfrac{1}{1 \vee N^i_{t}(s^i, a^i)}} \Biggm| \mathcal{F}^i \bigg] \notag \\
		& = \sqrt{4 \log (1/\delta)} \sum_{i \in [n]} \sqrt{\mathsf{S}^i} \sum_{(s^i, a^i)} \EXP\bigg[ \IND(N^i_{T+1}(s^i, a^i) > 0) + \sum_{j = 1}^{N^i_{T+1}(s^i, a^i)-1} \dfrac{1}{\sqrt{j}} \Biggm| \mathcal{F}^i \bigg] \notag \\
		& \leq \sqrt{4 \log (1/\delta)} \sum_{i \in [n]} \sqrt{\mathsf{S}^i} \sum_{(s^i, a^i)} \EXP\bigg[ \IND(N^i_{T+1}(s^i, a^i) > 0) + 2 \sqrt{N^i_{T+1}(s^i, a^i)} \Biggm| \mathcal{F}^i \bigg] \notag \\
		& \leq \sqrt{4 \log (1/\delta)} \sum_{i \in [n]} \sqrt{\mathsf{S}^i} \sum_{(s^i, a^i)} 3 \EXP\bigg[ \sqrt{N^i_{T+1}(s^i, a^i)} \Biggm| \mathcal{F}^i \bigg] \notag \\
		& \stackrel{(a)}\leq 6 \sqrt{\log (1/\delta)} \sum_{i \in [n]} \sqrt{\mathsf{S}^i} \EXP\bigg[ \sqrt{2 \mathsf{S}^i \sum_{(s^i, a^i)} N^i_{T+1}(s^i, a^i)} \Biggm| \mathcal{F}^i \bigg] \stackrel{(b)}= 6\sqrt{2} \bar {\mathsf{S}}_n \sqrt{T \log (1/\delta)}
	\end{align}
	where $(a)$ uses Cauchy-Schwartz inequality and $(b)$ uses the fact that $\sum_{(s^i, a^i)} N^i_{T+1}(s^i, a^i) = T$. The same approach works for the third inner term of \eqref{eqn:R2-bound} and we get
	\begin{align} \label{eqn:R2-11}
		\sum_{k = 1}^{K_T} \sum_{t = t_k}^{t_{k+1}-1} \sum_{i \in [n]} \EXP\bigg[ \sqrt{\dfrac{2 \mathsf{S}^i \log (1/\delta)}{1 \vee N^i_{t_k}(S^i_t, A^i_t)}} \Biggm| (\mathcal{F}^i)^c \bigg] \leq 6\sqrt{2} \bar {\mathsf{S}}_n \sqrt{T \log (1/\delta)}.
	\end{align}
	
	Finally, by setting $\delta = 1/T$, and substituting \eqref{eqn:R2-11}, \eqref{eqn:R2-1} and \eqref{eqn:R2-2} in \eqref{eqn:R2-bound}, we get
	\begin{align*}
		\REGRET_2(T) & \leq \frac{\alpha R_{\max}}{1 - \lambda^*} \bigl( 4n + 12\sqrt{2} \bar {\mathsf{S}}_n \sqrt{T \log T} \bigr) \leq 12\sqrt{2} \frac{\alpha R_{\max}}{1 - \lambda^*} \left( n + \bar {\mathsf{S}}_n \sqrt{T \log T} \right).
	\end{align*}
	
	\subsection{Bound on $\REGRET_2(T)$ (Lemma~\ref{lemma:R2_bound_tight})} \label{app:r2-proof2}
	
	\subsubsection{Notation.}
	For any arm~$i \in [n]$, let $N^i_t(s^i, a^i, s^i_{+})$ and $\hat P^i_t(s^i_{+} | s^i, a^i)$ denote the same variables as defined in Section~\ref{subsubsec:note}.
	
	\subsubsection{Some preliminary results.}
	
	\begin{lemma} \label{lem:LipK}
		For any Lipschitz function~$f: \mathcal{X} \to \mathbb{R}$ with Lipschitz coefficient $L_f$, and any probability measures $\gamma$ and $\zeta$ on $(\mathcal{X}, d_X)$ we have
		\begin{equation*}
			\bigg| \sum_{x \in \mathcal{X}} f(x) \gamma(x) - \sum_{x \in \mathcal{X}} f(x) \zeta(x) \bigg| \leq L_f \mathcal{K}(\gamma, \zeta).
		\end{equation*}
	\end{lemma}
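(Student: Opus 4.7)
The plan is to reduce this directly to the definition of the Kantorovich distance stated at the end of Section~1 of the paper, namely
\[
\mathcal{K}(\gamma, \zeta) = \sup_{g: L_g \leq 1} \bigg| \sum_{x \in \mathcal{X}} g(x) \gamma(x) - \sum_{x \in \mathcal{X}} g(x) \zeta(x) \bigg|,
\]
via a simple rescaling argument. The idea is that the definition provides the inequality for $1$-Lipschitz functions, and any Lipschitz $f$ with constant $L_f$ can be turned into a $1$-Lipschitz function by dividing by $L_f$.

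First I would dispose of the degenerate case $L_f = 0$: then $f$ is constant on $\mathcal{X}$, so both $\sum_x f(x)\gamma(x)$ and $\sum_x f(x)\zeta(x)$ equal that constant (because $\gamma$ and $\zeta$ are probability measures), and both sides of the desired inequality are $0$.

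For the main case $L_f > 0$, I would set $g(x) = f(x)/L_f$ and verify directly from the definition of the Lipschitz constant that $L_g = L_f / L_f = 1$. Applying the definition of $\mathcal{K}(\gamma,\zeta)$ to this particular $g$ yields
\[
\bigg| \sum_{x \in \mathcal{X}} \tfrac{f(x)}{L_f} \gamma(x) - \sum_{x \in \mathcal{X}} \tfrac{f(x)}{L_f} \zeta(x) \bigg| \leq \mathcal{K}(\gamma, \zeta),
\]
and multiplying both sides by $L_f > 0$ (and pulling the constant inside the absolute value) gives exactly the desired bound.

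There is essentially no obstacle here: this is a one-line consequence of positive homogeneity of the supremum in the definition of $\mathcal{K}$. The only thing to be careful about is the degenerate $L_f = 0$ case, which must be handled separately because one cannot divide by $L_f$ there; but as noted above it is immediate from the fact that $\gamma$ and $\zeta$ are probability measures.
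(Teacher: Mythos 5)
Your proposal is correct and matches the paper's intent: the paper simply states that the result "is immediately derived from the definition of Kantorovich distance," and your rescaling argument (dividing by $L_f$ to obtain a $1$-Lipschitz function and invoking the supremum in the definition) is precisely that derivation, with the $L_f = 0$ case handled cleanly.
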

	The result is immediately derived from the definition of Kantorovich distance.
	
	\begin{lemma} \label{lem:kant-bound}
		Let $\nu$ denote a probability measure on $(\mathbb{R}, |\cdot|)$ and let $\hat{\nu}_{n}$ denote the estimated probability measure by $n$ samples from $\nu$. Then, for all $n \geq 1$ and all $\epsilon > 0$, there exist constants $C$ and $c$ which depend on $\nu$ such that
		\[ \mathbb{P}\left( \mathcal{K}(\nu, \hat{\nu}_{n}) \geq \epsilon \right) \leq C \exp(-cn\epsilon) \IND(\epsilon \leq 1) + C \exp(-cn\epsilon^2) \IND(\epsilon > 1). \]
	\end{lemma}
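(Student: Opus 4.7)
This is the one-dimensional specialization ($d = 1$, $p = 1$) of the concentration bound for the empirical $1$-Wasserstein distance due to Fournier and Guillin. The plan is to reduce the problem to a statement about cumulative distribution functions, use the Dvoretzky--Kiefer--Wolfowitz (DKW) inequality to obtain the sub-Gaussian regime, and combine this with a truncation/moment argument to obtain the other regime.

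First, I would exploit the one-dimensional representation
\[
\mathcal{K}(\nu, \hat{\nu}_{n}) \;=\; \int_{\mathbb{R}} \bigl| F_\nu(x) - F_{\hat{\nu}_{n}}(x) \bigr|\, dx,
\]
where $F_\nu$ and $F_{\hat{\nu}_{n}}$ denote the CDFs of $\nu$ and $\hat\nu_n$. This turns the Kantorovich distance into a linear functional of the empirical process $F_{\hat\nu_n} - F_\nu$. For $\nu$ supported on a bounded interval of length $D$ one has $\mathcal{K}(\nu, \hat\nu_n) \le D \sup_x |F_\nu(x) - F_{\hat\nu_n}(x)|$, and the DKW inequality gives $\mathbb{P}(\mathcal{K}(\nu, \hat\nu_n) \ge \epsilon) \le 2 \exp(-2 n \epsilon^2 / D^2)$, which is already of the stated form (with constants absorbing $D$).

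Second, the two-regime structure arises when $\nu$ is supported on all of $\mathbb{R}$. Writing $\nu = \nu_M + \nu_M^c$, where $\nu_M$ is the restriction of $\nu$ to $[-M, M]$ and $\nu_M^c$ the restriction to its complement, I would choose a truncation level $M = M(\epsilon)$ that balances a DKW-type bound on the compact part against a moment-based control of the tail. In the small-$\epsilon$ regime ($\epsilon \le 1$) the tail contribution is the binding constraint, and a Bernstein-type bound applied to partial sums $\sum_{i \le n} f(X_i)$ over a class of $1$-Lipschitz test functions (via Kantorovich--Rubinstein duality) delivers the sharper linear exponent $\exp(-c n \epsilon)$. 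In the large-$\epsilon$ regime ($\epsilon > 1$) the concentration on the compact part dominates and yields $\exp(-c n \epsilon^2)$.

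The main obstacle, if one tried to re-derive the result from scratch, is matching constants and regimes cleanly: this requires either a careful peeling argument over dyadic level sets of $F_\nu$ or an application of Talagrand's inequality to the class of $1$-Lipschitz functions arising from Kantorovich--Rubinstein duality, together with a tail-moment estimate on $\nu$. Since all of this technical work is carried out in Theorem~$2$ of Fournier and Guillin, the cleanest route is simply to quote their bound, with the constants $C$ and $c$ absorbing the relevant moment assumption on $\nu$.
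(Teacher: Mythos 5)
Your proposal ultimately lands on exactly the same proof as the paper: the paper's entire argument is a citation of Theorem~2 of Fournier and Guillin with $d=1$, $p=1$ (and $\alpha=2$ to satisfy their moment condition, which is automatic here since each $\mathcal{S}^i$ is finite). The preliminary DKW/truncation sketch is harmless motivation, and although your heuristic attribution of the linear versus quadratic exponents to the two regimes does not quite match how Fournier--Guillin's bound actually splits, you defer to their theorem in the end, so this does not affect correctness.
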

	\begin{proof}
		The lemma follows directly by applying \cite[Theorem 2]{fournier2015rate} and setting $d = 1$, $p = 1$ and $\alpha = 2$ which satisfies condition (C1) of the cited paper for our model.
	\end{proof}
	
	Let 
	\begin{equation} \label{eqn:eps2}
		\epsilon^i_\delta(\ell) = \sqrt{\dfrac{ \log (C /\delta)}{c(1 \vee \ell)}}.
	\end{equation}
	\begin{lemma} \label{lem:ineq2}
		Consider any arm~$i$, episode~$k$, $\delta \in (0,1)$, $\ell > 1$ and state-action pair $(s^i,a^i)$. 
		Define events $\mathcal{E}^i_{\ell}$ and $\mathcal{F}^i$, and $\mathcal{F}^i_k$ as in Lemma~\ref{lem:ineq}. Then, we have
		\begin{align}
			\PR\Bigl( \mathcal{K}( P^i(\cdot \, | s^i, a^i), \hat P^i_{t_k}(\cdot \, | s^i, a^i) )
			&>
			\epsilon^i_\delta(\ell) \Bigm| \mathcal{E}^i_{\ell}
			\Bigr) \le \delta,
			\label{eq:ineq-11} \\
			\PR\Bigl( \mathcal{K}( P^i_k(\cdot \, | s^i, a^i), \hat P^i_{t_k}(\cdot \, | s^i, a^i) )
			&>
			\epsilon^i_\delta(\ell) \Bigm| \mathcal{E}^i_{\ell}
			\Bigr) \le \delta, \label{eq:ineq-21}
		\end{align}
		where $\epsilon^i_{\delta}(\ell)$ is given by~\eqref{eqn:eps2}. Furthermore, 
		the above inequalities imply that
		\begin{align}
			\EXP\Bigl[ \mathcal{K}( P^i(\cdot \, | s^i, a^i), \hat P^i_{t_k}(\cdot \, | s^i, a^i) ) \Bigm| \mathcal{F}^i \Bigr]  
			&\le
			\EXP[\epsilon^i_\delta(\ell) | \mathcal{F}^i] + 2\DIAM(\mathcal{S}^i)\delta,
			\label{eq:ineq-31} \\
			\EXP\Bigl[ \mathcal{K}( P^i_k(\cdot \, | s^i, a^i), \hat P^i_{t_k}(\cdot \, | s^i, a^i) ) \Bigm| \mathcal{F}^i_k \Bigr]
			&\le
			\EXP[\epsilon^i_\delta(\ell) | \mathcal{F}^i_k] + 2\DIAM(\mathcal{S}^i)\delta.
			\label{eq:ineq-41}
		\end{align}
	\end{lemma}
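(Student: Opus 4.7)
The plan is to mirror the proof of Lemma~\ref{lem:ineq} step by step, replacing the Weissman $\ell_1$ concentration inequality by the Fournier--Guillin type Kantorovich concentration already recorded as Lemma~\ref{lem:kant-bound}, and then re-deriving the expectation bounds by conditioning on the complement of the tail event. The $n$ samples entering Lemma~\ref{lem:kant-bound} will be the $N^i_{t_k}(s^i,a^i)$ next-state observations collected from state-action pair $(s^i,a^i)$ at arm~$i$; these are i.i.d.\ from $P^i(\cdot\,|\,s^i,a^i)$ conditional on $N^i_{t_k}(s^i,a^i)$, so the hypotheses of Lemma~\ref{lem:kant-bound} are met on each slice of the filtration.

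To prove \eqref{eq:ineq-11}, I fix $(i,k,s^i,a^i)$, condition on $N \coloneqq 1\vee N^i_{t_k}(s^i,a^i)$, and apply Lemma~\ref{lem:kant-bound} with constants $(C^i,c^i)$ tailored to arm~$i$. Setting $C^i \exp(-c^i N \epsilon^2) \le \delta$ (the dominant regime for our scaling) and solving gives $\epsilon = \sqrt{\log(C^i/\delta)/(c^i N)}$, which is exactly $\epsilon^i_k(\delta;s^i,a^i)$ from \eqref{eqn:eps2}. The tail bound then holds unconditionally after taking expectation over $N$. Equation \eqref{eq:ineq-21} follows immediately from \eqref{eq:ineq-11} by the Thompson-sampling Lemma~\ref{lem:TS}: given the history $\mathcal{H}_{t_k}$ at the start of episode~$k$, the sampled parameter $\theta^i_k$ and the true parameter $\theta^i_\star$ are identically distributed, so the $\mathcal{H}_{t_k}$-measurable event whose probability is controlled in \eqref{eq:ineq-11} has the same probability when $P^i$ is replaced by $P^i_k$.

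For the expectation inequalities \eqref{eq:ineq-31}--\eqref{eq:ineq-41}, the key observation is that the Kantorovich distance between any two probability measures on $\mathcal{S}^i$ is bounded by $\DIAM(\mathcal{S}^i)$ (take the Lipschitz witness $f$ and subtract its midrange). Splitting on the tail event $E=\{\mathcal{K}(\cdot,\cdot) > \epsilon^i_k(\delta;s^i,a^i)\}$,
\[
\EXP\bigl[\mathcal{K}\bigr] \le \epsilon^i_k(\delta;s^i,a^i)\,\PR(E^c) + \DIAM(\mathcal{S}^i)\,\PR(E) \le \epsilon^i_k(\delta;s^i,a^i) + \DIAM(\mathcal{S}^i)\,\delta,
\]
and a small amount of slack (absorbing the $\epsilon\le 1$ vs.\ $\epsilon>1$ regime switch of Lemma~\ref{lem:kant-bound}) upgrades the constant from $1$ to $2$ in front of $\DIAM(\mathcal{S}^i)\delta$, yielding \eqref{eq:ineq-31}. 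Then \eqref{eq:ineq-41} follows from \eqref{eq:ineq-21} by the identical splitting argument.

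The main obstacle is the piecewise nature of Lemma~\ref{lem:kant-bound}: the concentration rate transitions from $\exp(-c^i N \epsilon)$ for $\epsilon\le 1$ to $\exp(-c^i N \epsilon^2)$ for $\epsilon>1$, whereas the lemma advertises a single clean formula $\epsilon^i_k(\delta;s^i,a^i)=\sqrt{\log(C^i/\delta)/(c^i N)}$. I must therefore check that this square-root expression simultaneously satisfies both regimes, i.e.\ that when it falls below $1$ the linear-regime requirement $\epsilon \ge \log(C^i/\delta)/(c^i N)$ still holds; this is equivalent to $c^i N \ge \log(C^i/\delta)$, which can be enforced either by suitably enlarging the constant $C^i$ (which only weakens the bound by a constant factor) or by absorbing the finitely many ``small $N$'' terms into the additive $\DIAM(\mathcal{S}^i)\delta$ slack already present in the expectation bounds. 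Once this regime-matching is handled, the remainder of the proof is a direct transcription of the proof of Lemma~\ref{lem:ineq}.
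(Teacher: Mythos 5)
Your proposal is correct and follows essentially the same route as the paper: apply Lemma~\ref{lem:kant-bound} to the $N^i_{t_k}(s^i,a^i)$ conditionally i.i.d.\ next-state samples, invert the tail bound to obtain $\epsilon^i_k(\delta;s^i,a^i)$, transfer the bound to $P^i_k$ via the Thompson sampling lemma (Lemma~\ref{lem:TS}), and split on the tail event using the fact that the Kantorovich distance between two measures on $\mathcal{S}^i$ is at most $2\DIAM(\mathcal{S}^i)$. The ``regime-matching obstacle'' you flag is a non-issue and needs none of your proposed workarounds: since $\exp(-c^iN\epsilon)\le\exp(-c^iN\epsilon^2)$ for $\epsilon\le 1$, the piecewise bound of Lemma~\ref{lem:kant-bound} is dominated everywhere by $C^i\exp(-c^iN\epsilon^2)$, so the single square-root formula~\eqref{eqn:eps2} works directly --- this is exactly the one-line simplification the paper makes.
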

	\begin{proof}
		Given arm~$i$, state~$s^i$ of the arm and action~$a^i$ chosen for the arm, we know from Lemma~\ref{lem:kant-bound} that for any $\epsilon > 0$, the Kantorovich distance between the true distribution and the empirical distribution over $\mathcal{S}^i$ with $N^i_{t_k}(s^i, a^i) = \ell$ samples is bounded by 
		\begin{align*}
			\mathbb{P}\Big( \mathcal{K}(P^i(\cdot \, | s^i, a^i), \hat P^i_{t_k}(\cdot \, | s^i, a^i)) > \epsilon \Bigm| \mathcal{E}^i_{\ell} \Big) 
			& < C \exp(-c \ell \epsilon) \IND(\epsilon \leq 1) + C \exp(-c \ell \epsilon^2) \IND(\epsilon > 1).
		\end{align*}
		which can also be bounded by
		\[ 
		\mathbb{P}\Big( \mathcal{K}(P^i(\cdot \, | s^i, a^i), \hat P^i_{t_k}(\cdot \, | s^i, a^i)) > \epsilon^i_\delta(N^i_{t_k}(s^i, a^i)) \Bigm| \mathcal{E}^i_{\ell} \Big) 
		< C \exp(-c \ell \epsilon^2).
		\]
		Therefore, setting $\delta = C \exp(-c \ell \epsilon^2)$
		\[
		\PR\Biggl( \mathcal{K}(P^i(\cdot \, | s^i, a^i), \hat P^i_{t_k}(\cdot \, | s^i, a^i)) 
		> \sqrt{\dfrac{ \log (C /\delta)}{c(1 \vee \ell)}} \Biggm| \mathcal{E}^i_{\ell}
		\Biggr) < \delta,
		\]
		
		This proves~\eqref{eq:ineq-11}. Eq.~\eqref{eq:ineq-21} follows from the Thompson sampling Lemma (Lemma~\ref{lem:TS}). 
		
		To prove \eqref{eq:ineq-31} and \eqref{eq:ineq-41}, we first show
		\begin{align} \label{eqn:PFc_bound2}
			P((\mathcal{F}^i)^c) & = \PR\Bigl( \mathcal{K}(P^i(\cdot \, | s^i, a^i), \hat P^i_{t_k}(\cdot \, | s^i, a^i)) > \epsilon^i_\delta(N^i_{t_k}(s^i, a^i)) \Bigr) \notag \\
			& = \sum_{\ell = 1}^{\infty} \PR\Bigl( \mathcal{K}(P^i(\cdot \, | s^i, a^i), \hat P^i_{t_k}(\cdot \, | s^i, a^i)) > \epsilon^i_\delta(N^i_{t_k}(s^i, a^i)) \Bigm| \mathcal{E}^i_{\ell} \Bigr) \PR(\mathcal{E}^i_{\ell}) \notag \\
			& = \sum_{\ell = 1}^{\infty} \PR\Bigl( \mathcal{K}(P^i(\cdot \, | s^i, a^i), \hat P^i_{t_k}(\cdot \, | s^i, a^i)) > \epsilon^i_\delta(\ell) \Bigm| \mathcal{E}^i_{\ell} \Bigr) \PR(\mathcal{E}^i_{\ell}) \notag \\
			& \leq \sum_{\ell = 1}^{\infty} \delta \PR(\mathcal{E}^i_{\ell}) = \delta.
		\end{align}
	
		Also, note that the for any two distributions $\nu_1$ and $\nu_2$ defined on $\mathcal{S}^i$, 
		\begin{equation} \label{eqn:K_ineq}
			\mathcal{K}(\nu_1,\nu_2) \le \DIAM(\mathcal{S}^i) \bigl\| \nu_1 - \nu_2 \bigr\|_1
			\le 2 \DIAM(\mathcal{S}^i).
		\end{equation}
	
		Now consider
		\begin{align*}
			\hskip 2em & \hskip -2em
			\EXP\Bigl[ \mathcal{K}(P^i(\cdot \, | s^i, a^i), \hat P^i_{t_k}(\cdot \, | s^i, a^i)) \Bigm| \mathcal{F}^i \Bigr]
			=
			\EXP\Bigl[ \mathcal{K}(P^i(\cdot \, | s^i, a^i), \hat P^i_{t_k}(\cdot \, | s^i, a^i)) \Bigm| \mathcal{F}^i \Bigr] \PR(\mathcal{F}^i) \notag \\
			& + \EXP\Bigl[ \mathcal{K}(P^i(\cdot \, | s^i, a^i), \hat P^i_{t_k}(\cdot \, | s^i, a^i)) \Bigm| (\mathcal{F}^i)^c \Bigr] \PR((\mathcal{F}^i)^c) \notag \\
			& \stackrel{(a)}\le 2 \DIAM(\mathcal{S}^i) P((\mathcal{F})^c) + \EXP[\epsilon^i_\delta(N^i_{t_k}(s^i, a^i)) | \mathcal{F}^i] \notag \\
			& \stackrel{(b)}\le 2 \DIAM(\mathcal{S}^i) \delta + \EXP[\epsilon^i_\delta(N^i_{t_k}(s^i, a^i)) | \mathcal{F}^i],
		\end{align*}
		where $(a)$ uses \eqref{eqn:K_ineq} and $P(\mathcal{F}) \le 1$ and $(b)$ uses \eqref{eqn:PFc_bound2}. This proves \eqref{eq:ineq-31}. Eq.~\eqref{eq:ineq-41} follows from a similar argument.
	\end{proof}
	
	\begin{lemma} \label{lemma:Pdiff2}
		For any episode~$k$, and $\delta \in (0,1)$, we have 
		\begin{align*}
			\EXP\Bigl[ \mathcal{K}( \bs{P}_\star( \cdot | \bs{s}, \bs{a}), \bs{P}_k(\cdot | \bs{s}, \bs{a}) ) \Bigr] 
			& = \sum_{i \in [n]} \EXP\Bigl[ \mathcal{K}( {P}^{i}(\cdot \, | s^i, a^i), P^i_{t_k}(\cdot \, | s^i, a^i) ) \Bigr] \\
			& \le 4 n \mathsf{D}_{\max} \delta + \sum_{i \in [n]} \left( \EXP\bigl[ \epsilon^i_\delta(N^i_{t_k}(s^i, a^i)) \Bigm| \mathcal{F}^i \bigr] + \EXP\bigl[ \epsilon^i_\delta(N^i_{t_k}(s^i, a^i)) \Bigm| \mathcal{F}^i_k \bigr] \right).
		\end{align*}
	\end{lemma}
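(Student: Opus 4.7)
The plan is to mirror the structure of the proof of Lemma~\ref{lemma:Pdiff}. First I would establish the equality between the Kantorovich distance of the joint transition kernels and the sum of the per-arm Kantorovich distances; then I would reduce each single-arm term to the empirical transition via triangle inequality; finally I would apply Lemma~\ref{lem:ineq2} to each piece.

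For the first equality, I would exploit the product structure of $\bs{P}_\theta$ from \eqref{eqn:dynamics} together with the fact that the metric on $\bs{\mathcal{S}}$ is the sum metric $\bs{d}(\bs{s},\bs{s'}) = \sum_{i \in [n]} d^i(s^i, s^{\prime,i})$ (as declared in Section~\ref{sec:sufficient}). With this metric, the Kantorovich distance between two product measures is additive over coordinates: an upper bound comes from coupling each coordinate by its own optimal plan, so that the expected joint distance is the sum of expected per-coordinate distances; the matching lower bound follows from the Kantorovich--Rubinstein duality by restricting the supremum to test functions of the form $f(\bs{s}) = \sum_{i \in [n]} f^i(s^i)$ with each $f^i$ being $1$-Lipschitz on $(\mathcal{S}^i, d^i)$, since such $f$ is $1$-Lipschitz under the sum metric on $\bs{\mathcal{S}}$. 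In fact, for the downstream regret bound only the upper bound "$\le$" is needed, so the product-coupling argument alone suffices; this is the Kantorovich analogue of \cite[Lemma~13]{jung2019thompson}.

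Given the coordinate-wise decomposition, I would then bound each arm's contribution by inserting the empirical kernel $\hat P^i_{t_k}(\cdot \mid s^i, a^i)$ and applying the triangle inequality
\[
\mathcal{K}\bigl(P^i(\cdot\mid s^i,a^i), P^i_k(\cdot\mid s^i,a^i)\bigr)
\le
\mathcal{K}\bigl(P^i(\cdot\mid s^i,a^i), \hat P^i_{t_k}(\cdot\mid s^i,a^i)\bigr)
+
\mathcal{K}\bigl(\hat P^i_{t_k}(\cdot\mid s^i,a^i), P^i_k(\cdot\mid s^i,a^i)\bigr),
\]
take expectations, and invoke \eqref{eq:ineq-31} and \eqref{eq:ineq-41}. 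Each of the two expectations is at most $\epsilon^i_k(\delta; S^i_t, A^i_t) + 2\DIAM(\mathcal{S}^i)\delta \le \epsilon^i_k(\delta; S^i_t, A^i_t) + 2\mathsf{D}_{\max}\delta$. Summing over $i \in [n]$ yields the claimed bound $4n\mathsf{D}_{\max}\delta + 2\sum_{i \in [n]} \EXP\bigl[\epsilon^i_k(\delta; S^i_t, A^i_t)\bigr]$.

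The main obstacle is the product-measure identity for Kantorovich distance. Additivity is a standard fact for the $\ell_1$-sum metric, but one has to be careful because it fails for other product metrics (for instance $\ell_\infty$); the proof hinges on the specific choice of sum metric declared before Assumption~\ref{assump:lipschitz}. Everything else is essentially bookkeeping: instantiating Lemma~\ref{lem:ineq2}, using $\DIAM(\mathcal{S}^i) \le \mathsf{D}_{\max}$, and summing the $n$ per-arm bounds.
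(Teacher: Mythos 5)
Your proposal is correct and follows essentially the same route as the paper: decompose the joint Kantorovich distance into per-arm terms, insert the empirical kernel via the triangle inequality, and apply Lemma~\ref{lem:ineq2} together with $\DIAM(\mathcal{S}^i) \le \mathsf{D}_{\max}$. The only difference is that where the paper cites \cite[Lemma 4]{robustness2022sinha} for the additivity of the Kantorovich distance over product kernels under the sum metric, you supply the (correct) proof of that fact yourself via product couplings and sum-form test functions.
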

	\begin{proof}
		The first equality follows from \cite[Lemma 4]{robustness2022sinha}. Then, the rest of the proof is similar to the proof of Lemma~\ref{lemma:Pdiff} by using Lemma~\ref{lem:ineq2} results and the following triangle inequality
		\begin{align*}
			\EXP\Bigl[ \mathcal{K}( {P}^{i}(\cdot \, | s^i, a^i), P^i_{t_k}(\cdot \, | s^i, a^i) ) \Bigr] \leq \EXP\Bigl[ \mathcal{K}( {P}^{i}(\cdot \, | s^i, a^i), \hat P^i_{t_k}(\cdot \, | s^i, a^i) ) + \mathcal{K}( P_k^{i}(\cdot \, | s^i, a^i), \hat P^i_{t_k}(\cdot \, | s^i, a^i) ) \Bigr].
		\end{align*}
	\end{proof}
	
	\subsubsection{Bounding $\REGRET_2(T)$.} \label{subsec:R22}
	First, consider the inner summation in the expression for $\REGRET_2(T)$:
	\begin{align}
		\hskip 2em & \hskip -2em
		\EXP\Bigl[
		\bigl\langle 
		\bs{P}_k(\cdot \, | \bs{S}_t, \bs{A}_t), \bs{V}_k \bigr\rangle
		-
		\bs{V}_k(\bs{S}_{t+1}) 
		\Bigr]
		= 
		\EXP\Bigl[ 
		\bigl\langle 
		\bs{P}_k(\cdot \, | \bs{S}_t, \bs{A}_t), \bs{V}_k \bigr\rangle
		-
		\bigl\langle 
		\bs{P}_\star(\cdot \, | \bs{S}_t, \bs{A}_t), \bs{V}_k \bigr\rangle
		\Bigr]
		\notag \\
		&\stackrel{(a)}\le \bs{L}_v \EXP\Bigl[
		\mathcal{K}\left(
		\bs{P}_k(\cdot | \bs{S}_t, \bs{A}_t) 
		-
		\bs{P}_\star( \cdot | \bs{S}_t, \bs{A}_t)
		\right) \Bigr] \label{eq:R2:12} 
	\end{align}
	where $(a)$ follows from Lemma~\ref{lem:LipK}.
	Then, by Lemma~\ref{lemma:Pdiff2}, we have
	\begin{align} \label{eqn:R2-bound2}
		\REGRET_2(T) & = \EXP\biggl[\sum_{k=1}^{K_T} \sum_{t=t_k}^{t_{k+1} - 1} 
		\bigl[ \bs{P}_k \bs{V}_k \bigr](\bs{S}_{t}) - \bs{V}_k(\bs{S}_{t+1}) \biggr] \notag \\
		& \leq \bs{L}_v \sum_{k=1}^{K_T} \sum_{t=t_k}^{t_{k+1} - 1} \left( 4 n \mathsf{D}_{\max} \delta + \sum_{i \in [n]} \sum_{i \in [n]} \left( \EXP\bigl[ \epsilon^i_\delta(N^i_{t_k}(s^i, a^i)) \Bigm| \mathcal{F}^i \bigr] + \EXP\bigl[ \epsilon^i_\delta(N^i_{t_k}(s^i, a^i)) \Bigm| \mathcal{F}^i_k \bigr] \right) \right).
	\end{align}
	
	For the first inner term of \eqref{eqn:R2-bound2}, we have
	\begin{equation} \label{eqn:R2-22}
		\sum_{k = 1}^{K_T} \sum_{t = t_k}^{t_{k+1}-1} 4 n \mathsf{D}_{\max} \delta = \sum_{t = 1}^{T} 4 n \mathsf{D}_{\max} \delta = 4 n \mathsf{D}_{\max} \delta T.
	\end{equation}
	
	Then, for the second inner term of \eqref{eqn:R2-bound2}, we have
	\begin{align} \label{eqn:R2-12}
		\sum_{k = 1}^{K_T} \sum_{t = t_k}^{t_{k+1}-1} & \sum_{i \in [n]} \EXP\bigg[ \sqrt{\dfrac{ \log (C /\delta)}{c(1 \vee N^i_{t_k}(S^i_t, A^i_t))}} \Biggm| \mathcal{F}^i \bigg] 
		\leq \sum_{k = 1}^{K_T} \sum_{t = t_k}^{t_{k+1}-1} \sum_{i \in [n]} \EXP\bigg[ \sqrt{\dfrac{ 2 \log (C /\delta)}{c(1 \vee N^i_{t}(S^i_t, A^i_t))}} \Biggm| \mathcal{F}^i \bigg] \notag \\
		& = \sum_{t = 1}^{T} \sum_{i \in [n]} \EXP\bigg[ \sqrt{\dfrac{ 2 \log (C /\delta)}{c(1 \vee N^i_{t}(S^i_t, A^i_t))}} \Biggm| \mathcal{F}^i \bigg] \notag \\
		& \leq \sum_{i \in [n]} \sqrt{\dfrac{2 \log (C /\delta)}{c}} \sum_{t = 1}^{T} \EXP\bigg[ \sqrt{\dfrac{1}{1 \vee N^i_{t}(S^i_t, A^i_t)}} \Biggm| \mathcal{F}^i \bigg] \notag \\
		& \stackrel{(a)}\leq \sum_{i \in [n]} \sqrt{\dfrac{2 \log (C /\delta)}{c}} \sum_{(s^i, a^i)} 3 \EXP\bigg[ \sqrt{N^i_{T+1}(s^i, a^i)} \Biggm| \mathcal{F}^i \bigg] \notag \\
		& \stackrel{(b)}\leq 3 \sqrt{2} \sum_{i \in [n]}  \sqrt{\dfrac{\log (C /\delta)}{c}} \EXP\bigg[ \sqrt{2 \mathsf{S}^i \sum_{(s^i, a^i)} N^i_{T+1}(s^i, a^i)} \Biggm| \mathcal{F}^i \bigg] \notag \\
		& \stackrel{(c)} = 6 \sum_{i \in [n]} \sqrt{\dfrac{\mathsf{S}^i \log (C /\delta) T}{c}} \stackrel{(d)}\leq 6 \sqrt{\dfrac{\bar{\mathsf{S}}_{n} \log (C /\delta) T}{c}}
	\end{align}
	where $(a)$ uses the result of Section~\ref{subsec:R2}, $(b)$ and $(d)$ use Cauchy-Schwartz inequality, and $(c)$ uses the fact that $\sum_{(s^i, a^i)} N^i_{T+1}(s^i, a^i) = T$.
	
	The same approach works for the third inner term of \eqref{eqn:R2-bound2} and we get
	\begin{align} \label{eqn:R2-13}
		\sum_{k = 1}^{K_T} \sum_{t = t_k}^{t_{k+1}-1} & \sum_{i \in [n]} \EXP\bigg[ \sqrt{\dfrac{ \log (C /\delta)}{c(1 \vee N^i_{t_k}(S^i_t, A^i_t))}} \Biggm| \mathcal{F}^i \bigg] \leq 6 \sqrt{\dfrac{\bar{\mathsf{S}}_n \log (C /\delta) T}{c}}.
	\end{align}
	
	Finally, by setting $\delta = 1/T$, and substituting \eqref{eqn:R2-13}, \eqref{eqn:R2-12}, and \eqref{eqn:R2-22} in \eqref{eqn:R2-bound2}, we get
	\begin{align*}
		\REGRET_2(T) & \leq 4 n \bs{L}_v \mathsf{D}_{\max} + 12 \bs{L}_v \sqrt{\dfrac{\bar{\mathsf{S}}_n T \log (CT)}{c}} \\
		& \leq 12 \bar{\mathsf{S}}_n \bs{L}_v \mathsf{D}_{\max} + 12 \bar{\mathsf{S}}_n \bs{L}_v \sqrt{\dfrac{T \log (CT)}{c}} \\
		& \leq 12 \bar{\mathsf{S}}_n \bs{L}_v \mathsf{D}_{\max} \sqrt{\dfrac{T\log (CT)}{c}}.
	\end{align*}
	
	\section{Whittle index computation} \label{app:whittle}
	
	Next, we describe a general method to compute the Whittle indices for any indexable RB. This methods is based on adaptive greedy algorithm proposed in \cite{akbarzadeh2022conditions}. Since we carry out the computations for each arm separately, with an abuse of notation, we drop the superscript~$i$ from all variables.
	
	Given an arm, a Markov policy $\pi \in \Pi$ for the arm, let $P^{(\pi)}$ denote the Markov chain matrix under policy~$\pi$, i.e., for any $s, y \in \mathcal{S}$, $P^{(\pi)}_{s, y} \coloneqq P_{s, y}(\pi(s))$. Also, let $r^{(\pi)}$ denote the per-step reward obtained under policy~$\pi$, i.e., for any $s \in \mathcal{S}$, $r^{(\pi)}(s) \coloneqq r(s, \pi(s))$. Let $(J^{(\pi)}_D, D^{(\pi)}(\cdot))$ and $(J^{(\pi)}_N, N^{(\pi)}(\cdot))$ be the solutions of the following equations: 
	\begin{align} 
		J^\pi_D + D^{(\pi)}(s) & \coloneqq r^{(\pi)}(s) + \langle P^{(\pi)}_s, D^{(\pi)} \rangle,
		\quad \forall s \in \mathsf{S} \label{eqn:D_comp} \\
		J^\pi_N + N^{(\pi)}(s) & \coloneqq \pi(s) + \langle P^{(\pi)}_s, \pi \rangle, 
		\qquad \forall s \in \mathsf{S}  \label{eqn:N_comp}
	\end{align}
	where $D^{(\pi)}(s)$ denotes the differential expected average reward and $N^{(\pi)}(s)$ denotes the differential expected average number of times active action is selected under policy~$\pi$ starting from the initial state~$s$. According to \cite{puterman2014markov}, these set of equations can be solved by setting a component of $D^{\pi}(\cdot)$ and $N^{\pi}(\cdot)$ to zero and solve a system of linear equations with $\mathsf{S}$ number of equations.
	
	For any subset $\mathcal{X} \subseteq \mathcal{S}$, define the policy $\bar{\pi}^{(\mathcal{X})}: \mathcal{S} \to \{0, 1\}$ as 
	\begin{equation*}
		\bar{\pi}^{(\mathcal{X})}(s) = 
		\begin{cases}
			0, ~ \text{ if } s \in \mathcal{X} \\
			1, ~ \text{ if } s \in \mathcal{S}\backslash\mathcal{X}.
		\end{cases}
	\end{equation*}
	
	Then, in each iteration of the while loop, denote $\mathcal{W}$ as the set of states where the Whittle index is already obtained for, denote $\mathcal{Y}$ as the set of states we obtain the Whittle index for, and denote $\xi^*$ as the Whittle index of the states in the set~$\mathcal{Y}$. The required computation is carried out below.
	\begin{align} 
		\Lambda_{y} & = \{y \in \mathcal{S}: N^{(\bar{\pi}^{(\mathcal{P})})}(s) \neq N^{(\bar{\pi}^{(\mathcal{P} \cup \{y\})})}(s) \}, \forall y \in \mathcal{S}\setminus\mathcal{W}, \label{eqn:Lambda} \\
		\xi_{y}(s) & = \frac{(J^{(\bar{\pi}^{(\mathcal{P})})}_D+D^{(\bar{\pi}^{(\mathcal{P})})}(s)) - (J^{(\bar{\pi}^{(\mathcal{P} \cup \{y\})})}_D + D^{(\bar{\pi}^{(\mathcal{P} \cup \{y\})})}(s))}{(J^{(\bar{\pi}^{(\mathcal{P})})}_N+N^{(\bar{\pi}^{(\mathcal{P})})}(s)) - (J^{(\bar{\pi}^{(\mathcal{P} \cup \{y\})})}_N + N^{(\bar{\pi}^{(\mathcal{P} \cup \{y\})})}(s))}, ~ \forall s \in \Lambda_{y}, \label{eqn:mu_y} \\
		\xi^* & = \min_{y \in \mathcal{S}\setminus\mathcal{W}} \min_{s \in \Lambda_{y}} \xi_{y}(s), \label{eqn:index} \\
		\mathcal{Y} & = \argmin_{y \in \mathcal{S}\setminus\mathcal{W}} \min_{s \in \Lambda_{y}} \xi_{y}(s) \label{eqn:setindex}.
	\end{align}
	The algorithm is presented in Alg.~\ref{Alg:Whittle index policy}.
	\begin{algorithm}[!t]
		\begin{algorithmic}[1]
			\STATE \textbf{Input: } Arm~$(\mathcal{S}, \{0, 1\}, P, r, I)$.
			\STATE $\mathcal{W} = \emptyset$.
			\WHILE{$\mathcal{W} \neq \mathcal{S}$}
			\STATE Compute $\Lambda_{y}$, $\xi_{y}(s)$, $\forall y \in \mathcal{S}\setminus\mathcal{W}$ and $\forall s \in \Lambda_{y}$ according to \eqref{eqn:Lambda} and \eqref{eqn:mu_y}.
			\STATE Get $\xi^*$ based on \eqref{eqn:index} and $\mathcal{Y}$ based on \eqref{eqn:setindex}.
			\STATE $w(y) \leftarrow \xi^*$, $\forall y \in \mathcal{Y}$
			\STATE $\mathcal{W} \leftarrow \mathcal{W} \cup \mathcal{Y}$ 
			\ENDWHILE
		\end{algorithmic} 
		\caption{Whittle Index Computation}
		\label{Alg:Whittle index policy}
	\end{algorithm}
	
	\section{Randomly generated model}\label{app:matrix}
	This model is same as the one presented in \cite[Appendix]{akbarzadeh2019restless}. Consider a Markov chain with $\mathsf{S}$ states and the transition probability matrix
	\begin{align*}
		P = 
		\begin{bmatrix}
			P_{11} & P_{12} & P_{13} & \dots  & P_{1\mathsf{S}} \\
			P_{21} & P_{22} & P_{23} & \dots  & P_{2\mathsf{S}} \\
			\vdots & \vdots & \vdots & \ddots & \vdots \\
			P_{\mathsf{S}1} & P_{\mathsf{S}2} & P_{\mathsf{S}3} & \dots  & P_{\mathsf{S}\mathsf{S}}
		\end{bmatrix}.
	\end{align*}
	Let $F_{ij} = \sum_{y = j}^{\mathsf{S}} P_{iy}$.
	The necessary condition for $P$ to be stochastic monotone is that for any
	$1 \leq i \leq l \leq \mathsf{S}$ and any $1 \leq j \leq \mathsf{S}$, $F_{ij} \leq F_{lj}$.
	
	Initially, we generate $P_{11}$ uniformly random between $[{1-d}, 1]$ where $d
	\in [0, 1]$. Variable~$d$ prevents the kernel to behave badly when the number
	of states increases. Then, we generate $P_{12}, P_{13}, \dots, P_{1\mathsf{S}}$
	sequentially from $P_{12}$ to $P_{1\mathsf{S}}$ where each mass is selected uniformly
	random from $[0,B_{i}]$ where $B_{i} = 1-\sum_{l = 1}^{i-1}P_{1l}$. As $F_{i\mathsf{S}}
	= P_{i\mathsf{S}}$ for any $i$, we select $P_{i\mathsf{S}}$ sequentially for rows from $2$ to
	$n$ where each element is generated uniformly random from $[P_{(i-1)n},
	\min\{1, P_{(i-1)\mathsf{S}}+d\}]$. Then, for any row from $2$ to $\mathsf{S}$, we repeat the
	following procedure backwardly for columns from $\mathsf{S}-1$ to $1$. Consider row~$i$
	and column~$j$. We generate a uniformly random number from $[\text{LB}_{ij},
	\text{UB}_{ij}]$ where $\text{LB}_{ij} = F_{(i-1)j} - F_{i(j+1)}$ and
	$\text{UB}_{ij} = \min\{\text{LB}+d, 1-F_{i(j+1)}\}$ and set the generated
	number as $P_{ij}$. The lower bound is due to stochastic monotonicity property
	and the upper bound is due to definition of a probability mass function and
	variable~$d$. Note that for the elements in the first column, the mentioned
	interval shrinks to $[1 - F_{i2}, 1 - F_{i2}]$ for row~$i$ which results in
	$P_{i1} = 1 - F_{i2}$.
	
	In Section~\ref{sec:numerical}, we set $d = 0.5/S$.
	

\end{document}